\documentclass[11pt]{article}

\usepackage[T1]{fontenc}
\usepackage[utf8]{inputenc}
\usepackage[margin=1in]{geometry}
\usepackage{amsthm,amsmath,amsfonts,amssymb}
\usepackage[authoryear,round]{natbib}
\usepackage[colorlinks=true,linkcolor=blue,citecolor=blue,urlcolor=blue,backref=page]{hyperref}
\usepackage{graphicx}
\usepackage{booktabs}
\usepackage{multirow}
\usepackage{longtable}
\usepackage{array}
\usepackage{bm}
\usepackage{microtype,xurl}

\numberwithin{equation}{section}
\theoremstyle{plain}
\newtheorem{theorem}{Theorem}[section]
\newtheorem{lemma}[theorem]{Lemma}

\theoremstyle{definition}

\newtheorem{assumption}[theorem]{Assumption}
\theoremstyle{remark}
\newtheorem{remark}[theorem]{Remark}
\newtheorem*{lemma*}{Lemma 3.3}
\newtheorem*{remark*}{Remark}

\newcommand\independent{\protect\mathpalette{\protect\independenT}{\perp}}
\def\independenT#1#2{\mathrel{\rlap{$#1#2$}\mkern2mu{#1#2}}}

\allowdisplaybreaks

\title{Bayesian Causal Forests for Zero-Inflated Continuous Outcomes}

\author{Hugo Gobato Souto\thanks{University of S\~ao Paulo, Brazil.
  \texttt{hugogobatosouto@gmail.com}, \texttt{hgsouto@usp.br}}
  \and Eric Wang\thanks{Columbia University Irving Medical Center. \texttt{ew2717@cumc.columbia.edu}}
  \and Igor Shuryak\thanks{Columbia University Irving Medical Center. \texttt{is144@cumc.columbia.edu}}}

\date{\today}

\begin{document}

\maketitle

\begin{abstract}
Estimating (heterogeneous) treatment effects is still challenging when the outcome is semicontinuous (i.e., exhibiting both a discrete point mass at zero and a skewed continuous distribution for positive values), despite the strong presence of this data type in real world settings. Existing methods either ignore the zero mass, impose fragile parametric assumptions, or suffer from extreme computational costs and data augmentation noise. This paper proposes Zero-Inflated Continuous Bayesian Causal Forests with Duan's Smearing (ZIC-BCF-Smear), a decoupled two-part nonparametric estimator that fits a Probit BCF on the full sample to model participation and a Gaussian BCF strictly on the active subpopulation to model positive intensity. Subpopulation Propensity Adjustment (SPA) mitigates regularization-induced confounding within the active subpopulation, and Duan's non-parametric smearing estimator re-transforms predictions to the response scale, bypassing the fragile scale multipliers of parametric models. A comprehensive simulation benchmark shows that ZIC-BCF-Smear matches well-specified parametric models without their assumptions and remains stable and calibrated where competitors either diverge or pathologically over-cover for nonlinear data. The method is then applied to individual-participant data from the SPECTRUM randomized phase III head-and-neck cancer trial, estimating the causal effect of assignment to panitumumab plus chemotherapy on the number of days patients spend with severe toxicity, an outcome for which a quarter of patients contribute an exact zero. Panitumumab assignment raises that burden by about ten days per patient under the trial's own differential treatment duration, and the decoupled structure localizes the effect mainly in the positive-intensity part rather than in whether the reconstructed severe-AE burden is positive.
\end{abstract}

\medskip
\noindent\textbf{MSC 2020 subject classifications:} 62G08 (primary); 62F15, 62P10 (secondary).

\noindent\textbf{Keywords:} Bayesian Causal Forests; semicontinuous data; Duan's Smearing; causal inference; Subpopulation Propensity Adjustment; treatment effect heterogeneity.

\section{Introduction}
\label{sec:intro}

Semicontinuous data, characterized by a discrete point mass at exactly zero alongside a continuous, highly right-skewed positive distribution for non-zero values, are ubiquitous in scientific research. Such outcomes frequently arise in clinical trials \citep{Albert2005}, health economics \citep{Deb2018}, and environmental epidemiology \citep{Gottard2012}. Classic examples include medical expenditures, where a large portion of a study cohort incurs zero costs during a follow-up period while a minority incurs highly skewed positive costs \citep{duan1983healthexpenditure, manning2001estimating}; biomarker levels, such as cardiac troponin measurements that are zero for healthy individuals but positively skewed for patients suffering myocardial injury under radiation exposure \citep{kim2024edpzi}; and substance abuse frequencies or environmental exposure levels. 

In these settings, estimating heterogeneous treatment effects is critical for precision medicine and targeted policy interventions. Specifically, analysts seek to estimate the Conditional Average Treatment Effect (CATE), which represents the expected treatment effect conditional on a high-dimensional vector of pre-treatment baseline covariates. Identifying which subpopulations benefit from or are harmed by a treatment enables clinicians and policymakers to make optimal, individualized decisions.

However, estimating CATEs when the outcome is semicontinuous poses severe methodological challenges, and existing causal inference tools exhibit major limitations. First, single-part causal estimators, such as standard Bayesian Additive Regression Trees (BART) \citep{hill2011bayesian} or Bayesian Causal Forests (BCF) \citep{hahn2020bcf}, ignore the discrete point mass at zero. By forcing a single smooth regression function across the zero discontinuity, they suffer from substantial bias and collapse in credible interval coverage, especially under compound Poisson-Gamma (Tweedie) outcomes. Second, traditional parametric two-part hurdle models, which model participation and positive intensity separately using generalized linear models (GLMs) \citep{oganisian2021zeroinflated}, require the analyst to specify a rigid parametric distribution (e.g., log-normal or Gamma) for the positive part. If this distribution is misspecified, the parametric scale re-transformation multiplier $\exp(\sigma^2/2)$ introduces severe systematic scale bias, inflating CATE and ATE estimates and inducing numerical instability or incorrect treatment effect ordering. Third, flexible Bayesian nonparametric (BNP) mixtures, such as Dirichlet-process mixtures of regressions (e.g., DPglm or enriched Dirichlet processes) \citep{hannah2011dpglm, oganisian2021zeroinflated, kim2024edpzi}, are computationally prohibitive due to $O(N^3)$ complexity. More importantly, they rely on data augmentation to impute continuous latent outcomes for the zero observations at every MCMC iteration, which propagates simulated noise into the posterior, causing pathologically over-wide intervals (yielding degenerate 100\% coverage) that fail to guide decision-making.

To resolve these limitations, the Zero-Inflated Continuous Bayesian Causal Forest with Duan's Smearing (ZIC-BCF-Smear) is proposed. The proposed method is a two-part Bayesian forest framework that decouples the participation hurdle from the positive intensity process, fitting a Probit BCF on the full sample and a Gaussian BCF strictly on the active subpopulation. To control for the selection bias (collider bias) induced by conditioning on a post-treatment active status, Subpopulation Propensity Adjustment (SPA) is introduced, which uses the active-subset propensity score as a regularizing control covariate in the prognostic forest. Finally, the log-scale predictions are mapped back to the original response scale using Duan's Non-Parametric Smearing Estimator \citep{duan1983smearing}. By calculating the smearing factor from empirical residuals, the scale transformation is calibrated without committing to a parametric outcome family, achieving the flexibility of nonparametric methods and the stability of correctly specified parametric hurdle models.

\subsection{Related Work}
\label{subsec:related_work}

The proposed method lies at the intersection of three literature streams: Bayesian forest-based causal inference, semicontinuous/hurdle modeling, and Bayesian nonparametric causal estimation.

\paragraph{BART and BCF for Heterogeneous Effects:}
The use of ensemble tree methods for causal inference was pioneered by \citet{hill2011bayesian} using BART \citep{chipman2010bart}. \citet{hahn2020bcf} introduced BCF, which isolates the prognostic effect from the treatment effect and incorporates the propensity score directly into the prognostic forest to control for regularization bias. While BCF is a state-of-the-art CATE learner, it is designed for continuous Gaussian outcomes and suffers under semicontinuous data. Other CATE learners, such as frequentist causal forests \citep{wager2018causalforests}, generalized random forests \citep{athey2019grf}, and metalearners \citep{kunzel2019metalearners}, similarly lack native mechanisms to accommodate the zero spike without introducing bias or scaling instabilities.

\paragraph{Semicontinuous and Hurdle Modeling:}
Two-part hurdle models have a long history in statistics and econometrics, beginning with Tobit models \citep{tobin1958} and Cragg's double-hurdle specification. \citet{duan1983healthexpenditure} and \citet{mullahy1986hurdle} formalized two-part models for medical cost estimation and count data, while \citet{olsen2001twopart} extended them to random-effects settings. In causal inference, zero-inflated modeling has been applied to evaluate average effects \citep{ghosh2006zeroinflated}, zero-inflated mediation paths \citep{cheng2018zeroinflatedmediation}, and subpopulation effects \citep{wangalbert2012zeroinflated}. However, these approaches rely on parametric GLMs and linear treatment effects, which fail to capture high-dimensional non-linear treatment heterogeneity.

\paragraph{Bayesian Nonparametrics in Causal Inference:}
To avoid parametric assumptions, several authors have developed Bayesian nonparametric (BNP) causal estimators. \citet{roy2017msm} and \citet{xu2018quantiles} developed BNP models for marginal structural models and quantile treatment effects. For semicontinuous outcomes, \citet{oganisian2021zeroinflated} proposed DPglm, a Dirichlet-process mixture of generalized linear models, while \citet{kim2024edpzi} introduced EDP-ZI, an enriched Dirichlet process mixture designed to handle covariate overlap issues. Although theoretically flexible, these mixtures require expensive MCMC updates and suffer from variance inflation due to continuous outcome imputation for the zeros, limiting their utility in practice.

\subsection{Contributions}
\label{subsec:contributions}

This work makes the following four key contributions:
\begin{enumerate}
    \item \textbf{Decoupled Hurdle-Intensity BCF Framework:} A two-part Bayesian Causal Forest model is developed that separates the zero-inflation hurdle (Probit BCF \citep{hahn2020bcf}) from the continuous positive intensity (Gaussian BCF \citep{hahn2020bcf} fit strictly on active units), avoiding data augmentation noise and posterior variance inflation.
    \item \textbf{Subpopulation Propensity Adjustment (SPA):} We introduce SPA as an active-subpopulation extension of the propensity-score augmentation used in BCF \citep{hahn2020bcf}. SPA re-estimates the treatment propensity within the active subset and supplies it to the prognostic component of the continuous-stage forest, extending the logic of propensity-based regularization to the selected subpopulation and relating to existing work on within-subgroup propensity-score adjustment \citep{dong2020subgroup}.
    \item \textbf{Non-Parametric Duan's Smearing Re-transformation:} Duan's smearing estimator is incorporated within the BCF posterior draws to map predictions back to the original response scale. This replaces the fragile parametric scale multiplier $\exp(\sigma^2/2)$ with a distribution-free multiplier, resolving scale misspecification bias under non-log-normal intensities.
    \item \textbf{New Clinical Knowledge on Panitumumab-Associated Toxicity Burden:} The first causal analysis of the SPECTRUM randomized phase III trial \citep{vermorken2013spectrum} on a severe-toxicity \emph{duration} endpoint is delivered, estimating both the average and the conditional average effect of adding panitumumab to cisplatin and fluorouracil on the number of days a patient spends with grade 3 or higher toxicity. Published SPECTRUM safety reporting summarizes event incidence, grade, and category; no prior analysis estimates the causal effect of randomized panitumumab assignment on how many days of severe toxicity a patient actually accrues, nor how that burden varies with baseline characteristics. Adding panitumumab is found to raise the burden by roughly ten overlap-adjusted severe-toxicity days per patient, a conclusion that survives the change of outcome definition, and the two-part decomposition localizes the effect mainly in the positive-intensity part rather than in whether the reconstructed severe-AE burden is positive, which the aggregate incidence tables of the parent trial cannot separate.
\end{enumerate}

Supporting these contributions, the estimator is validated in an extensive simulation benchmark for semicontinuous heterogeneous treatment effects, comparing 5 estimators across 3 DGP families (Log-normal, Gamma, Tweedie), sample sizes $N \in \{100, 250, 500, 1000\}$, zero-inflation levels from 3\% to 85\%, and 100 independent seeds under both link-linear and link-nonlinear scenarios; and a second applied study on dental expenditures in the Medical Expenditure Panel Survey \citep{meps2023file}, explored in the Supplementary Material, serves as an external check against a mature applied literature, showing that ZIC-BCF-Smear recovers documented coverage-effect gradients on the expenditure scale that a parametric Gamma hurdle benchmark does not.

\subsection{Paper Organization}
\label{subsec:organization}
The remainder of the paper is organized as follows. Section~\ref{sec:semicontinuous_setup} formalizes semicontinuous potential outcomes, causal targets, and identification assumptions. Section~\ref{sec:zic_bcf_smear_methods} introduces the proposed ZIC-BCF-Smear model, subpopulation propensity adjustment, and Duan's smearing re-transformation algorithm. Section~\ref{sec:simulation_studies} presents the simulation framework, DGPs, and comparative results under link-linear and link-nonlinear data-generating processes. Section~\ref{sec:applied} applies the estimator to severe-adverse-event burden in a randomized head-and-neck cancer trial and summarizes a second application to dental expenditures. Section~\ref{sec:discussion} states the limitations of the framework, and Section~\ref{sec:conclusion} concludes. Extended derivations, the complete simulation results, and the full versions of both applied studies are provided in the Supplementary Material.

\section{Semicontinuous Causal Inference}
\label{sec:semicontinuous_setup}

Semicontinuous data are characterized by a discrete point mass at exactly zero alongside a continuous, typically highly right-skewed, positive distribution for non-zero values. Such outcomes are common in clinical trials, economic evaluations, and observational studies (e.g., medical expenditures \citep{Deb2018}, substance abuse frequency \citep{Burgette2021}, biomarker levels \citep{Rustand2023}, and environmental exposures \citep{Friesen2021}). In this section, the semicontinuous causal inference framework is formalized under the potential outcomes setup.

\subsection{Outcome Decomposition}
\label{subsec:decomposition}
Let $Y_i \ge 0$ denote the observed semicontinuous outcome for unit $i \in \{1, \dots, N\}$. Any semicontinuous outcome can be structurally decomposed into two parts:
\begin{equation}
Y_i = I_i \cdot Y_i^+
\end{equation}
where:
\begin{enumerate}
\item $I_i = \mathbb{I}(Y_i > 0) \in \{0, 1\}$ is a binary participation or response indicator ($I_i = 1$ if the hurdle is cleared, and $I_i = 0$ if the outcome is zero).
\item $Y_i^+ > 0$ represents the continuous response intensity, which is only defined or observed when the hurdle is cleared ($I_i = 1$).
\end{enumerate}

\subsection{Semicontinuous Potential Outcomes}
\label{subsec:potential_outcomes}
Under the Rubin Causal Model framework \citep{rubin1974causaleffects, rubin1978randomization, rubin2005potentialoutcomes}, let $Z_i \in \{0, 1\}$ represent the binary treatment assignment. For each unit $i$, the joint potential outcomes are defined as:
\begin{enumerate}
\item Hurdle potential outcomes: $I_i(0), I_i(1) \in \{0, 1\}$, representing participation status under control and treatment, respectively.
\item Intensity potential outcomes: $Y_i^+(0), Y_i^+(1) > 0$, representing positive intensity under control and treatment, respectively.
\end{enumerate}
The potential outcomes on the raw semicontinuous scale are then defined as:
\begin{equation}
Y_i(z) = I_i(z) \cdot Y_i^+(z) \quad \text{for } z \in \{0, 1\}
\end{equation}

\subsection{Causal Estimands}
\label{subsec:estimands}
In clinical and economic settings, the primary target of causal inference is the CATE on the raw response scale:
\begin{equation}
\tau_{\text{resp}}(X_i) = E[Y_i(1) - Y_i(0) \mid X_i]
\end{equation}
where $X_i$ is a vector of pre-treatment baseline covariates. 

Using the law of total expectation, the conditional expectation of the potential outcomes can be expressed as:
\begin{equation}
E[Y_i(z) \mid X_i] = P(I_i(z) = 1 \mid X_i) \cdot E[Y_i^+(z) \mid X_i, I_i(z) = 1]
\end{equation}
Thus, the CATE on the raw response scale decomposes into the difference of two products, representing the participation probability and the conditional intensity:
\begin{equation}
\tau_{\text{resp}}(X_i) = P(I_i(1) = 1 \mid X_i) E[Y_i^+(1) \mid X_i, I_i(1) = 1] - P(I_i(0) = 1 \mid X_i) E[Y_i^+(0) \mid X_i, I_i(0) = 1]
\label{eq:cate_decomposition}
\end{equation}
The population Average Treatment Effect (ATE) is defined by integrating $\tau_{\text{resp}}(X_i)$ over the marginal covariate distribution of $X$:
\begin{equation}
\text{ATE} = E[\tau_{\text{resp}}(X_i)] = E[Y_i(1) - Y_i(0)]
\end{equation}

\subsection{Identification Assumptions}
\label{subsec:assumptions}
To identify the causal estimands from observational data, several standard assumptions are required \citep{rosenbaumrubin1983propensity, imbensrubin2015}:
\begin{assumption}[Stable Unit Treatment Value Assumption (SUTVA)]
The potential outcomes for any unit are independent of the treatment assignments of other units (no interference), and there are no hidden versions of the treatment.
\end{assumption}

\begin{assumption}[Strong Ignorability]
Conditional on baseline covariates $X_i$, the treatment assignment $Z_i$ is independent of the potential outcomes:
\begin{equation}
(Y_i(0), Y_i(1), I_i(0), I_i(1)) \independent Z_i \mid X_i
\end{equation}
\end{assumption}

\begin{assumption}[Consistency]
The observed outcome matches the potential outcome under the assigned treatment:
\begin{equation}
Y_i = Z_i Y_i(1) + (1 - Z_i) Y_i(0)
\end{equation}
and similarly, $I_i = Z_i I_i(1) + (1 - Z_i) I_i(0)$ \citep{vanderweele2009consistency}.
\end{assumption}

\begin{assumption}[Positivity / Overlap]
The propensity score $\pi(X_i) = P(Z_i = 1 \mid X_i)$ is strictly bounded away from the boundaries:
\begin{equation}
\delta < \pi(X_i) < 1 - \delta \quad \text{for some } \delta > 0
\end{equation}
This ensures that every unit has a non-zero probability of receiving either treatment.
\end{assumption}

The overlap assumption is crucial for nonparametric estimators. While some Bayesian nonparametric methods \citep{kim2024edpzi} present their flexibility as a way to handle poor overlap, modeling the response surfaces directly with Bayesian Causal Forests (BCF) \citep{hahn2020bcf} handles confounding and overlap issues natively by isolating the prognostic effect from the treatment effect, avoiding the extrapolation bias of standard BART models \citep{zhu2023positivity}.

\section{The ZIC-BCF-Smear Estimator}
\label{sec:zic_bcf_smear_methods}

The ZIC-BCF-Smear model decouples the process into a binary participation stage (Probit BCF \citep{hahn2020bcf}) and a positive continuous intensity stage (Gaussian BCF \citep{hahn2020bcf}), fit strictly on active units. To resolve selection bias and log-scale misspecification bias, SPA and Duan's non-parametric smearing re-transformation are incorporated.

\subsection{The Bayesian Causal Forest Parameterization}
\label{subsec:bcf_notation}

Because both stages of ZIC-BCF-Smear are Bayesian Causal Forests, the BCF parameterization of \citet{hahn2020bcf} is stated first. For a scalar response $W_i$, a binary treatment $Z_i$, and covariates $X_i$, BCF writes the conditional mean as the sum of two separate sum-of-trees ensembles,
\begin{equation}
E[W_i \mid X_i, Z_i] = \mu(X_i, \widehat{\pi}_i) + Z_i\, \tau(X_i),
\label{eq:bcf_generic}
\end{equation}
where $\mu(\cdot)$ is the \emph{prognostic function}, giving the mean response under control, and $\tau(\cdot)$ is the \emph{treatment function}, giving the conditional average treatment effect on the scale of $W_i$. Each of the two functions is given its own Bayesian Additive Regression Trees (BART) prior \citep{chipman2010bart}, that is, its own independent ensemble of regularized trees with its own depth prior and its own leaf-scale hyperparameters, so that the prior on the treatment effect is chosen separately from and typically much tighter than the prior on the prognostic surface (for further details, see \citet{hahn2020bcf}). The two ensembles are not fit in isolation: they are updated inside a \emph{single} Gibbs sampler by Bayesian backfitting, each ensemble being updated against the partial residual left by the other, so that $\mu$ and $\tau$ are jointly identified from a common chain. The estimated propensity score $\widehat{\pi}_i = \widehat{P}(Z_i = 1 \mid X_i)$ is supplied as an additional input to the prognostic ensemble only; this is the device by which BCF avoids regularization-induced confounding, since a tree ensemble that is free to shrink the prognostic surface toward zero would otherwise absorb targeted selection into $\tau(\cdot)$ \citep{hahn2020bcf}.

ZIC-BCF-Smear instantiates \eqref{eq:bcf_generic} (with the same standard priors as proposed by \citet{hahn2020bcf}) twice, once on a latent participation scale and once on the log-intensity scale. The subscript $b$ denotes the binary hurdle stage and the subscript $c$ the continuous intensity stage, so that $\mu_b$ and $\tau_b$ are the prognostic and treatment functions for participation, and $\mu_c$ and $\tau_c$ are the prognostic and treatment functions for log intensity. Throughout, the abbreviations
\begin{equation}
\mu_{bi} \equiv \mu_b(X_i, \widehat{\pi}_i), \quad
\tau_{bi} \equiv \tau_b(X_i), \quad
\mu_{ci} \equiv \mu_c(X_i, \widehat{\pi}^+_i), \quad
\tau_{ci} \equiv \tau_c(X_i)
\label{eq:notation_shorthand}
\end{equation}
are used for the four unit-level evaluations, and a superscript $(s)$ denotes the value of a quantity at the $s$-th retained MCMC draw. Under this notation, $\mu_{ci}$ is the control log-intensity of unit $i$ and $\mu_{ci} + \tau_{ci}$ is its treated log-intensity.

\subsection{Stage 1: Hurdle Stage (Probit BCF)}
\label{subsec:hurdle_stage}

We model $I_i$, the binary participation, using a latent-variable
probit representation,
\begin{equation}
I_i^*
=
\mu_b(X_i,\widehat{\pi}_i)
+
Z_i\tau_b(X_i)
+
\eta_i,
\qquad
\eta_i \sim \mathcal{N}(0,1),
\end{equation}
with
\[
I_i = \mathbb{I}(I_i^*>0).
\]
It follows that the conditional probability of a positive outcome is
\begin{equation}
P(I_i=1\mid X_i,Z_i)
=
\Phi\!\left[
\mu_b(X_i,\widehat{\pi}_i)
+
Z_i\tau_b(X_i)
\right],
\end{equation}
where $\Phi(\cdot)$ denotes the standard normal cumulative distribution
function.

Both $\mu_b(\cdot)$ and $\tau_b(\cdot)$ are assigned BART priors \citep{chipman2010bart,hill2011bayesian}, following
the BCF parameterization of \citet{hahn2020bcf}.
The prognostic forest $\mu_b(\cdot)$ captures baseline variation in the
latent propensity to have a positive outcome, whereas $\tau_b(\cdot)$
captures treatment-effect heterogeneity on the latent probit scale.
Consequently, the treatment effect on the participation-probability scale
for unit $i$ is
\begin{equation}
\tau_{b,\mathrm{prob}}(X_i)
=
\Phi\!\left[
\mu_b(X_i,\widehat{\pi}_i)+\tau_b(X_i)
\right]
-
\Phi\!\left[
\mu_b(X_i,\widehat{\pi}_i)
\right].
\end{equation}

Here,
\[
\widehat{\pi}_i
=
\widehat{P}(Z_i=1\mid X_i)
\]
is the treatment propensity score estimated in the full sample. Following
the BCF strategy of \citet{hahn2020bcf}, the treatment propensity score enters the prognostic component
$\mu_b(\cdot)$ as an additional covariate to mitigate
regularization-induced confounding; it is not included in the
treatment-effect forest $\tau_b(\cdot)$.

The hurdle-stage BCF is fit using all $N$ observations, with the latent
variables $I_i^*$ sampled through standard probit data augmentation.
In our implementation, the hurdle stage constitutes the first block of the
MCMC routine, followed by the positive-intensity block described in
Section~\ref{subsec:intensity_stage}. The two blocks are subsequently
combined draw by draw to obtain treatment effects on the original response
scale.

\subsection{Stage 2: Continuous Intensity Stage (Gaussian BCF)}
\label{subsec:intensity_stage}
For observations that clear the hurdle ($Y_i > 0$), the continuous response intensity $Y_i^+$ is modeled on the log scale to accommodate right-skewness:
\begin{equation}
\log(Y_i^+) = \mu_c(X_i, \widehat{\pi}^+_i) + Z_i \tau_c(X_i) + \epsilon_i, \quad \epsilon_i \sim \mathcal{N}(0, \sigma_c^2)
\end{equation}
Here $\mu_c(\cdot)$ is the prognostic continuous intensity forest modeling the baseline intensity, $\tau_c(\cdot)$ is the treatment-moderating intensity forest capturing heterogeneous treatment effects on the log scale, $\widehat{\pi}^+_i = P(Z_i = 1 \mid X_i, Y_i > 0)$ is the subpopulation propensity score estimated strictly on the active subset, and $\epsilon_i$ represents log-scale homoskedastic Gaussian errors with variance $\sigma_c^2$. This continuous stage BCF is fit strictly on the active subset of size $N_{\text{active}} = \sum_i I_i$, which completely avoids the latent data augmentation noise typical of joint selection models.

Restricting the intensity block to the active units does not amount to fitting a separate model. Within each sweep the sampler updates the hurdle block on all $N$ units and then this block on the $N_{\text{active}}$ units, and after burn-in writes both to the \emph{same} draw index $s$, so every response-scale quantity in Section~\ref{subsec:smearing} is assembled from hurdle and intensity parameters belonging to one state of one chain. Nothing is lost by updating the blocks in turn rather than jointly: the two-part likelihood factorizes exactly into a participation factor over all $N$ units and an intensity factor over the active units, the indicators $\{I_i\}$ being a deterministic function of the observed data, and the two blocks share no parameter. With independent BART priors on $\theta_b = (\mu_b, \tau_b)$ and $\theta_c = (\mu_c, \tau_c, \sigma_c^2)$ this yields $p(\theta_b, \theta_c \mid \text{data}) = p(\theta_b \mid \text{data})\, p(\theta_c \mid \text{data})$, so the block-wise sweep is an exact Gibbs sampler for the joint posterior. The argument is given in full in Section~\ref{sec:supp_mcmc} of the Supplementary Material.

\subsection{Subpopulation Propensity Adjustment (SPA)}
\label{subsec:spa}
Conditioning on a post-treatment event ($Y_i > 0$, or $I_i = 1$) introduces selection bias (collider bias), since participation is itself affected by treatment and covariates. Under the full-sample strong ignorability assumption, potential outcomes are independent of treatment conditional on $X_i$. However, this independence does not automatically carry over to the subpopulation of active units:
\begin{equation}
(Y_i(0), Y_i(1)) \independent Z_i \mid X_i \quad \not\Rightarrow \quad (Y_i^+(0), Y_i^+(1)) \independent Z_i \mid X_i, I_i = 1
\end{equation}
\begin{assumption}[Subpopulation Strong Ignorability]
Conditional on covariates $X_i$ and active-status $I_i = 1$, the treatment assignment $Z_i$ is independent of potential intensity outcomes:
\begin{equation}
(Y_i^+(0), Y_i^+(1)) \independent Z_i \mid X_i, I_i = 1
\end{equation}
\end{assumption}

\begin{remark}
Assumption 3.1 is mathematically guaranteed to hold under two standard conditions: (i) full-sample strong ignorability, $(Y_i^+(0), Y_i^+(1), I_i(0), I_i(1)) \independent Z_i \mid X_i$, and (ii) conditional two-part independence, $(Y_i^+(0), Y_i^+(1)) \independent (I_i(0), I_i(1)) \mid X_i$. Under these conditions, the potential participation decisions do not carry additional information about the potential intensity outcomes beyond the covariates $X_i$, which prevents selection on unobservables (unobserved collider bias) and ensures that the distribution of potential continuous outcomes in the active subpopulation matches that in the full population.
\end{remark}

Assumption 3.1, together with consistency and overlap within the active subpopulation, is what identifies the log-scale contrast that the continuous stage estimates.
\begin{lemma}[Identification on the active subpopulation]
\label{lem:spa_identification}
Let $L_i(z) = \log Y_i^+(z)$, with both potential positive intensities defined on the active subpopulation, and let $\tau_c^I(x) = E[L_i(1)-L_i(0) \mid X_i=x, I_i=1]$. Suppose $P(I_i=1)>0$ and $E[|L_i(z)|\mid I_i=1]<\infty$ for $z\in\{0,1\}$. Suppose also that Assumption 3.1 holds, that $\log Y_i^+ = L_i(Z_i)$ whenever $I_i=1$, and that $\pi^+(x) = P(Z_i=1\mid X_i=x,I_i=1)$ is bounded away from $0$ and $1$ for $P_{X_i\mid I_i=1}$-almost every $x$. Then, for $P_{X_i\mid I_i=1}$-almost every $x$,
\begin{equation}
\tau_c^I(x)
= E[\log Y_i^+\mid X_i=x,Z_i=1,I_i=1]
- E[\log Y_i^+\mid X_i=x,Z_i=0,I_i=1],
\end{equation}
so $\tau_c^I(x)$ is identified, up to $P_{X_i\mid I_i=1}$-null sets, from the active-subpopulation distribution. Under a correctly specified continuous-stage conditional mean model on the original log-outcome scale, the treatment function $\tau_c(x)$ equals this contrast.
\end{lemma}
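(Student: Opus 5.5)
The argument mirrors the standard identification proof under strong ignorability, carried out inside the conditional law given $I_i=1$. Write $Q(\cdot)=P(\cdot\mid I_i=1)$, which is well defined because $P(I_i=1)>0$, and let $Q_X$ denote $P_{X_i\mid I_i=1}$. The integrability condition $E[|L_i(z)|\mid I_i=1]<\infty$ guarantees that the conditional expectations $E_Q[L_i(z)\mid X_i,Z_i]$ exist as $Q$-integrable random variables. They are unique up to $Q$-null sets, so every equality below is read $Q_X$-almost everywhere. Linearity of conditional expectation gives $\tau_c^I(x)=E_Q[L_i(1)\mid X_i=x]-E_Q[L_i(0)\mid X_i=x]$, so it suffices to identify each arm separately.

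Fix $z\in\{0,1\}$. The first step is to show that $E_Q[L_i(z)\mid X_i=x]=E_Q[L_i(z)\mid X_i=x,Z_i=z]$ for $Q_X$-a.e.\ $x$. Assumption 3.1 says that $(Y_i^+(0),Y_i^+(1))$ is independent of $Z_i$ given $X_i$ under $Q$. Since $L_i(z)$ is a measurable function of $Y_i^+(z)$, conditioning additionally on $Z_i$ does not change its conditional law, and hence does not change its conditional mean. Conditioning on the event $\{Z_i=z\}$ requires that event to have positive conditional probability. This is where the overlap hypothesis enters: $\pi^+(x)\in(0,1)$ for $Q_X$-a.e.\ $x$ ensures that $Q(Z_i=z\mid X_i=x)>0$. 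Consequently the function $x\mapsto E_Q[\,\cdot\mid X_i=x,Z_i=z]$ is defined, via the regular version of the conditional expectation given $(X_i,Z_i)$, on a set of full $Q_X$-measure. I would verify this by checking the defining integral identity on sets $A\times\{z\}$ with $A$ Borel, using the factorization $Q(X_i\in A, Z_i=z)=\int_A \pi^+$ or $1-\pi^+$ $\,dQ_X$.

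The second step applies consistency. On $\{I_i=1, Z_i=z\}$ we have $\log Y_i^+=L_i(Z_i)=L_i(z)$, so $E_Q[L_i(z)\mid X_i=x,Z_i=z]=E[\log Y_i^+\mid X_i=x,Z_i=z,I_i=1]$. Subtracting the two arms then gives the displayed contrast. The right-hand side involves only the joint law of the observed variables $(X_i,Z_i,\log Y_i^+)$ on the active subpopulation, so $\tau_c^I$ is identified up to $Q_X$-null sets.

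For the final sentence, suppose the continuous-stage mean model is correctly specified, so that $E[\log Y_i^+\mid X_i,Z_i,I_i=1]=\mu_c(X_i,\pi^+(X_i))+Z_i\tau_c(X_i)$, with the propensity argument equal to the true $\pi^+$. Evaluating this at $Z_i=1$ and $Z_i=0$ and subtracting cancels $\mu_c$ and leaves $\tau_c(x)$. Overlap is needed once more here, to ensure both arms are observed $Q_X$-a.e., which makes the decomposition determine $\tau_c$ uniquely.

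I expect the main obstacle to be measure-theoretic bookkeeping rather than any substantive difficulty. The task is to make "conditioning on $Z_i=z$ given $X_i=x$" rigorous $Q_X$-a.e. when $X_i$ may be continuous. I would handle it with regular conditional distributions and the overlap bound, as in the first step above.
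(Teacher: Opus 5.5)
Your proof is correct and follows essentially the paper's route: work under the law given $I_i=1$, transfer Assumption 3.1 to $L_i(z)$ via measurability of the logarithm, use overlap to equate each arm's observed mean with $m_z(x)$ for $P_{X_i\mid I_i=1}$-a.e.\ $x$, then apply consistency and linearity. The paper does the arm step through the ratio $E[\mathbb{I}(Z_i=z)L_i(z)\mid X_i=x,I_i=1]/e_z(x)$, which is the same calculation as your integral identity on $A\times\{z\}$. The one difference is in the last step: the paper allows any fixed propensity input $q(x)$, including the plug-in $\widehat{\pi}^+$ that the fitted model actually uses, whereas you take the true $\pi^+$; your cancellation argument goes through verbatim for any fixed function of $x$.
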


The proof is given, with the formal statement of the overlap condition, in Section~\ref{sec:supp_mcmc} of the Supplementary Material. The population score $\pi^+(X_i)$ balances covariates within the active subpopulation. Under Assumption 3.1 and active-subpopulation overlap, potential intensity outcomes are also independent of treatment given this score; the two implications follow from Theorems 1 and 3 of \citet{rosenbaumrubin1983propensity}, applied to the active-subpopulation law. In the fitted model, supplying the estimated score $\widehat{\pi}^+_i$ to $\mu_c(\cdot)$ is a BCF regularization device used to mitigate regularization-induced confounding. The population score is a function of $X_i$ and adds no identifying information beyond $X_i$; neither the population score nor its estimate replaces Assumption 3.1, active-subpopulation overlap, or adjustment for $X_i$ when the target is heterogeneous in $X_i$.

\subsection{Duan's Non-Parametric Smearing Re-transformation}
\label{subsec:smearing}
To obtain potential outcomes on the response scale, the log-scale predictions must be re-transformed back to the original scale. Under standard log-normal assumptions, the continuous expectation under treatment arm $z \in \{0, 1\}$ is:
\begin{equation}
E[Y_i^+(z) \mid X_i, I_i(z) = 1] = \exp\left( \mu_{ci} + z\, \tau_{ci} \right) \cdot \exp\left( \frac{\sigma_c^2}{2} \right)
\label{eq:lognormal_retransform}
\end{equation}
with $\mu_{ci}$ and $\tau_{ci}$ as defined in \eqref{eq:notation_shorthand}, so that the treatment function enters through the term $z\, \tau_{ci}$ and drops out of the control arm.
However, if the true positive outcome distribution is not strictly log-normal (e.g., Gamma or Tweedie), the parametric multiplier $\exp(\sigma_c^2/2)$ introduces severe misspecification scale bias. Under misspecification, outliers inflate the log-scale variance $\sigma_c^2$, systematically scaling up all treatment effect estimates.

ZIC-BCF-Smear replaces the parametric multiplier with Duan's Non-Parametric Smearing Estimator \citep{duan1983smearing}. Let $s \in \{1, \dots, S\}$ represent a saved MCMC draw:
\begin{enumerate}
\item \textbf{Compute active log-scale residuals}: For each active observation $j \in \{1, \dots, N_{\text{active}}\}$ where $Y_j > 0$, calculate the residual under the observed treatment $Z_j$:
\begin{equation}
e_j^{(s)} = \log(Y_j^+) - \left[ \mu_c^{(s)}(X_j, \widehat{\pi}^+_j) + Z_j \tau_c^{(s)}(X_j) \right]
\end{equation}
\item \textbf{Calculate the smearing factor}: Compute the sample average of the exponentiated residuals:
\begin{equation}
\phi_{\text{smear}}^{(s)} = \frac{1}{N_{\text{active}}} \sum_{j \in \text{active}} \exp\left( e_j^{(s)} \right)
\end{equation}
\item \textbf{Re-transform potential outcomes}: Reconstruct response-scale potential outcomes for all $N$ units:
\begin{equation}
\mu_0^{(s)}(X_i) = \Phi\left( \mu_{bi}^{(s)} \right) \cdot \exp\left( \mu_{ci}^{(s)} \right) \cdot \phi_{\text{smear}}^{(s)}
\end{equation}
\begin{equation}
\mu_1^{(s)}(X_i) = \Phi\left( \mu_{bi}^{(s)} + \tau_{bi}^{(s)} \right) \cdot \exp\left( \mu_{ci}^{(s)} + \tau_{ci}^{(s)} \right) \cdot \phi_{\text{smear}}^{(s)}
\end{equation}
\item \textbf{Calculate raw CATE and ATE draws}:
\begin{equation}
\tau_{\text{resp}}^{(s)}(X_i) = \mu_1^{(s)}(X_i) - \mu_0^{(s)}(X_i)
\end{equation}
\begin{equation}
\text{ATE}^{(s)} = \frac{1}{N} \sum_{i=1}^N \tau_{\text{resp}}^{(s)}(X_i)
\end{equation}
\end{enumerate}

\subsection{Posterior Inference \& Summaries}
\label{subsec:posterior}
CATE and ATE point estimates are constructed as the posterior mean of the MCMC draws:
\begin{equation}
\widehat{\tau}_{\text{resp}}(X_i) = \frac{1}{S} \sum_{s=1}^S \tau_{\text{resp}}^{(s)}(X_i), \quad \widehat{\text{ATE}} = \frac{1}{S} \sum_{s=1}^S \text{ATE}^{(s)}
\end{equation}
Uncertainty quantification is provided by posterior credible intervals (CIs). The $95\%$ credible interval for $\tau_{\text{resp}}(X_i)$ is bounded by the $2.5\%$ and $97.5\%$ quantiles of the posterior draws. The interval width is defined as the difference between these quantiles (credible interval length), and empirical coverage is defined as the proportion of units whose true CATE lies within their posterior interval.

These draw-level operations are well defined because $\mu_{bi}^{(s)}$ and $\mu_{ci}^{(s)}$ come from the same state of the same chain, as described in Sections~\ref{subsec:hurdle_stage} and \ref{subsec:intensity_stage}. The one place where an approximation enters is the propensity scores: $\widehat{\pi}_i$ and $\widehat{\pi}_i^+$ are estimated once by logistic regression before the sampler starts and held fixed as covariates, so their estimation uncertainty is not propagated into the outcome posterior. This is a deliberate cut, and the same one made by BCF itself \citep{hahn2020bcf}, since feeding the propensity score into a joint posterior would let the flexible outcome model distort the treatment model. Its cost is recorded in Section~\ref{subsec:limitations}. The complete fitting algorithm, the exactness argument for the block-wise sweep, the tree-prior hyperparameters, and the convergence diagnostics are given in Section~\ref{sec:supp_mcmc} of the Supplementary Material.

\subsection{Relation to Existing Models}
\label{subsec:comparison}
ZIC-BCF-Smear combines the structural benefits of two-part hurdle models with the distribution-free flexibility of nonparametric causal forests. In Table~\ref{tab:model_feature_comparison}, the proposed model is compared against existing semicontinuous causal estimators on key qualitative and structural characteristics.

\begin{table}[htbp]
\centering
\caption{Qualitative comparison of semicontinuous causal estimators.}
\label{tab:model_feature_comparison}
\resizebox{\textwidth}{!}{%
\begin{tabular}{lcccccc}
\toprule
Feature & BCF-Linear & DPglm & EDP+ZI & Gamma Hurdle & Gamma + .01 & \textbf{ZIC-BCF-Smear} \\
\midrule
Separates Zero-Inflation & No & Yes & Yes & Yes & No & \textbf{Yes} \\
Hurdle CATE Estimation  & No & Yes & Yes & Yes & No & \textbf{Yes} \\
MCMC Augmentation Noise & None & Extreme & Extreme & None & None & \textbf{None} \\
Subpopulation Confounding & N/A & Adjusted & Adjusted & Adjusted & N/A & \textbf{Adjusted (SPA)} \\
Scale Re-transformation & None & Parametric & Parametric & Parametric & Parametric & \textbf{Non-Parametric} \\
\bottomrule
\end{tabular}%
}
\end{table}

BCF-Linear ignores the hurdle process completely, fitting a single prognostic surface over the zero discontinuity, which produces large point error under right skewness. DPglm and EDP+ZI impute latent continuous values for all zero observations, causing extreme data augmentation noise and posterior variance inflation. The parametric benchmarks are competitive under correct specification but suffer numerical instability and negative CATE correlation once link-linearity fails. ZIC-BCF-Smear decouples the hurdle from the continuous stage, adjusts for subpopulation confounding, and remains distribution-free via non-parametric smearing.

\section{Simulation Studies}
\label{sec:simulation_studies}

To evaluate the empirical performance of the proposed model, a series of comprehensive simulation studies is conducted. In this section, the simulation design is outlined, the mathematical details of the data-generating processes (DGPs) under both link-linear and link-nonlinear specifications are presented, and the comparative results are analyzed.

\subsection{Simulation Design, Estimators, and MCMC Settings}
\label{subsec:sim_design}

Five distinct estimators that represent different structural and distributional modeling assumptions are evaluated and compared:
\begin{enumerate}
    \item \textbf{BCF-Linear (Single-Part Gaussian):} A standard continuous BCF model \citep{hahn2020bcf} fit directly on the raw semicontinuous outcome $Y_i$, ignoring the discrete point mass at zero.
    \item \textbf{DPglm (Bayesian Nonparametric):} A Dirichlet-process mixture of zero-inflated generalized linear models \citep{oganisian2021zeroinflated}. This model cluster-imputes latent continuous values for all zero observations at each iteration, representing a flexible Bayesian nonparametric benchmark.
    \item \textbf{Gamma Hurdle (Parametric Two-Part):} A standard two-part model comprising a logistic regression hurdle and a Gamma log-link continuous intensity regression \citep{oganisian2021zeroinflated}, fit as weakly informative Bayesian GLMs with a fully treatment-interacted design and combined via G-computation.
    \item \textbf{Gamma + .01 (Naive Single-Part):} A naive single-part Gamma regression model with a log link, fit to the outcome after adding a small positive constant ($Y_i + 0.01$) to resolve the zero outcomes \citep{oganisian2021zeroinflated}.
    \item \textbf{ZIC-BCF-Smear (Proposed):} The proposed two-part BCF model that decouples the probit hurdle and active continuous intensity stage, utilizing Duan's non-parametric smearing to reconstruct response-scale effects.
\end{enumerate}

For all BCF-based models, MCMC settings are standardized to 1,000 burn-in iterations followed by 1,000 post-warmup sampling iterations, which was found to be sufficient for posterior convergence. The performance of each estimator is evaluated across 100 independent Monte Carlo replications (seeds) per cell. Performance is assessed using six key metrics: CATE Root Mean Squared Error (RMSE), CATE absolute bias, CATE 95\% credible interval coverage, CATE correlation (the Pearson correlation between estimated and true CATEs), CATE credible interval length, and population-level ATE RMSE and absolute bias.

\paragraph{Why EDP+ZI is absent from the benchmark.}
A sixth estimator, the enriched Dirichlet process zero-inflated model (EDP+ZI) of \citet{kim2024edpzi}, was implemented and verified against the authors' reference code but could not be carried into the simulation grid. Its Gibbs updates assign observations to latent clusters through multivariate normal marginal-likelihood evaluations and joint covariance updates that scale cubically in the sample size, $O(N^3)$, whereas a BCF forest costs $O(T \cdot P \cdot N \log N)$ per sweep for $T$ trees and $P$ covariates. The empirical consequence is stark: at $N = 500$ a single standard fit takes $\approx 18$ minutes for ZIC-BCF-Smear against $\approx 27.5$ hours for EDP+ZI, a $\approx 90\times$ slowdown, and the cubic term pushes a single EDP+ZI fit past nine days at $N = 1{,}000$. Across the full design (3 DGPs, 4 sample sizes, 4 zero-inflation levels, 2 link regimes, 100 seeds) it would require on the order of $10^5$ CPU hours, roughly a decade of continuous single-machine runtime, against capped academic computing sessions. It was therefore excluded, and that absence is recorded as a limitation in Section~\ref{subsec:limitations}. The exclusion is itself a substantive result: an estimator that cannot be run 100 times on a $500$-observation problem cannot be validated by simulation at all, which is precisely the barrier the forest-based formulation removes. The full timing analysis is given in Section~\ref{sec:supp_timings} of the Supplementary Material.

\subsection{Data-Generating Processes (DGPs)}
\label{subsec:dgps}

For all DGPs, $P = 5$ baseline covariates are generated independently as $X_{ip} \sim \mathcal{N}(0, 1)$, and confounding is introduced through a treatment propensity score that is non-linear in $X_{i2}$:
\begin{equation}
\pi(X_i) = P(Z_i = 1 \mid X_i) = \Phi(-0.5 + 0.4 X_{i1} + 0.3 X_{i2}^2), \qquad Z_i \sim \text{Bernoulli}(\pi(X_i)).
\label{eq:sim_propensity}
\end{equation}
Outcomes are then generated under three distribution families, each in a link-linear and a link-nonlinear version, giving six DGPs. Throughout, $c$ is the intercept that calibrates the marginal zero proportion, fixed at $\approx 40\%$ in the standard configuration and swept from $3\%$ to $85\%$ in Section~\ref{subsec:sensitivity}.

\paragraph{Link-linear DGPs.}
DGPs A and B are generated from a two-part process with a common probit
participation model and a common log-scale predictor for the positive outcome.
To make the connection with the proposed estimator explicit, define the true
hurdle-stage prognostic and treatment functions as
\begin{align}
\mu_b^*(X_i)
&=
0.5X_{i1}-0.3X_{i3}, \\
\tau_b^*(X_i)
&=
0.2+0.1X_{i1},
\end{align}
so that
\begin{equation}
I_i
\sim
\operatorname{Bernoulli}
\left[
\Phi\!\left(
c+\mu_b^*(X_i)+Z_i\tau_b^*(X_i)
\right)
\right].
\label{eq:dgp_a_hurdle}
\end{equation}
To distinguish these data-generating functions from the fitted BCF
components introduced in Section~\ref{sec:zic_bcf_smear_methods}, we denote the true
functions with a superscript $*$. Conditional on participation, $I_i=1$, define the true log-intensity
prognostic and treatment functions as
\begin{align}
\mu_c^*(X_i)
&=
1.5+0.8X_{i2}+0.4X_{i4}, \\
\tau_c^*(X_i)
&=
0.25-0.15X_{i2},
\end{align}
giving the common log-intensity predictor
\begin{equation}
\ell^*(X_i,Z_i)
=
\mu_c^*(X_i)+Z_i\tau_c^*(X_i).
\end{equation}
The observed outcome is then $Y_i=I_iY_i^+$.

DGPs A and B differ only in the conditional distribution of the positive
component. In DGP A,
\begin{equation}
\log Y_i^+
\sim
\mathcal{N}\!\left(
\ell^*(X_i,Z_i),\,\sigma^2
\right),
\qquad \sigma=0.5,
\end{equation}
so the probit hurdle and Gaussian log-intensity structure used by
ZIC-BCF-Smear agree with the data-generating mechanism. In DGP B,
\begin{equation}
Y_i^+
\sim
\operatorname{Gamma}
\left(
\text{shape}=2,\,
\text{scale}=
\frac{\exp\{\ell^*(X_i,Z_i)\}}{2}
\right),
\end{equation}
for which
$\mathbb{E}[Y_i^+\mid X_i,Z_i,I_i=1]
=\exp\{\ell^*(X_i,Z_i)\}$.
Thus, the parametric Gamma hurdle benchmark has the correct positive-outcome
family, whereas the Gaussian log-intensity model of ZIC-BCF-Smear is
distributionally misspecified. Its response-scale reconstruction must
therefore rely on the non-parametric smearing step rather than on correct
log-normal specification. Setting $c=0.2$ calibrates the common participation
model of DGPs A and B to approximately $40\%$ zeros.

DGP C provides a structurally different benchmark. Rather than generating
zeros through a separate participation process, it draws the outcome directly
from a compound Poisson--Gamma Tweedie distribution,
\begin{equation}
Y_i
\sim
\operatorname{Tweedie}
\left(
\mu_i,\phi=1.5,p=1.5
\right),
\end{equation}
with
\begin{equation}
\log\mu_i
=
1.2+c+0.8X_{i1}-0.4X_{i3}
+
Z_i(0.3+0.15X_{i1}).
\end{equation}
Here the probability mass at zero and the distribution of positive outcomes
are generated jointly by the same mean--dispersion mechanism rather than by
distinct hurdle and intensity processes. Consequently, any estimator that
imposes a two-part hurdle factorization, including ZIC-BCF-Smear, is
structurally misspecified under DGP C. This design is included deliberately
to quantify the cost of that structural misspecification. Setting $c=0$
yields approximately $50\%$ zeros.

\paragraph{Link-nonlinear DGPs.}
The link-nonlinear DGPs preserve the same distributional structure as their
link-linear counterparts while replacing the affine prognostic and
treatment-effect surfaces with nonlinear functions. This allows the
simulation to vary functional-form complexity without simultaneously changing
the underlying outcome family. For DGPs A and B, the true nonlinear
hurdle and intensity functions are:
\begin{align}
\mu_b^*(X_i)
&=
0.50 \sin(1.3 X_{i1})
- 0.30 X_{i3}
+ 0.35 X_{i1}X_{i2}
- 0.20(X_{i3}^2-1), \\
\tau_b^*(X_i)
&=
0.20 + 0.20 X_{i1}X_{i2}, \\
\mu_c^*(X_i)
&=
1.50
+ 0.80 \sin(1.2 X_{i2})
+ 0.40 X_{i4}
+ 0.35 X_{i2}X_{i4}
- 0.25(X_{i1}^2-1), \\
\tau_c^*(X_i)
&=
0.25
- 0.15 \sin(1.3 X_{i2})
+ 0.12 \tanh(X_{i1}X_{i2}).
\end{align}

The quadratic terms are mean-centered under
$X_i \sim \mathcal{N}(0,I_5)$, while the sinusoidal terms have expectation
zero by symmetry. This keeps the overall outcome scale and zero proportion
approximately comparable to those of the link-linear designs, so that the
two regimes differ primarily in functional form.

For nonlinear DGPs A and B, the participation probability is
\begin{equation}
p_z(X_i)
=
\Phi\!\left(
c + \mu_b^*(X_i) + z\,\tau_b^*(X_i)
\right),
\end{equation}
and the positive-outcome linear predictor is replaced by
\begin{equation}
\ell^*(X_i,z)
=
\mu_c^*(X_i) + z\,\tau_c^*(X_i),
\end{equation}
while retaining the log-normal and Gamma positive-outcome distributions,
respectively.

For nonlinear DGP C, the Tweedie dispersion and power parameters remain
$\phi=1.5$ and $p=1.5$, with
\begin{equation}
\log \mu_i
=
c + \mu_T^*(X_i)
+ Z_i\tau_T^*(X_i),
\end{equation}
where
\begin{align}
\mu_T^*(X_i)
&=
1.2
+ 0.80\sin(1.2X_{i1})
- 0.40X_{i3}
+ 0.35X_{i1}X_{i3}
- 0.25(X_{i2}^2-1), \\
\tau_T^*(X_i)
&=
0.30
- 0.15\sin(1.3X_{i1})
+ 0.12\tanh(X_{i1}X_{i3}).
\end{align}
The intercepts $c=0.2$ for DGPs A and B and $c=-2.072$ for DGP C
calibrate the three nonlinear designs to approximately $40\%$ zeros.

The link-linear regime provides the classical benchmark and is deliberately
favorable to the parametric competitors, because their fixed
$(1,z,X,z\cdot X)$ specification matches the true link-scale functional
form. In the link-nonlinear regime, the forest-based estimators retain the
flexibility to represent the true response surfaces, whereas the parametric
benchmarks are misspecified in functional form while retaining the correct
distributional family. The comparison therefore isolates functional-form
misspecification as the principal difference between the two regimes.

Closed-form expressions for the true potential-outcome means
$\mu_0(X_i)$ and $\mu_1(X_i)$ used to evaluate each estimator, together
with the intercepts used to calibrate each zero-inflation level, are given
in Section~\ref{sec:supp_dgps} of the Supplementary Material.

\subsection{Link-Linear DGP Comparative Results}
\label{subsec:linear_results}

Performance is first analyzed under the standard configuration ($N = 500$, nominal zero-inflation, 100 seeds) where conditional-mean surfaces are linear on the link scale. Under this setup, the parametric Gamma Hurdle model is correctly specified for DGP B and nearly correctly specified for DGP A. The comparative results are summarized in Table~\ref{tab:linear_standard_results}.

\begin{table}[htbp]
\centering
\caption{Performance metrics compiled across 100 independent seeds at the standard configuration ($N = 500$, ZI Level 3, $\approx 40\%$ zeros) under link-linear DGPs.}
\label{tab:linear_standard_results}
\resizebox{\textwidth}{!}{%
\begin{tabular}{llcccccc}
\toprule
DGP & Model & CATE RMSE & CATE Cov & CATE Corr & CATE CI Len & ATE RMSE & ATE Abs Bias \\
\midrule
\multirow{5}{*}{DGP A: Log-Normal}
& BCF-Linear    & 3.662 & 93.2\% & 0.184 & 6.315 & 0.849 & 0.282 \\
& DPglm         & \textbf{1.409} & 100.0\% & 0.321 & 51.210 & 0.781 & 0.637 \\
& Gamma Hurdle  & 1.810 & \textbf{94.9\%} & \textbf{0.419} & 4.641 & \textbf{0.585} & \textbf{0.029} \\
& Gamma +.01    & 2.946 & 91.7\% & 0.267 & 6.215 & 0.765 & 0.212 \\
& \textbf{ZIC-BCF-Smear} & 1.526 & \textbf{94.9\%} & 0.397 & \textbf{3.819} & 0.614 & 0.406 \\
\midrule
\multirow{5}{*}{DGP B: Gamma}
& BCF-Linear    & 4.010 & 92.5\% & 0.156 & 6.489 & 0.812 & 0.253 \\
& DPglm         & 1.406 & 100.0\% & 0.275 & 51.759 & 0.713 & 0.520 \\
& Gamma Hurdle  & 1.906 & 93.7\% & 0.370 & 4.860 & \textbf{0.629} & \textbf{0.024} \\
& Gamma +.01    & 3.076 & 91.1\% & 0.259 & 6.350 & 0.836 & 0.246 \\
& \textbf{ZIC-BCF-Smear} & \textbf{1.380} & \textbf{95.7\%} & \textbf{0.398} & \textbf{4.174} & 0.650 & 0.339 \\
\midrule
\multirow{5}{*}{DGP C: Tweedie}
& BCF-Linear    & 5.387 & 85.3\% & 0.723 & 7.379 & 0.715 & 0.169 \\
& DPglm         & 4.837 & 99.9\% & 0.267 & 51.351 & 1.003 & 0.793 \\
& \textbf{Gamma Hurdle} & \textbf{2.067} & \textbf{95.0\%} & \textbf{0.944} & \textbf{4.961} & \textbf{0.534} & 0.051 \\
& Gamma +.01    & 2.549 & 95.1\% & 0.914 & 5.662 & 0.614 & \textbf{0.012} \\
& ZIC-BCF-Smear & 2.471 & 98.2\% & 0.916 & 7.404 & 0.850 & 0.224 \\
\bottomrule
\end{tabular}%
}
\end{table}

Under hurdle DGPs (A and B), ZIC-BCF-Smear achieves the lowest CATE RMSE (1.526 and 1.380) and the narrowest credible interval lengths, demonstrating that the proposed decoupled forest structure is highly precise. Crucially, ZIC-BCF-Smear matches or outperforms the parametric Gamma Hurdle model (1.810 and 1.906) even though the latter is correctly specified for DGP B and nearly correctly specified for DGP A. This shows that committing to a parametric distribution for the positive part can be avoided without sacrificing estimation precision.

In contrast, BCF-Linear performs poorly under hurdle right-skewness (RMSE 3.662 and 4.010) because it forces a single Gaussian fit over the zero discontinuity. Naive Gamma + .01 is similarly biased, resulting in high CATE RMSE (2.946 and 3.076) and sub-nominal coverage (91.1\%--91.7\%), which highlights the danger of shifting zero outcomes arbitrarily. While DPglm attains low CATE RMSE, its posterior credible intervals are pathologically over-wide ($\approx 51.2$ compared to ZIC-BCF-Smear's $3.8$), pinning coverage at a degenerate $100\%$ and rendering it unusable for calibrated uncertainty quantification. That interval inflation is the direct signature of the decoupling argument of Section~\ref{subsec:intensity_stage}: because DPglm fits its continuous stage on the full sample by imputing a latent continuous value for every zero at every sweep, roughly $40\%$ of the continuous data it splits on at the standard configuration is simulated MCMC noise, which propagates into the cluster and split proposals and inflates the posterior. ZIC-BCF-Smear trains its intensity forest only on units with $Y_i > 0$ and therefore never pays that price, which is why it can be simultaneously more accurate and an order of magnitude sharper.

In DGP C, which features an intrinsic compound Tweedie structure without a hurdle partition, the parametric Gamma Hurdle is the strongest single model; however, ZIC-BCF-Smear remains competitive, achieving low RMSE (2.471) and positive correlation (0.916). This is the expected cost of imposing a two-part factorization on a process that does not have one, and it bounds the penalty: even where the structural assumption of the method is false by construction, it stays within $20\%$ of the best model on CATE RMSE and retains a correlation above $0.9$.

\subsection{Link-Nonlinear DGP Comparative Results}
\label{subsec:nonlinear_results}

Performance is next evaluated under link-nonlinear conditional-mean surfaces. Under this setup, the fixed linear design $(1, z, X, z \cdot X)$ of the parametric benchmarks (Gamma Hurdle and Gamma + .01) is misspecified. Standard cell results are presented in Table~\ref{tab:nonlinear_standard_results}; the accompanying numerical-stability metrics are reported in Section~\ref{sec:supp_full_results} of the Supplementary Material.

\begin{table}[htbp]
\centering
\caption{Standard comparative results ($N = 500$, ZI Level 3, $\approx 40\%$ zeros) under link-nonlinear DGPs. Dagger ($\dagger$) indicates metrics contaminated by numerical divergence.}
\label{tab:nonlinear_standard_results}
\resizebox{\textwidth}{!}{%
\begin{tabular}{llcccccc}
\toprule
DGP & Model & CATE RMSE & CATE Cov & CATE Corr & CATE CI Len & ATE RMSE & ATE Abs Bias \\
\midrule
\multirow{5}{*}{DGP A: Log-Normal}
& BCF-Linear    & 5.812 & 88.1\% & 0.347 & 6.385 & 1.127 & 0.172 \\
& DPglm         & 3.070 & \textbf{99.7\%} & 0.198 & 61.532 & 0.680 & \textbf{0.021} \\
& Gamma Hurdle  & $9.7 \times 10^7 \dagger$ & 69.2\% & -0.039 & $5.8 \times 10^7 \dagger$ & $5.6 \times 10^8 \dagger$ & $5.9 \times 10^7 \dagger$ \\
& Gamma +.01    & $3.0 \times 10^9 \dagger$ & 69.5\% & -0.081 & $2.1 \times 10^{11} \dagger$ & $7.5 \times 10^8 \dagger$ & $7.3 \times 10^7 \dagger$ \\
& \textbf{ZIC-BCF-Smear} & \textbf{1.734} & 89.7\% & \textbf{0.686} & \textbf{4.422} & \textbf{0.537} & 0.119 \\
\midrule
\multirow{5}{*}{DGP B: Gamma}
& BCF-Linear    & 5.792 & 89.1\% & 0.318 & 6.624 & 1.101 & 0.217 \\
& DPglm         & 2.639 & 99.5\% & 0.211 & 61.457 & 0.716 & 0.126 \\
& Gamma Hurdle  & $1.7 \times 10^{10} \dagger$ & 74.1\% & -0.040 & $1.5 \times 10^{11} \dagger$ & $7.5 \times 10^{10} \dagger$ & $1.1 \times 10^{10} \dagger$ \\
& Gamma +.01    & $5.1 \times 10^8 \dagger$ & 68.9\% & -0.068 & $9.7 \times 10^8 \dagger$ & $8.7 \times 10^8 \dagger$ & $5.0 \times 10^7 \dagger$ \\
& \textbf{ZIC-BCF-Smear} & \textbf{1.656} & \textbf{91.7\%} & \textbf{0.645} & \textbf{4.753} & \textbf{0.641} & \textbf{0.067} \\
\midrule
\multirow{5}{*}{DGP C: Tweedie}
& BCF-Linear    & 0.302 & 91.6\% & 0.060 & \textbf{0.638} & 0.111 & \textbf{0.026} \\
& DPglm         & 0.404 & 99.9\% & 0.022 & 37.874 & 0.114 & 0.058 \\
& Gamma Hurdle  & $3.8 \times 10^{10} \dagger$ & 81.4\% & -0.060 & $1.7 \times 10^{11} \dagger$ & $6.3 \times 10^{10} \dagger$ & $7.8 \times 10^9 \dagger$ \\
& Gamma +.01    & $1.9 \times 10^{10} \dagger$ & 76.3\% & -0.045 & $2.2 \times 10^{11} \dagger$ & $1.2 \times 10^{11} \dagger$ & $1.2 \times 10^{10} \dagger$ \\
& \textbf{ZIC-BCF-Smear} & \textbf{0.250} & \textbf{94.6\%} & \textbf{0.175} & 0.704 & \textbf{0.101} & 0.030 \\
\bottomrule
\end{tabular}%
}
\end{table}

Under link-nonlinearity, the parametric Gamma benchmarks break down in two ways. First, they fail to capture treatment heterogeneity, yielding negative CATE correlations in all three DGPs (ranging from $-0.039$ to $-0.081$). This shows that a misspecified linear model ranks treatment effects in roughly the wrong order, which for a CATE learner is a more serious failure than a scale error: a decision rule that targets treatment to the units with the largest estimated benefit would, under these benchmarks, select approximately the wrong units. Second, the misspecified linear predictors generate extreme fitted values on tail covariate combinations, causing the link-$\exp(\cdot)$ and penalized-IRLS updates to overflow. Divergence occurs on 4 to 5 of 100 seeds for Gamma Hurdle and 5 to 9 for Gamma + .01, inflating the mean CATE RMSE and ATE metrics to the $10^7$--$10^{11}$ range. Robust medians computed after excluding the diverged seeds are reported in Section~\ref{sec:supp_full_results} of the Supplementary Material; they do not rescue the benchmarks, since the negative CATE correlations persist and the median CATE RMSEs remain at 3.7 to 4.3 in DGP A and B.

The mechanism behind that overflow is exactly what Duan's smearing is introduced to defuse. A parametric log-scale model re-transforms through a multiplier fixed by its assumed family, $\exp(\sigma_c^2/2)$ in the log-normal case; when the model is wrong in functional form, the residuals it fails to explain inflate $\sigma_c^2$, the multiplier grows exponentially in that inflation, and the error compounds on precisely the tail covariate combinations where the linear predictor is already most extreme. The smearing factor $\phi_{\text{smear}}^{(s)}$ of Section~\ref{subsec:smearing} replaces it with the empirical mean of exponentiated residuals, which is bounded by the observed data. ZIC-BCF-Smear is therefore not merely more accurate here but categorically more stable, recording zero diverged seeds across all 100 replications of all three nonlinear DGPs.

Correspondingly, ZIC-BCF-Smear achieves the lowest CATE RMSE (1.734 / 1.656 / 0.250) across all DGPs, outperforming the robust medians of the parametric models by a factor of 2.0 to 2.7. It also maintains positive CATE correlations (0.686 / 0.645 / 0.175), successfully identifying treatment effect direction and magnitude. Under nonlinearity, the ATE ``refuge'' that Gamma Hurdle enjoyed in the linear study collapses: ZIC-BCF-Smear yields the lowest ATE RMSE in all three DGPs. BCF-Linear is stable but weak under hurdle skew (RMSE $\approx 5.8$, coverage $\approx 88\%$ in A/B), while DPglm remains pathologically over-wide (CI length $\approx 61$). The one blemish is a mild under-coverage of ZIC-BCF-Smear in DGP A ($89.7\%$) and DGP B ($91.7\%$), traced in Section~\ref{subsec:limitations} to the conservative BART regularization priors, which shrink the posterior toward the prior mean for units in the tails of the covariate distribution.

\subsection{Hurdle Stage Performance}
\label{subsec:hurdle_performance}

Because the response-scale effect is a product of a participation probability and an intensity, the quality of the hurdle stage is as consequential as that of the intensity stage, and it is separately identified. Estimators that decouple zero-inflation from positive outcomes model the binary indicator $I_i = \mathbb{I}(Y_i > 0)$ in the first stage, so the hurdle metrics of ZIC-BCF-Smear (Probit BCF), DPglm (Dirichlet process mixture), and Gamma Hurdle (logistic GLM) can be compared directly; BCF-Linear and Gamma + .01 do not model a hurdle and are absent from this comparison. Table~\ref{tab:hurdle_results} reports the standard configuration under both link regimes.

\begin{table}[htbp]
\centering
\caption{Hurdle-stage performance at the standard configuration ($N = 500$, ZI Level 3, $\approx 40\%$ zeros), across 100 seeds. RMSE is on the participation-probability scale and coverage is the empirical $95\%$ credible-interval coverage of the true participation probability. BCF-Linear and Gamma + .01 do not model a hurdle stage.}
\label{tab:hurdle_results}
\begin{tabular}{llcccc}
\toprule
& & \multicolumn{2}{c}{Link-Linear} & \multicolumn{2}{c}{Link-Nonlinear} \\
\cmidrule(lr){3-4} \cmidrule(lr){5-6}
DGP & Model & Hurdle RMSE & Coverage & Hurdle RMSE & Coverage \\
\midrule
\multirow{3}{*}{DGP A: Log-Normal}
& DPglm         & 0.053 & 99.7\% & 0.077 & 98.6\% \\
& Gamma Hurdle  & 0.098 & \textbf{94.0\%} & 0.129 & 88.6\% \\
& \textbf{ZIC-BCF-Smear} & \textbf{0.046} & 99.3\% & \textbf{0.065} & \textbf{94.7\%} \\
\midrule
\multirow{3}{*}{DGP B: Gamma}
& DPglm         & 0.055 & 99.7\% & 0.077 & 98.2\% \\
& Gamma Hurdle  & 0.098 & \textbf{94.0\%} & 0.129 & 88.5\% \\
& \textbf{ZIC-BCF-Smear} & \textbf{0.046} & 99.3\% & \textbf{0.066} & \textbf{94.5\%} \\
\midrule
\multirow{3}{*}{DGP C: Tweedie}
& DPglm         & 0.044 & 87.0\% & 0.057 & 99.8\% \\
& Gamma Hurdle  & 0.081 & \textbf{95.0\%} & 0.111 & \textbf{92.4\%} \\
& \textbf{ZIC-BCF-Smear} & \textbf{0.028} & 99.3\% & \textbf{0.044} & 98.4\% \\
\bottomrule
\end{tabular}
\end{table}

ZIC-BCF-Smear attains the lowest hurdle RMSE (the RMSE of the estimated unit-level treatment effect on the participation probability) in every cell of the table, under both link regimes and all three DGPs. Under link-linearity its advantage over the logistic GLM of the Gamma Hurdle benchmark (0.046 versus 0.098 in DGP A and B) has a clean explanation: the true participation stage in \eqref{eq:dgp_a_hurdle} is a probit, so the Probit BCF stage matches the true link while the logistic GLM is mildly misspecified even in the regime designed to favor it. Under link-nonlinearity the logistic GLM is misspecified in functional form as well, its RMSE rises to 0.129, and its coverage collapses to $\approx 88.5\%$; ZIC-BCF-Smear holds RMSE at 0.065--0.066 with essentially nominal coverage (94.5\%--94.7\%). DPglm is accurate but uninformative, its hurdle credible intervals averaging a length of $\approx 1.99$ on a probability scale bounded in $[0,1]$, which is a second manifestation of the augmentation-noise inflation seen on the response scale. Full hurdle-stage results across sample sizes and zero-inflation levels are tabulated in Section~\ref{sec:supp_full_results} of the Supplementary Material.

\subsection{Sensitivity to Sample Size and Zero-Inflation}
\label{subsec:sensitivity}

Two sensitivity sweeps are run: a sample-size grid $N \in \{100, 250, 500, 1000\}$ under nominal zero-inflation, and a zero-inflation grid spanning realized zero proportions from $3\%$ to $85\%$ at $N = 500$. Both are run under both link regimes and all three DGPs, giving 48 additional cells. The complete numerical results and all sensitivity figures are reported in Section~\ref{sec:supp_full_results} of the Supplementary Material; the substantive findings are four.

First, ZIC-BCF-Smear is asymptotically well behaved. Under link-linearity its CATE RMSE falls and its credible intervals contract systematically with $N$ (interval length $6.73 \rightarrow 4.82 \rightarrow 3.82 \rightarrow 3.20$ across the grid in DGP A) while coverage holds at the nominal level, as does the correctly specified Gamma Hurdle. DPglm's intervals contract too but remain an order of magnitude wider ($99.19 \rightarrow 42.61$), so its coverage stays pinned near $100\%$ at every sample size; BCF-Linear contracts but its coverage collapses (to $83.5\%$ under Tweedie at $N = 1{,}000$), and Gamma + .01 diverges at $N = 100$.

Second, the parametric failure under nonlinearity is not a small-sample artifact and does not wash out with data. The parametric benchmarks diverge in 23 of the 24 nonlinear sensitivity cells, with divergence rates up to $10\%$, and their CATE correlations remain negative at every sample size, including $N = 1{,}000$. Misspecification in functional form is not something more observations repair.

Third, the advantage of the decoupled structure scales directly with the amount of zero-inflation, which is the design claim of the method. At $\approx 85\%$ zeros the CATE RMSE of ZIC-BCF-Smear in DGP A is $17.9\%$ (link-linear) and $27.5\%$ (link-nonlinear) of the single-part BCF-Linear baseline. The reason is structural rather than incidental: BCF-Linear must resolve the zero mass with the same smooth surface it uses for the positive part, and the more mass sits at zero the more that single surface is dominated by the discontinuity, whereas the ZIC-BCF-Smear intensity forest is never asked to see a zero at all.

Fourth, extreme zero-inflation is where the competitors break rather than merely degrade. At $85\%$ zeros in DGP B the correctly specified parametric hurdle collapses to a normalized RMSE of $1.372$, worse than the BCF-Linear baseline, because with only $\approx 15\%$ positive observations the Gamma GLM has too little active data to fit; Gamma + .01 diverges outright; and DPglm collapses under Tweedie (normalized RMSE $14.424$), its latent imputation being almost pure noise when the active fraction is small. ZIC-BCF-Smear remains stable across the entire range and outperforms every competitor at the extremes of both grids.

\section{Applied Study: Severe Adverse-Event Burden in a Head-and-Neck Cancer Trial}
\label{sec:applied}

This section applies our proposed model to individual-participant data from SPECTRUM, an open-label phase III randomized trial of cisplatin and fluorouracil with or without panitumumab in recurrent or metastatic squamous-cell carcinoma of the head and neck \citep{vermorken2013spectrum, nct00460265}, obtained through Project Data Sphere \citep{pds_headne_265}. A condensed account is given here; the full study, with the complete subgroup tables, the exposure and missingness diagnostics, the model comparison, the literature positioning, and the reproducibility trail, is Section~\ref{sec:supp_onc} of the Supplementary Material.

\subsection{Data, Outcome, and Estimand}
\label{subsec:applied_data}

The analytic extract contains 520 participants balanced at 260 per arm, with $Z_i = 1$ denoting panitumumab plus chemotherapy. Because the trial randomized assignment, the unadjusted arm difference is already unbiased for the average effect in this cohort; the baseline adjustment set is a variance-reduction device rather than a requirement for validity, and the known design propensity is $0.5$. That guarantee attaches to the randomization, so it holds for the 520-person extract to the extent that the extract is a randomization-respecting subset of the parent trial's 657 randomized patients. The supplied files do not record how the 657 became 520, and the exactly balanced 260 per arm cannot by itself distinguish a benign completeness filter from a selection that differs by arm. Section~\ref{sec:supp_onc} of the Supplementary Material states the reconciliation that is checkable from the supplied files and flags the step that is not.

The covariate vector $X_i$ contains: 1. age, 2. ECOG performance status, 3. sex, 4. diagnosis site, 5. HPV status, 6. disease status, and the 7. patient-level prior-radiotherapy indicator. Prior radiotherapy is present in $76.9\%$ of the chemotherapy arm and $83.1\%$ of the panitumumab arm and is included because it is clinically relevant to the mucosal and cutaneous toxicity that the outcome measures. The broader composite prior-treatment variable is reported descriptively but is not entered alongside prior-radiotherapy indicator, in the adjustment or the subgroup lists, because the two indicators differ for only four patients and their near-redundancy destabilized the Gamma benchmark while adding an unadjusted near-duplicate contrast to every subgroup grid. Detailed reconciliation of the radiotherapy administration domain and lesion-level records against patient-level baseline flags is provided in Section~\ref{sec:supp_onc} of the Supplementary Material. Furthermore, tumor category is excluded because it is a deterministic recode of diagnosis site in this extract, so carrying both places an exact linear combination of existing columns in the design matrix.

The outcome is the patient-level burden of severe toxicity, $Y_i^{\mathrm{sum}} = \sum_{j \in \mathcal{A}_i} d_{ij}\,\mathbf{1}(G_{ij} \ge 3)$, where $\mathcal{A}_i$ indexes the on-study adverse-event records for patient $i$, $d_{ij}$ is the recorded duration of record $j$, and $G_{ij}$ its recorded grade. This is exactly the regime the estimator targets: $123$ patients ($23.7\%$) accrue no severe-adverse-event time at all, while the positive part is heavily right-skewed with a median of $26$ and a maximum of $641$ event-days. Because $Y_i^{\mathrm{sum}}$ sums record durations rather than taking the union of calendar intervals, concurrent severe events contribute more than one event-day on the same date. A companion endpoint $Y_i^{\mathrm{cal}}$, the number of distinct on-study calendar days on which at least one grade 3+ event was ongoing, is therefore also constructed by merging overlapping intervals within each patient; the union removes $27.1\%$ of the record-days but leaves the zero count unchanged. Of the $1{,}671$ qualifying severe records, $1{,}547$ have a complete imputed interval and $124$ do not, in every case because the end day rather than the start day is unresolved. Those $124$ records fall across $88$ patients, $51$ on panitumumab and $37$ on chemotherapy, whose calendar-day totals are correspondingly understated, and a smaller group of $13$ patients has no complete qualifying interval at all and is assigned zero outright. That group is imbalanced, $10$ on panitumumab against $3$ on chemotherapy, which runs in the direction of understating the panitumumab arm and lands on the hurdle component. 

Patient disposition records provide the scheduled safety follow-up milestone and the last-contact study day for all 520 patients.      However, only 354 patients completed the protocol-specified safety follow-up, and three complete adverse-event intervals extend beyond the recorded last-contact day. These disposition data therefore permit diagnostic bounding of the observed follow-up window, but they do not establish an entirely closed observation period; an unresolved event end date cannot be treated as definitive evidence of ongoing toxicity at the time of data collection.

For either outcome the target is the analytic-cohort average treatment effect $\Delta^{k} = \mathbb{E}\{Y_i^{k}(1) - Y_i^{k}(0) \mid i \in \mathcal{S}\}$ over the 520-person cohort $\mathcal{S}$, which under randomization and consistency is the effect of \emph{assignment} on accrued severe-adverse-event days under the trial's own differential treatment duration. Treatment duration is not held fixed and is not adjusted for, deliberately: panitumumab continues as maintenance after chemotherapy stops, so time on treatment is a post-randomization consequence of assignment and lies on the causal pathway from assignment to accrued burden. It can therefore be neither ignored in interpretation nor naively conditioned on in estimation. Section~\ref{subsec:applied_exposure} quantifies how much of the effect travels through that channel. The estimand is not a fixed-horizon restricted mean, not a rate per unit of exposure, and not a competing-risk estimand.

\subsection{Average Effects and Endpoint Sensitivity}
\label{subsec:applied_ate}

ZIC-BCF-Smear was fit as four chains of $4{,}000$ retained draws after $5{,}000$ burn-in iterations, and the Bayesian Gamma hurdle model of \citet{oganisian2021zeroinflated} was fit to the same cohort, covariates, and treatment coding as a specification benchmark, likewise as four chains. Across the global monitored functionals the largest Gelman--Rubin statistic is $1.001$ and the smallest effective sample size is $2{,}670$ for ZIC-BCF-Smear; the corresponding Gamma-hurdle values are $1.000$ and $16{,}000$ on the same four-chain, $4{,}000$-draw-per-chain budget \citep{gelman1992inference, plummer2006coda}. The complete diagnostic tables are in the Supplementary Material. Table~\ref{tab:applied_ate} reports both models on both endpoints, together with the participation margin.

\begin{table}[htbp]
\centering
\caption{Posterior average treatment effects of panitumumab assignment on severe-adverse-event burden, SPECTRUM analytic cohort ($N = 520$). Positive values indicate a higher burden, or a higher probability of a positive reconstructed severe-adverse-event burden, under panitumumab. The hurdle margin is not literally the probability of any severe adverse event: 13 patients with severe events are reconstructed as zeros because their observation intervals were incomplete. The first four rows are on the event-day scale; the last two are differences in probability. The unadjusted arm differences, which randomization makes unbiased for the same estimand, are $15.68$ and $12.06$ event-days.}
\label{tab:applied_ate}
\begin{tabular}{llcc}
\toprule
Estimand & Model & Estimate & 95\% Credible Interval \\
\midrule
\multirow{2}{*}{Summed record duration (event-days)}
& ZIC-BCF-Smear & 14.27 & [4.69, 24.23] \\
& Gamma Hurdle  & 15.95 & [6.15, 26.30] \\
\midrule
\multirow{2}{*}{Distinct calendar days}
& ZIC-BCF-Smear & 10.00 & [3.57, 16.62] \\
& Gamma Hurdle  & 12.20 & [5.56, 19.23] \\
\midrule
\multirow{2}{*}{Participation margin (probability)}
& ZIC-BCF-Smear & 0.029 & [$-0.038$, 0.102] \\
& Gamma Hurdle  & 0.017 & [$-0.055$, 0.089] \\
\bottomrule
\end{tabular}
\end{table}

Three results follow. First, the average effect is credibly positive and survives the change of outcome definition: ZIC-BCF-Smear estimates $14.27$ additional event-days on the summed scale (posterior probability of a positive effect $0.999$) and $10.00$ additional days once concurrent events are counted once ($0.999$). Both the sign and the near-certainty agree with the parent trial's qualitative safety finding, namely that the panitumumab-containing regimen produced more severe toxicity across several conventional adverse-event categories \citep{vermorken2013spectrum}, and the conclusion survives the one outcome-definition change that could plausibly have manufactured it, the double-counting of concurrent severe events. The $4.27$-day attenuation measures how much of the summed burden reflects simultaneity, not a reversal of the safety conclusion. What the literature cannot supply here is a numerical benchmark, since standard safety tables summarize event incidence, grade, and category rather than summed record durations or overlap-adjusted calendar days \citep{vermorken2013spectrum}; external agreement is therefore assessable for direction and robustness but not for magnitude, which is exactly the gap our applied study fills.

Second, the two specifications agree on the average, at $15.95$ and $12.20$ days for the Gamma hurdle, with a gap between point estimates that is small relative to the width of either posterior, so the finding is not an artifact of either functional form. Both also sit close to the corresponding unadjusted arm differences of $15.68$ and $12.06$ event-days. Under randomization that proximity is the benchmark rather than a weakness: the unadjusted contrast is itself unbiased for the cohort average effect, so a model that departed sharply from it would need to explain why. What the adjusted fits add is precision and the ability to resolve the effect by covariate profile and by component, not a correction to a biased raw comparison.

Third, and most informative for the two-part structure, the participation margin is uncertain under both models. Since $76.3\%$ of the cohort has a positive burden in both arms, that margin has little room to move, and the credible effect is carried mainly by the positive-intensity component. The supported claim therefore concerns how much severe-toxicity time accumulates among patients whose reconstructed burden is positive, rather than whether that reconstructed burden is positive at all. That attribution is a direct product of decoupling the two stages; a single-part estimator on the raw day scale returns the same aggregate number with no way to localize it, and the published safety evidence, which emphasizes particular event categories and grades rather than an aggregate any-severe-AE indicator \citep{vermorken2013spectrum}, contains no quantity that either confirms or refutes this margin. The defensible interpretation is a difference concentrated in positive intensity, which can reflect event duration, event multiplicity, overlap, treatment exposure, and incomplete interval reconstruction rather than persistence alone.

\subsection{Treatment Exposure and What the Estimand Does Not Cover}
\label{subsec:applied_exposure}

The endpoint counts days in a severe-adverse-event state, and that quantity accrues with time on treatment. Exposure is grossly unequal, and structurally so: mean treatment duration is $83.7$ days in the chemotherapy arm against $130.3$ days under panitumumab, a $56\%$ difference, with maxima of $276$ and $1{,}149$ days. The consequence is large. Truncating every patient's observed accrual at a common study day collapses the arm difference in distinct severe-adverse-event days from $12.06$ days uncapped to $4.10$ at 120 days, $3.37$ at 90 days, and $3.14$ at 84 days, and the same attenuation reproduces across all four start-date and end-date variants recorded in the file.

Two cautions govern how that table is read, and they matter more than the numbers. It is truncation of observed accrual, not a fixed-horizon estimand: capping at day 84 removes events recorded after day 84 but does not create observation for a patient whose follow-up ended earlier. The new disposition domains now make that distinction measurable. Using the last-contact study day variable as an observed upper bound gives mean uncapped observed-window burdens of $19.18$ and $30.90$ days and, at 84 days, $11.25$ and $14.33$ days in the chemotherapy and panitumumab arms, respectively; $47$ and $41$ patients had last contact before day 84. These are still diagnostics, not fixed-horizon causal estimates, because early contact loss and incomplete interval boundaries require a censoring strategy. Whether the effect reverses on a per-exposure basis also depends on the normalization, since the patient-level mean of per-100-treatment-day rates reverses, at $53.7$ against $49.9$, while the aggregate person-time rate does not, at $22.9$ against $24.0$. Fitting the same two models to the per-exposure rate reproduces that ambiguity rather than resolving it: ZIC-BCF-Smear returns $5.19$ days per 100 treatment-days $[-4.39, 17.28]$ and the Gamma hurdle $-5.23$ $[-37.06, 25.19]$, both compatible with no effect. The defensible claim is severe attenuation once exposure is accounted for, not reversal. The horizon diagnostic provides the first check on that collapse, and it is deliberately weaker than a corrected estimate. Table~\ref{tab:applied_horizon} reports means among all patients and among patients whose last contact reaches each horizon. The arm difference is not smaller among patients observed through the horizon, going from 3.08 to 3.68 days at day 84 and from 4.03 to 5.13 days at day 120, so simple truncation of the shortest-observed records is not what produces the attenuation. Restricting to patients observed through the horizon conditions on a post-randomization variable, however, so informative censoring remains possible and the direction of any resulting selection bias cannot be signed in advance; these rows are therefore reported as diagnostics, not as a corrected fixed-horizon effect.

\begin{table}[htbp]
\centering
\caption{Mean accrued severe-adverse-event days by arm under common-horizon truncation and under capping at the patient-specific last-contact day (LASTCTDY), on the 520-person cohort. The observed-to-horizon rows restrict to patients whose last contact reaches the window ($n=213$ chemotherapy and $219$ panitumumab at day 84; $194$ and $203$ at day 120). Capping applies only to intervals with complete boundaries, so these are diagnostics bounding, not removing, the unresolved-record problem discussed above. The day-84 and day-120 rows apply the same last-contact cap, which is why they sit slightly below the plain uncapped truncation differences quoted in the text ($3.08$ against $3.14$ at day 84 and $4.03$ against $4.10$ at day 120).}
\label{tab:applied_horizon}
\begin{tabular}{lccr}
\toprule
Window & Chemotherapy & Panitumumab & Difference \\
\midrule
Last contact capped & 19.18 & 30.90 & 11.73 \\
Day 84, all patients & 11.25 & 14.33 & 3.08 \\
Day 84, observed to day 84 & 11.25 & 14.93 & 3.68 \\
Day 120, all patients & 15.55 & 19.58 & 4.03 \\
Day 120, observed to day 120 & 16.12 & 21.25 & 5.13 \\
\bottomrule
\end{tabular}
\end{table}

None of this is offered as a corrected causal estimate, and the randomized analysis of Section~\ref{subsec:applied_ate} is left standing rather than replaced. Treatment duration is measured after randomization, so dividing by it conditions on a post-treatment variable and can induce selection bias whose direction is not fixed in advance: a patient who stops early because of toxicity contributes a short denominator alongside a large numerator, inflating that patient's rate and making their arm look worse, whereas a patient who stops early for an unrelated reason pushes the same arm's rate down. Which dominates depends on why patients discontinued, and the supplied extract does not provide a treatment-stop reason that resolves this. The per-exposure quantities are therefore reported as descriptive rates per treatment-day, and the truncation table as a diagnostic showing that the effect is exposure-dependent. 

Although disposition records make a formal fixed-horizon analysis conceptually feasible, they do not obviate the need to explicitly address informative censoring, competing events, and intercurrent treatment policy. The available data therefore justify an observed-window sensitivity diagnostic bounded by last contact, whereas a formal restricted-mean estimand represents a distinct follow-up analysis rather than a replacement for the randomized cohort estimand

\subsection{Heterogeneity and Specification Sensitivity}
\label{subsec:applied_heterogeneity}

Subgroup effects are formed draw by draw, averaging unit-level effects within each posterior draw over the members of a subgroup, so that a contrast between two subgroups inherits their posterior covariance rather than being read off two unrelated marginal intervals. Figure~\ref{fig:applied_cates} displays the result.

\begin{figure}[htbp]
\centering
\includegraphics[width=0.56\textwidth]{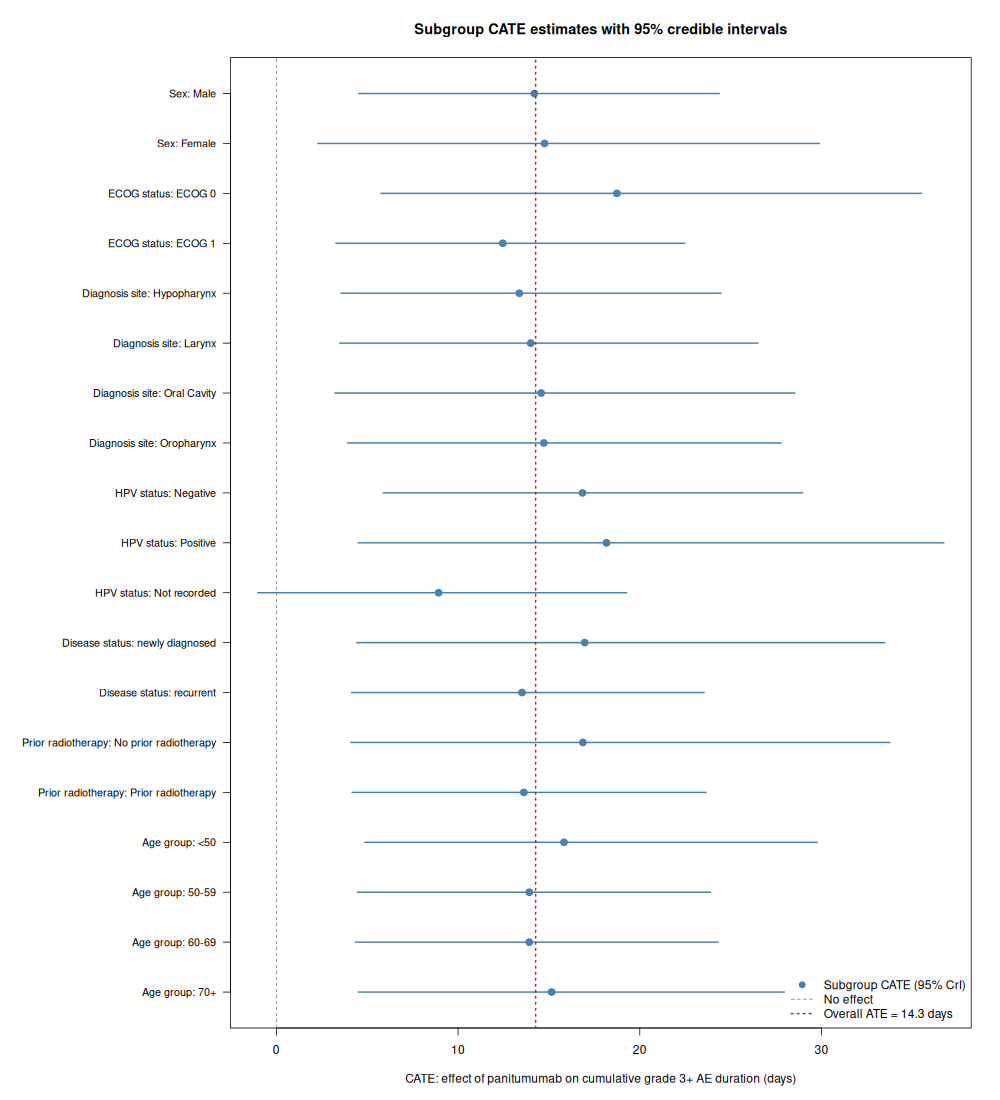}
\caption{ZIC-BCF-Smear subgroup conditional average treatment effects on summed severe-adverse-event-record duration, in event-days, with $95\%$ credible intervals; the dashed line marks no effect and the dotted line the overall effect of $14.3$ event-days. Every subgroup posterior mean falls within roughly five event-days of the overall effect and every interval covers it, so the estimated effect surface is broadly homogeneous. Interval width tracks subgroup size almost monotonically, widest for HPV-positive patients ($n = 69$), women ($n = 65$), and patients aged 70 or older ($n = 36$), which is the expected consequence of borrowing strength across a tree ensemble rather than fitting stratum-specific models.}
\label{fig:applied_cates}
\end{figure}

Posterior means are positive in every prespecified subgroup on both endpoints, but no ZIC-BCF-Smear pairwise contrast has a $95\%$ credible interval excluding zero. The largest ZIC-BCF-Smear signal, ECOG 0 versus ECOG 1, is $6.28$ event-days $[-2.57, 22.20]$ on the summed endpoint and $6.69$ days $[-0.83, 20.43]$ on the distinct-day endpoint. The Gamma hurdle benchmark reaches the opposite conclusion on the same data, estimating those contrasts at $26.94$ event-days $[4.74, 51.94]$ and $23.34$ days $[7.83, 40.98]$, reproducibly across seeds, and it likewise returns credible participation-margin contrasts for ECOG 0 versus ECOG 1 and for HPV-negative versus HPV-not-recorded where ZIC-BCF-Smear returns none. Because both models are fit to identical data, covariates, and treatment coding, the difference is attributable to specification: the parametric Gamma positive component with linear dummy-encoded covariates extrapolates subgroup means without the regularization the causal forest imposes. This is the applied counterpart of the divergence documented under link-nonlinearity in Section~\ref{subsec:nonlinear_results}, where the same benchmark, fit to data whose truth was known, produced negative CATE correlations and diverged outright on 4 to 5 seeds in 100. Its practical consequence here is concrete, since an ECOG effect-modification claim would be reported as credible under the parametric benchmark and as inconclusive under ZIC-BCF-Smear.

One of these contrasts deserves some extra attention in our analysis. The HPV comparison that the benchmark reports as credible is against a category created by coalescing patients with no recorded baseline HPV result, so it is a missingness contrast over $181$ patients rather than a comparison of two measured biological states, and it should not be read as evidence about HPV biology. Prior radiotherapy is the clinically motivated candidate modifier: the summed-duration effect is $16.87$ event-days $[4.10, 33.76]$ among the $104$ patients without prior radiotherapy and $13.62$ $[4.17, 23.65]$ among the $416$ with prior radiotherapy, with contrast $3.25$ $[-7.61, 17.82]$. This contrast is not credible, and the smaller stratum is limited. The composite prior-treatment variable gives materially identical strata (the two indicators differ for four patients), so it is not reported as a separate subgroup comparison, which would add a near-duplicate contrast to an analysis that is already unadjusted for multiplicity.

Which answer is right cannot be settled internally, but it can be settled against the literature. Previous SPECTRUM subgroup work concentrated on efficacy and on HPV or p16 status, and the one available HPV-specific safety analysis is a conference abstract that does not estimate baseline-modified causal effects on either burden outcome \citep{stohlmacher2012hpv}; neither ECOG nor HPV status is established anywhere in the SPECTRUM literature as a modifier of the causal burden of panitumumab-associated toxicity \citep{vermorken2013spectrum, stohlmacher2012hpv}. A regularized estimator returning no credible heterogeneity is therefore the concordant answer, and the Gamma-specific signals are best read as model-sensitive hypotheses rather than as externally corroborated effect modification. This is the same asymmetry the simulations produced, with the direction of the error preserved: under functional misspecification the parametric hurdle does not merely lose precision, it reports structure that is not there.

\subsection{A Second Application: Dental Expenditures}
\label{subsec:applied_meps}

The oncology study exercises the estimator where the applied literature is thin, since the study is novel in exploring effect in adverse-event duration scale; consequently, external agreement can be assessed for direction and robustness but not for magnitude. A second study supplies the complementary case, estimating the causal effect of dental-insurance coverage on annual dental expenditure in the 2023 Medical Expenditure Panel Survey \citep{meps2023file}, an $N = 18{,}763$ observational sample in which $54.9\%$ of respondents report zero dental spending and the positive part is severely right-skewed (median $\$361$, mean $\$1{,}038$, maximum $\$40{,}350$). This is the canonical semicontinuous outcome of health economics \citep{duan1983healthexpenditure, manning2001estimating}, and decades of applied work have documented the gradients that coverage effects should exhibit by race and ethnicity, age, income, and region \citep{manski2002private, zhang2016racial, reda2018inequality, vujicic2016financial}. It is reported in full in Section~\ref{sec:supp_meps} of the Supplementary Material rather than here, because its role is confirmatory rather than methodological.

MEPS is a stratified, clustered, unequal-probability sample, so two estimands are reported for every quantity and they answer different questions. The analytic-sample average describes the $18{,}763$ records, which oversample several subpopulations; the survey-weighted population target uses the person weight and carries a design-based variance component obtained by Taylor linearization over the sampling strata and primary sampling units \citep{binder1983variances, meps_survey_estimation}, and only that column supports a reading about the United States population. Clustering ordinarily widens an interval and stratification ordinarily narrows it, so which dominates is a matter for computation rather than assertion. Race and ethnicity are taken from the standard MEPS analytic variable, whose first level is Hispanic; the alternative race-only variable carries no ethnicity dimension and, in this sample, would place $3{,}758$ Hispanic respondents inside the White category.

Its value as a check is that ZIC-BCF-Smear recovers the documented gradients directly on the dollar-scale conditional effect, and the parametric benchmark, fit to the same sample with the same covariates, does not. The estimated average effect of coverage is $\$188.02$ $[\$156.95, \$220.67]$ as a sample average and $\$184.67$ $[\$154.63, \$214.70]$ as a design-aware population target, positive as the MEPS literature on coverage, visits, and expenditure would require \citep{manski2002private, christian2014determinants, naib2026dental}. The design component is small here, contributing a standard error of $\$1.36$ against a posterior standard deviation of $\$15.26$, so the interval widens only slightly; that is a result rather than an assumption, and it did not have to come out that way.

On the conditional effects, the contrast between non-Hispanic White and non-Hispanic Black beneficiaries is $+\$71.42$ $[\$11.96, \$130.87]$ and the contrast between Hispanic and non-Hispanic White beneficiaries is $-\$62.38$ $[-\$100.02, -\$24.74]$, reproducing the racial and ethnic gradient documented across the oral-health disparities literature \citep{zhang2016racial, reda2018inequality, robison2020racial}; the effect rises monotonically across the five age groups from $\$127$ to $\$250$, with the children-versus-65-and-over contrast at $-\$123.30$ $[-\$204.88, -\$41.73]$, tracking the documented age gradient in dental expenditure and coverage gaps \citep{christian2014determinants, zhang2016racial, manski2015dental}; and the low-income-versus-middle-income contrast is $-\$79.08$ $[-\$145.02, -\$13.14]$, matching the finding that coverage is a weaker enabling resource where cost-sharing and annual caps bite hardest \citep{manski2002private, reda2018inequality, li2020socioeconomic, vujicic2016financial}. In total 18 of the 56 pairwise contrasts are credible under design-aware inference.

Accounting for the complex survey design and contrasting the two margins reveals two key nuances. First, while point estimates indicate lower spending in the South relative to the West (-$\$41.12$ $[-\$87.62, \$5.38]$), the design-aware credible interval covers zero, indicating that this regional gradient is suggestive rather than conclusive \citep{luo2020rural, joudi2024appreciate}. Second, the population-weighted average effect of \$184.67 is essentially identical to the raw unadjusted spending gap of $\$185.9$, exhibiting virtually no attenuation. Instead, the attenuation expected from positive selection into coverage manifests on the participation margin, where the adjusted effect ($0.139$ $[0.124, 0.155]$) absorbs roughly three tenths of the unadjusted 0.201 difference in the probability of dental spending. On the dollar scale, observed covariates absorb little of the raw association, suggesting that they primarily capture sociodemographic background rather than latent oral-health demand. Section~\ref{sec:supp_meps} of the Supplementary Material discusses  this adjustment limitation in detail.

The comparison with the benchmark is not symmetric. Under design-aware inference the Gamma hurdle produces exactly one credible dollar-scale contrast out of 56, and it runs against the literature rather than with it, placing the low-income tertile $\$124.06$ $[\$2.80, \$245.43]$ \emph{above} the high-income tertile. Its race and ethnicity point estimates do not order the groups as three decades of applied work do (non-Hispanic Black $\$144.2$ above non-Hispanic White $\$143.3$ above Hispanic $\$138.4$) \citep{zhang2016racial, reda2018inequality, robison2020racial}, its dollar-scale age gradient is non-monotone, and its regional ordering is inverted, with the South highest at $\$198.7$. On the estimand that both models are asked to report, the conditional effect on the dollar scale, ZIC-BCF-Smear converges to the established applied literature and the Gamma hurdle does not. The mechanism is the one identified in Section~\ref{subsec:nonlinear_results}: the Gamma positive-intensity likelihood imposes a single parametric tail shape that smooths away the heavy right tail in which the literature-aligned heterogeneity lives, whereas the nonparametric smearing re-transformation absorbs it. What rules out the reading that the Gamma hurdle is simply measuring something different is that it does recover credible structure on its participation margin, where its decomposition is well suited, returning 17 credible contrasts to ZIC-BCF-Smear's 11. The parametric benchmark loses the heterogeneity exactly where its distributional assumption binds, which is the scale-misspecification failure of Section~\ref{subsec:smearing} observed against an external standard rather than a known truth. Taken with the simulations, the applied evidence points one way: on hurdle-structured data the proposed estimator recovers heterogeneity that the classical parametric hurdle either misses or inverts.

\section{Limitations}
\label{sec:discussion}

While ZIC-BCF-Smear demonstrates strong empirical performance, it is subject to several limitations that warrant candid acknowledgment. This section is confined to what the framework does not resolve.

\label{subsec:limitations}

First, a \emph{mild under-coverage of CATE credible intervals} is observed under complex, non-linear conditional mean surfaces at larger sample sizes (specifically, $89.7\%$ and $91.7\%$ coverage for $N=500$, and similar behavior at $N=1000$). This coverage dip occurs because the BCF prognostic and treatment forests incorporate highly conservative regularizing priors designed to prevent overfitting. Under highly complex, non-linear link surfaces, these priors can cause the posterior distributions to be overly conservative (shrunk toward the prior mean), leading to slightly narrower credible intervals that do not fully cover the true CATE for units in the tails of the covariate distribution.

Second, the continuous intensity stage (Stage 2) assumes \emph{homoskedastic log-scale errors} ($\epsilon_i \sim \mathcal{N}(0, \sigma_c^2)$), and Duan's smearing factor inherits that assumption: a single scalar per posterior draw is applied to both counterfactual arms, so every response-scale estimate scales linearly in it.

Third, the SPA framework conditions on the post-treatment event $I_i = 1$ ($Y_i > 0$). This causal identification strategy relies on the \emph{subpopulation strong ignorability assumption} (Assumption 3.1). Although this assumption is mathematically guaranteed under full-sample strong ignorability and conditional two-part independence, it may be violated in settings where unobserved confounders simultaneously affect both the hurdle participation process and the continuous response intensity. Under selection on unobservables, the active-subset regression may suffer from residual collider bias that cannot be adjusted for via the subpopulation propensity score alone.

Fourth, both propensity scores are \emph{plug-in estimates}. As described in Section~\ref{subsec:posterior}, $\widehat{\pi}_i$ and $\widehat{\pi}_i^+$ are fit once (in this paper by logistic regression but it could be any other proper estimator) before the sampler starts and are then held fixed, so their estimation uncertainty is cut out of the outcome posterior. This is the same design choice made by BCF itself \citep{hahn2020bcf} and is deliberate, since feeding the propensity score into a joint posterior would allow the flexible outcome model to distort the treatment model. The cost is a mild understatement of posterior uncertainty in settings with weak overlap, where the propensity model is itself poorly determined, and a sensitivity to misspecification of the propensity model that the credible intervals do not display.

Fifth, the estimator targets a contrast in the cumulative outcome as recorded and has \emph{no mechanism for standardizing exposure or follow-up time}. When the outcome accrues over time and the intervention alters how long patients remain on treatment, as in the SPECTRUM trial, treatment duration lies on the causal pathway and the estimand is the effect of assignment under the trial's own differential     duration. That is a coherent target, and under randomization it is identified, but it is not a fixed-horizon restricted mean and it is not a rate per unit of exposure. Section~\ref{subsec:applied_exposure} illustrates how substantially these formulations can diverge in practice. Although trial disposition records document patient-level safety follow-up and last-contact milestones, only 354 patients completed protocol-specified safety follow-up, and last contact does not ensure exhaustive event ascertainment. Extending the two-part framework to a fixed-horizon or multistate formulation therefore requires explicit structural choices regarding censoring mechanisms, competing risks, and intercurrent treatment policies, representing an important avenue for future methodological work on clinical safety endpoints.

Sixth, the sampler has \emph{no interface for survey weights, strata, or primary sampling units}. In the MEPS application the design is therefore handled outside the model: unit-level effect draws are averaged with the person weight to obtain population targets, and a design-based variance component from the sampling strata and primary sampling units is added to the posterior variance \citep{binder1983variances}. This is defensible because every reported quantity is a linear functional of unit-level effects, but it treats sampling and posterior uncertainty as approximately independent and does not let the design influence the fitted response surface itself. A weighted or design-informed prior would be the principled alternative and is not attempted here.

Finally, due to the \emph{extreme computational complexity of the enriched Dirichlet process benchmark (EDP+ZI)}, its inclusion in the main Monte Carlo simulation grid was not possible. EDP+ZI requires $O(N^3)$ matrix operations at each Gibbs sweep to update the Dirichlet process parameters, leading to a $\approx 90 \times$ computational slowdown compared to ZIC-BCF-Smear ($\approx 27.5$ hours versus $\approx 18$ minutes for a single standard fit at $N = 500$), as documented in Section~\ref{subsec:sim_design}. While this computational bottleneck is itself a practical argument for the forest-based approach, the absence of a direct empirical performance comparison against EDP+ZI across the sensitivity cells remains a limitation.

\section{Conclusion}
\label{sec:conclusion}

This paper introduced the Zero-Inflated Continuous Bayesian Causal Forest with Duan's Smearing (ZIC-BCF-Smear), a two-part nonparametric framework for estimating conditional and average treatment effects on semicontinuous outcomes. By decoupling the causal process into a Probit BCF hurdle stage and a Gaussian BCF intensity stage fit strictly on active units, it avoids the computational burden and data augmentation noise that degrade joint selection models; Subpopulation Propensity Adjustment addresses the selection bias inherent in conditioning on active participation; and a non-parametric smearing estimator maps predictions back to the response scale, bypassing the fragile parametric multipliers that cause scaling bias and numerical instability in traditional two-part models.

Comprehensive simulation studies show that ZIC-BCF-Smear matches or outperforms correctly specified parametric hurdle models without committing to any parametric family, and remains stable and accurate under nonlinear surfaces where those benchmarks diverge numerically and reverse the ordering of treatment effects. Two applied studies carry it onto real data. In the SPECTRUM randomized phase III head-and-neck cancer trial, assignment to panitumumab plus chemotherapy credibly increases severe-toxicity burden by about ten days per patient once concurrent adverse events are counted once, and the two-part decomposition places that effect mainly in the positive-intensity part rather than in whether the reconstructed severe-AE burden is positive. That effect is the effect of assignment under the trial's own differential treatment duration, which the study reports explicitly rather than leaving implicit. In the 2023 Medical Expenditure Panel Survey, where a mature applied literature supplies an external standard, the estimator recovers the documented dental-coverage gradients by race and ethnicity, age, and income directly on the expenditure scale, and does so under survey-weighted, design-aware inference, while a parametric Gamma hurdle benchmark fit to the same data reproduces them only on its participation margin.

Two extensions follow directly from the limitations. Heteroskedastic log-scale errors could be accommodated by modeling the residual variance nonparametrically as a separate regression forest, and the framework could be generalized to longitudinal or survival settings with right-censoring. The oncology application motivates the latter most directly, since a prespecified fixed-horizon restricted-mean or multistate formulation would sharpen the safety estimand by standardizing follow-up time and accommodating competing events.

\section*{Acknowledgments}
This publication is based on research using information obtained from www.projectdatasphere.org, which is maintained by Project Data Sphere. Neither Project Data Sphere nor the owner(s) of any information from the web site have contributed to, approved or are in any way responsible for the contents of this publication.

\section*{Data Availability}
The health-economics application uses the 2023 Medical Expenditure Panel Survey Full-Year Consolidated file (HC-251), a public release of the Agency for Healthcare Research and Quality \citep{meps2023file}. It is freely redistributable subject to two conditions of use that survive redistribution, namely that the data be used for statistical reporting and analysis only and that they not be linked with individually identifiable records from any source other than the Medical Expenditure Panel Survey and the National Health Interview Survey from which its sample is drawn. That arm of the study is fully reproducible: the analysis repository accompanying the \texttt{zicbcf} package \citep{zicbcf_package} contains a download-and-prepare script for the public file together with every analysis script and derived results table.

The oncology application uses individual-participant data from the SPECTRUM trial obtained through Project Data Sphere \citep{pds_headne_265}. Those data cannot be redistributed. Section III.B(ii) of the Project Data Sphere Platform Agreement prohibits disclosure to any third party who is not a registered Authorized User, and Article VIII requires that the data be held in strict confidence \citep{pds_platform_agreement}. This is a contractual confidentiality obligation rather than a patient-privacy one, so further de-identification does not make a per-patient file redistributable. Article VII permits disclosure of what is reasonably necessary to publish research results, which is the basis on which the aggregate estimates, tables, and figures reported here appear; per-patient derivatives, including patient-level endpoint files and unit-level posterior draws, are withheld. Access is granted per individual, so any reader wishing to reproduce this arm must register as an Authorized User and request the \texttt{HeadNe\_Amgen\_2007\_265} package directly.

\bibliographystyle{plainnat}
\bibliography{references}

\clearpage
\part*{Supplementary Material}

This supplementary part provides additional details, figures, and tables to support the main manuscript. Section~\ref{sec:supp_dgps} presents the closed-form response-scale potential outcomes and zero-inflation calibration schedules across the simulation designs. Section~\ref{sec:supp_mcmc} summarizes the MCMC algorithms, tree priors, hyperparameters, and diagnostic convergence checks. Section~\ref{sec:supp_timings} presents the details of the computational complexity and timing analysis of the EDP+ZI benchmark model. Section~\ref{sec:supp_full_results} provides the complete set of simulation results, including the full 5-model sensitivity figures displaying the diverged Gamma benchmark lines. Section~\ref{sec:supp_onc} is the self-contained oncology applied study, giving the outcome definitions, posterior analyses, model comparisons, literature positioning, and limitations for the SPECTRUM head-and-neck cancer trial that is summarized in the main manuscript. Section~\ref{sec:supp_meps} is the self-contained health-economics applied study, in which the causal effect of dental-insurance coverage on annual dental expenditure is estimated from the 2023 Medical Expenditure Panel Survey and every estimate is reconciled against the applied dental health-economics literature.

\setcounter{section}{0}
\renewcommand{\thesection}{S\arabic{section}}
\renewcommand{\theHsection}{S\arabic{section}}

\section{True Potential Outcomes and Zero-Inflation Calibration}
\label{sec:supp_dgps}

The baseline covariates, propensity score, hurdle models, and nonlinear prognostic and treatment surfaces are defined in Section~\ref{subsec:dgps} of the main text. This section records the exact closed-form potential-outcome means $\mu_0(X_i) = \mathbb{E}[Y_i(0) \mid X_i]$ and $\mu_1(X_i) = \mathbb{E}[Y_i(1) \mid X_i]$ on the natural response scale used to evaluate estimator accuracy, alongside the intercept calibration schedules that govern the zero-inflation sensitivity grid.

\subsection{Closed-Form Response-Scale Potential Outcomes}
\label{subsec:supp_potential_outcomes}

Under both the link-linear and link-nonlinear regimes, the true conditional average treatment effect on the natural response scale is $\tau(X_i) = \mu_1(X_i) - \mu_0(X_i)$.

\paragraph{DGP A (Log-Normal Hurdle).}
In DGP A, the positive outcome conditional on participation follows a log-normal distribution with log-scale variance $\sigma^2 = 0.25$ ($\sigma = 0.50$). By the properties of the log-normal distribution, the conditional mean of the positive component incorporates the expectation factor $\exp(\sigma^2 / 2) = \exp(0.125)$. Multiplying by the participation probability $p_z(X_i) = P(Y_i(z) > 0 \mid X_i)$, the true response-scale potential outcomes are:
\begin{equation}
\mu_z(X_i) = p_z(X_i) \cdot \exp\!\left( \ell^*(X_i, z) + 0.125 \right), \quad z \in \{0, 1\}.
\end{equation}
Under the link-linear specification, this yields:
\begin{align}
\mu_0(X_i) &= \Phi(c + 0.5 X_{i1} - 0.3 X_{i3}) \cdot \exp(1.625 + 0.8 X_{i2} + 0.4 X_{i4}), \\
\mu_1(X_i) &= \Phi(c + 0.2 + 0.6 X_{i1} - 0.3 X_{i3}) \cdot \exp(1.875 + 0.65 X_{i2} + 0.4 X_{i4}).
\end{align}
Under the link-nonlinear specification, using the nonlinear surfaces $\mu_b^*, \tau_b^*, \mu_c^*, \tau_c^*$ defined in the main text, the potential outcomes take the form:
\begin{align}
\mu_0(X_i) &= \Phi(c + \mu_b^*(X_i)) \cdot \exp(\mu_c^*(X_i) + 0.125), \\
\mu_1(X_i) &= \Phi(c + \mu_b^*(X_i) + \tau_b^*(X_i)) \cdot \exp(\mu_c^*(X_i) + \tau_c^*(X_i) + 0.125).
\end{align}

\paragraph{DGP B (Gamma Hurdle).}
In DGP B, the positive outcome conditional on participation follows a Gamma distribution with shape parameter $\alpha = 2.0$ and conditional scale parameter $\exp(\ell^*(X_i, z)) / \alpha$. Because the expectation of a Gamma variate is the product of its shape and scale parameters, $\mathbb{E}[Y_i^+(z) \mid X_i] = \exp(\ell^*(X_i, z))$ without an additive variance correction. The response-scale potential outcomes are therefore:
\begin{equation}
\mu_z(X_i) = p_z(X_i) \cdot \exp\!\left( \ell^*(X_i, z) \right), \quad z \in \{0, 1\}.
\end{equation}
Under the link-linear specification, this evaluates to:
\begin{align}
\mu_0(X_i) &= \Phi(c + 0.5 X_{i1} - 0.3 X_{i3}) \cdot \exp(1.50 + 0.8 X_{i2} + 0.4 X_{i4}), \\
\mu_1(X_i) &= \Phi(c + 0.2 + 0.6 X_{i1} - 0.3 X_{i3}) \cdot \exp(1.75 + 0.65 X_{i2} + 0.4 X_{i4}).
\end{align}
Under the link-nonlinear specification, the expressions are:
\begin{align}
\mu_0(X_i) &= \Phi(c + \mu_b^*(X_i)) \cdot \exp(\mu_c^*(X_i)), \\
\mu_1(X_i) &= \Phi(c + \mu_b^*(X_i) + \tau_b^*(X_i)) \cdot \exp(\mu_c^*(X_i) + \tau_c^*(X_i)).
\end{align}

\paragraph{DGP C (Compound Poisson--Gamma Tweedie).}
In DGP C, zeros and positive continuous outcomes are generated jointly from an intrinsic Tweedie dispersion model with power parameter $p = 1.5$ and dispersion $\phi = 1.5$. The mean parameter directly governs the marginal expectation on the natural response scale, so $\mu_z(X_i) = \exp(\log \mu_i(z))$. Under the link-linear specification:
\begin{align}
\mu_0(X_i) &= \exp(1.2 + c + 0.8 X_{i1} - 0.4 X_{i3}), \\
\mu_1(X_i) &= \exp(1.5 + c + 0.95 X_{i1} - 0.4 X_{i3}).
\end{align}
Under the link-nonlinear specification, with nonlinear surfaces $\mu_T^*$ and $\tau_T^*$ defined in the main text:
\begin{align}
\mu_0(X_i) &= \exp(c + \mu_T^*(X_i)), \\
\mu_1(X_i) &= \exp(c + \mu_T^*(X_i) + \tau_T^*(X_i)).
\end{align}

\subsection{Zero-Inflation Intercept Calibration Schedules}
\label{subsec:supp_zi_calibration}

In the sensitivity studies reported in Section~\ref{subsec:sensitivity} of the main text, the degree of zero-inflation is systematically varied across five target levels while keeping the underlying prognostic functions and treatment effects fixed. For DGPs A and B, shifting the hurdle intercept $c$ adjusts the marginal probability of participation $\mathbb{E}[\Phi(c + \mu_b^*(X_i) + Z_i \tau_b^*(X_i))]$. For DGP C, shifting the log-mean intercept shifts the compound Poisson rate parameter $\lambda = \mu^{2-p} / ((2-p)\phi)$, calibrating the zero mass $P(Y_i = 0) = \exp(-\lambda)$. Table~\ref{tab:supp_zi_calibration} reports the exact calibration intercepts and empirical zero proportions across all five sensitivity levels.

\begin{table}[htbp]
\centering
\caption{Intercept calibration parameter $c$ and empirical zero proportions across the five zero-inflation sensitivity levels in the simulation studies.}
\label{tab:supp_zi_calibration}
\resizebox{\textwidth}{!}{%
\begin{tabular}{lccccc}
\toprule
& & \multicolumn{2}{c}{DGPs A \& B (Hurdle)} & \multicolumn{2}{c}{DGP C (Tweedie)} \\
\cmidrule(lr){3-4} \cmidrule(lr){5-6}
Configuration & Target Zeros & Link-Linear $c$ & Nonlinear $c$ & Link-Linear $c$ & Nonlinear $c$ \\
\midrule
Level 1 (Severe zero-inflation) & $\approx 85\%$ & $-1.500$ & $-1.367$ & $-3.500$ & $-5.648$ \\
Level 2 (High zero-inflation)   & $\approx 60\%$ & $-0.500$ & $-0.465$ & $-2.000$ & $-3.315$ \\
Level 3 (Standard configuration)& $\approx 40\%$ & $+0.200$ & $+0.200$ & $\phantom{+}0.000$ & $-2.072$ \\
Level 4 (Moderate zero-inflation)& $\approx 15\%$ & $+1.000$ & $+1.005$ & $\phantom{+}0.000$ & $-0.163$ \\
Level 5 (Low zero-inflation)    & $\approx 5\%$  & $+1.800$ & $+1.783$ & $+1.000$ & $+1.131$ \\
\bottomrule
\end{tabular}%
}
\end{table}

\section{MCMC Details \& Diagnostics}
\label{sec:supp_mcmc}

\subsection{Proof of Lemma 3.3 (Identification on the Active Subpopulation)}
\label{subsec:supp_lemma_proof}

The main manuscript states Lemma 3.3 in compressed form. The full statement and its proof are given here.

\begin{lemma*}[Identification on the active subpopulation]
Let $L_i(z) = \log Y_i^+(z)$ for $z \in \{0,1\}$, where both potential positive intensities are defined on the active subpopulation. Suppose $P(I_i=1)>0$ and $E[|L_i(z)|\mid I_i=1]<\infty$ for both treatment arms, and define the active-subpopulation log-scale contrast
\begin{equation}
\tau_c^I(x) = E\!\left[L_i(1) - L_i(0) \mid X_i = x,\ I_i = 1\right].
\end{equation}
Suppose that (i) Assumption 3.1 of the main manuscript holds, (ii) consistency holds on the active subpopulation, that is, $\log Y_i^+ = L_i(Z_i)$ whenever $I_i = 1$, and (iii) there exists a constant $\delta_+ > 0$ such that
\begin{equation}
\delta_+ < \pi^+(x) = P(Z_i = 1 \mid X_i = x,\ I_i = 1) < 1 - \delta_+
\end{equation}
for $P_{X_i\mid I_i=1}$-almost every $x$. Then, for $P_{X_i\mid I_i=1}$-almost every $x$,
\begin{equation}
\tau_c^I(x) = E[\log Y_i^+ \mid X_i = x, Z_i = 1, I_i = 1] - E[\log Y_i^+ \mid X_i = x, Z_i = 0, I_i = 1],
\label{eq:supp_spa_identification}
\end{equation}
so $\tau_c^I(x)$ is identified, up to $P_{X_i\mid I_i=1}$-null sets, from the observed active-subpopulation distribution. If the continuous-stage conditional mean model is correctly specified on the original log-outcome scale, its treatment function $\tau_c(x)$ equals this identified contrast.
\end{lemma*}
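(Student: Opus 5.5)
The argument is the standard adjustment-formula identification, carried out entirely under the active-subpopulation law $P(\cdot \mid I_i=1)$. Concretely, define $Q(\cdot)=P(\cdot\mid I_i=1)$. Assumption 3.1 then says that $(L_i(0),L_i(1))\independent Z_i \mid X_i$ under $Q$, since the $L_i(z)$ are measurable functions of the $Y_i^+(z)$. Conditional expectations given $(X_i,Z_i)$ under $Q$ are understood as regular versions. Their existence is guaranteed by the integrability hypothesis $E[|L_i(z)|\mid I_i=1]<\infty$.

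The first step uses overlap (iii). For $Q_X$-almost every $x$, $Q(Z_i=z\mid X_i=x)$ is bounded below by $\delta_+>0$ for both $z$. Hence the conditional expectations $E_Q[\cdot\mid X_i=x,Z_i=z]$ are well defined, up to $Q_X$-null sets, as functions of $x$, and no division by zero or conditioning on a null event arises.

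The second step fixes $z$ and chains three equalities. By consistency (ii), on $\{Z_i=z\}$ we have $\log Y_i^+=L_i(z)$, so
\[
E_Q[\log Y_i^+\mid X_i,Z_i=z]=E_Q[L_i(z)\mid X_i,Z_i=z].
\]
Ignorability under $Q$ then drops the conditioning on $Z_i$:
\[
E_Q[L_i(z)\mid X_i,Z_i=z]=E_Q[L_i(z)\mid X_i]\quad Q\text{-a.s.}
\]
Both conditional expectations are finite by integrability. Subtracting the $z=0$ identity from the $z=1$ identity and using linearity of conditional expectation gives \eqref{eq:supp_spa_identification} for $Q_X$-almost every $x$.

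The right-hand side depends only on the joint law of the observed $(X_i,Z_i,\log Y_i^+)$ restricted to $I_i=1$. This law is observable, because $I_i=\mathbb{I}(Y_i>0)$ is observed. That establishes identification up to $P_{X_i\mid I_i=1}$-null sets.

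The last step concerns the fitted model. Correct specification means that $E[\log Y_i^+\mid X_i=x,Z_i=z,I_i=1]=\mu_c(x,\pi^+(x))+z\,\tau_c(x)$ almost everywhere. Differencing the $z=1$ and $z=0$ cases cancels $\mu_c$, leaving $\tau_c(x)$. Because $\pi^+(x)$ is itself a function of $x$, including it as an input does not change this conclusion.

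The main obstacle is measure-theoretic rather than substantive. The conditional-independence step must be stated at the level of regular conditional distributions under $Q$, and overlap is needed precisely so that $E_Q[\cdot\mid X_i=x,Z_i=z]$ is pinned down on a set of full $Q_X$-measure for both arms simultaneously. I would handle this by proving the identity in integrated form: show that $E_Q[\mathbf 1_A(X_i)\mathbf 1\{Z_i=z\}\log Y_i^+]=E_Q[\mathbf 1_A(X_i)\,\pi_z^+(X_i)\,m_z(X_i)]$ for all Borel sets $A$, where $m_z(x)=E_Q[L_i(z)\mid X_i=x]$ and $\pi_z^+$ denotes $Q(Z_i=z\mid X_i=\cdot)$. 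Then divide by $\pi_z^+$, which is bounded away from zero.
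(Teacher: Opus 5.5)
Your proposal is correct and follows essentially the same route as the paper: you work under the law given $I_i=1$, use consistency and conditional independence to show that each observed arm mean $E[\log Y_i^+\mid X_i=x,Z_i=z,I_i=1]$ equals $m_z(x)$ on a common full-measure set secured by overlap (your integrated identity followed by division by $\pi_z^+$ is precisely the paper's computation $e_z(x)m_z(x)/e_z(x)$), and then difference the correctly specified mean model so that $\mu_c$ cancels. The only cosmetic difference is that the paper lets the prognostic input be any fixed function $q(x)$, covering the plug-in $\widehat{\pi}^+$ as well as $\pi^+$, and your argument handles that case verbatim.
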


\begin{proof}
Work under the conditional distribution given $I_i=1$, which exists because $P(I_i=1)>0$. Since the logarithm is a measurable function on $(0,\infty)$, Assumption 3.1 implies
\begin{equation}
(L_i(0),L_i(1))\independent Z_i\mid X_i,I_i=1.
\end{equation}
For $z\in\{0,1\}$, write
\begin{equation}
m_z(x)=E[L_i(z)\mid X_i=x,I_i=1],\qquad
e_z(x)=P(Z_i=z\mid X_i=x,I_i=1).
\end{equation}
The moment assumption makes $m_0(x)$ and $m_1(x)$ finite almost everywhere. Take a common set of full $P_{X_i\mid I_i=1}$-measure on which these moments, conditional independence, and overlap hold. On this set, $e_1(x)=\pi^+(x)>\delta_+$ and $e_0(x)=1-\pi^+(x)>\delta_+$. Thus both arms have positive conditional probability, and the arm-specific covariate distributions have the same null sets as $P_{X_i\mid I_i=1}$.

For either arm, consistency on $\{Z_i=z,I_i=1\}$ and conditional independence give
\begin{equation}
\begin{aligned}
m_z^{\mathrm{obs}}(x)
&\equiv E[\log Y_i^+\mid X_i=x,Z_i=z,I_i=1] \\
&=\frac{E[\mathbb{I}(Z_i=z)L_i(z)\mid X_i=x,I_i=1]}{e_z(x)} \\
&=\frac{e_z(x)m_z(x)}{e_z(x)}=m_z(x).
\end{aligned}
\end{equation}
The first equality uses consistency and conditioning on an event of conditional probability $e_z(x)>0$; the second uses conditional independence. The same calculation with $|L_i(z)|$ shows that both observed arm means are finite. Consequently, linearity of conditional expectation yields
\begin{equation}
\begin{aligned}
\tau_c^I(x)
&=m_1(x)-m_0(x) \\
&=m_1^{\mathrm{obs}}(x)-m_0^{\mathrm{obs}}(x),
\end{aligned}
\end{equation}
which is \eqref{eq:supp_spa_identification}. Each observed mean is determined by the active-subpopulation law, so this identifies the contrast almost everywhere. Conditional distributions at covariate values of zero probability can be changed without changing the observed law; identification at every point of the support would require additional regularity.

Finally, let $q(x)$ denote a fixed propensity-score input to the prognostic function, such as the population score $\pi^+(x)$ or a fixed estimate $\widehat{\pi}^+(x)$. Under a correctly specified continuous-stage conditional mean model on the original log-outcome scale,
\begin{equation}
m_z^{\mathrm{obs}}(x)=\mu_c(x,q(x))+z\tau_c(x).
\end{equation}
The prognostic term is the same for both arms at a fixed $x$, so subtracting the control mean from the treated mean gives $\tau_c(x)=\tau_c^I(x)$ almost everywhere.
\end{proof}

\begin{remark*}
Applying Theorem 1 of \citet{rosenbaumrubin1983propensity} to the conditional distribution given $I_i = 1$ gives $X_i\independent Z_i\mid\pi^+(X_i),I_i=1$, so $\pi^+(X_i)$ is a balancing score within the active subpopulation. Under Assumption 3.1 and active-subpopulation overlap, their Theorem 3 then gives $(Y_i^+(0),Y_i^+(1))\independent Z_i\mid\pi^+(X_i),I_i=1$. Because $\pi^+(X_i)$ is a function of $X_i$, however, supplying it to the prognostic forest $\mu_c(\cdot)$ changes the model's regularization without adding any identifying condition beyond (i)--(iii). In particular it does not substitute for Assumption 3.1, for active-subpopulation overlap, or for adjustment on $X_i$ itself when the target contrast is heterogeneous in $X_i$.
\end{remark*}

\subsection{Posterior Sampling Details}
The posterior distribution of the ZIC-BCF-Smear model is obtained from a single Gibbs sampler in which the hurdle block and the intensity block are updated in turn within each sweep, and are written to a common draw index after burn-in. The two blocks are as follows.
\begin{enumerate}
    \item \textbf{Hurdle Stage (Probit BCF):} Let $I_i = \mathbb{I}(Y_i > 0)$ be the binary response indicator. Latent Gaussian utility variables $I_i^* \sim \mathcal{N}(\mu_{b}(X_i, \widehat{\pi}_i) + Z_i \tau_b(X_i), 1)$ are introduced such that $I_i = \mathbb{I}(I_i^* > 0)$. Using the data augmentation algorithm of \citet{albert1993bayesian}, the latent variables $I_i^*$ are sampled from truncated normal distributions:
    \begin{align}
    (I_i^* \mid I_i = 1) &\sim \mathcal{N}\left(\mu_{b}(X_i, \widehat{\pi}_i) + Z_i \tau_b(X_i), 1\right) \text{ truncated to } (0, \infty) \\
    (I_i^* \mid I_i = 0) &\sim \mathcal{N}\left(\mu_{b}(X_i, \widehat{\pi}_i) + Z_i \tau_b(X_i), 1\right) \text{ truncated to } (-\infty, 0]
    \end{align}
    Given the latent outcomes $I_i^*$, the regression forests $\mu_b(\cdot)$ and $\tau_b(\cdot)$ are updated using standard Bayesian backfitting updates for BART \citep{chipman2010bart}.
    
    \item \textbf{Continuous Stage (Gaussian BCF with SPA):} On the active subpopulation $\{i \mid Y_i > 0\}$, the log outcomes $y_{i,\text{active}} = \log(Y_i^+)$ are computed and standardized to $y^*_{i,\text{active}}$. The continuous BCF model is fit on these standardized outcomes, conditioning on the active-set propensity score $\widehat{\pi}_i^+$. Updates for the forests $\mu_c(\cdot)$, $\tau_c(\cdot)$, and the residual variance $\sigma_c^2$ are executed using Gaussian backfitting steps.
\end{enumerate}

\subsubsection{Exactness of the Block-Wise Sweep}
\label{subsec:supp_joint_mcmc}

Updating the two blocks one after the other within a sweep, rather than through a joint proposal, is not an approximation. The two-part likelihood factorizes exactly,
\begin{equation}
p\!\left(\{Y_i\} \mid \theta_b, \theta_c\right) = \underbrace{\prod_{i=1}^{N} \Phi(\mu_{bi} + Z_i \tau_{bi})^{I_i}\left[1 - \Phi(\mu_{bi} + Z_i \tau_{bi})\right]^{1 - I_i}}_{\text{participation, all } N \text{ units}} \; \times \underbrace{\prod_{i : I_i = 1} p\!\left(Y_i^+ \mid \theta_c\right)}_{\text{intensity, active units only}},
\label{eq:supp_factorization}
\end{equation}
because the participation indicators $\{I_i\}$ are a deterministic function of the observed data and the two blocks share no parameter. Combined with independent BART priors on $\theta_b = (\mu_b, \tau_b)$ and $\theta_c = (\mu_c, \tau_c, \sigma_c^2)$, \eqref{eq:supp_factorization} gives $p(\theta_b, \theta_c \mid \text{data}) = p(\theta_b \mid \text{data})\, p(\theta_c \mid \text{data})$. The blocks are therefore \emph{a posteriori} independent, the block-wise sweep is an exact Gibbs sampler for the joint posterior, and nothing is lost relative to a hypothetical joint update.

Two practical consequences follow. First, there is nothing to gain from coupling the blocks more tightly than the sampler already does. Second, there is something to lose from decoupling them further: running two separate chains and pairing their draws arbitrarily would be valid only because of the same factorization, while forfeiting the shared random state and the shared convergence diagnostics that make the composite functional $\tau_{\text{resp}}(X_i)$, a product of a participation probability and an intensity, straightforward to summarize.

The one place where an approximation genuinely enters is the propensity scores. The full-sample $\widehat{\pi}_i$ and the subpopulation $\widehat{\pi}_i^+$ are estimated once by logistic regression before the sampler starts and are then held fixed as covariates, so their estimation uncertainty is not propagated into the outcome posterior. This is a deliberate cut, and it is the same cut made by BCF itself \citep{hahn2020bcf}: feeding the propensity score back into a joint posterior would let the flexible outcome model distort the treatment model, and the plug-in estimate is what makes $\widehat{\pi}_i$ a stable basis function for the prognostic ensemble rather than a parameter competing with it. The cost is a mild understatement of posterior uncertainty in settings with weak overlap, where the propensity model is itself poorly determined.

\subsection{Complete Model Fitting Algorithm}
\label{subsec:supp_algorithm}
For reference, the full ZIC-BCF-Smear fitting and re-transformation procedure summarized in the main manuscript is stated below as pseudocode.

\begin{table}[htbp]
\centering
\begin{minipage}{0.95\textwidth}
\hrule
\vspace{1ex}
\textbf{Algorithm 1: ZIC-BCF-Smear MCMC and Smearing Retransformation}
\vspace{1ex}
\hrule
\vspace{1ex}
\textbf{Input:} Semicontinuous outcomes $Y_i \ge 0$, binary treatments $Z_i \in \{0, 1\}$, design matrices $X_i$. \\
\textbf{Output:} Posterior draws of response-scale potential outcomes $\mu_0^{(s)}(X_i)$, $\mu_1^{(s)}(X_i)$, and CATE $\tau_{\text{resp}}^{(s)}(X_i)$.
\begin{enumerate}
\item Fit the full-sample propensity score $\widehat{\pi}_i = P(Z_i = 1 \mid X_i)$.
\item Identify the active subpopulation $\{i : Y_i > 0\}$. Fit the subpopulation propensity score $\widehat{\pi}_i^+ = P(Z_i = 1 \mid X_i, Y_i > 0)$ on this subset.
\item Run the Probit BCF model on the full sample to obtain $S$ posterior MCMC draws of the latent hurdle functions $\mu_{bi}^{(s)}$ and $\tau_{bi}^{(s)}$.
\item Run the Gaussian BCF model on the active subset for the log-outcomes $\log(Y_i^+)$ to obtain $S$ posterior MCMC draws of the continuous intensity functions $\mu_{ci}^{(s)}$, $\tau_{ci}^{(s)}$, and log-residual variance $\sigma_c^{(s)}$.
\item For each saved posterior MCMC draw $s \in \{1, \dots, S\}$:
\begin{enumerate}
    \item Compute active log-scale residuals: $e_j^{(s)} = \log(Y_j^+) - [\mu_{cj}^{(s)} + Z_j \tau_{cj}^{(s)}]$ for all $j \in \text{active}$.
    \item Compute the smearing factor: $\phi_{\text{smear}}^{(s)} = \frac{1}{N_{\text{active}}} \sum_{j \in \text{active}} \exp(e_j^{(s)})$.
    \item Re-transform predictions for all $N$ units to the response scale:
    \begin{align*}
    \mu_0^{(s)}(X_i) &= \Phi(\mu_{bi}^{(s)}) \cdot \exp(\mu_{ci}^{(s)}) \cdot \phi_{\text{smear}}^{(s)} \\
    \mu_1^{(s)}(X_i) &= \Phi(\mu_{bi}^{(s)} + \tau_{bi}^{(s)}) \cdot \exp(\mu_{ci}^{(s)} + \tau_{ci}^{(s)}) \cdot \phi_{\text{smear}}^{(s)}
    \end{align*}
    \item Compute CATE draws: $\tau_{\text{resp}}^{(s)}(X_i) = \mu_1^{(s)}(X_i) - \mu_0^{(s)}(X_i)$.
\end{enumerate}
\end{enumerate}
\hrule
\end{minipage}
\end{table}

\subsection{Tree Prior Hyperparameter Choices}
Hyperparameters are set to standard default values in the literature \citep{hahn2020bcf}.
\begin{enumerate}
    \item \textbf{Forest Sizes:} The prognostic forests ($\mu_b, \mu_c$) are allocated $T_{\text{control}} = 200$ trees. The treatment-moderating forests ($\tau_b, \tau_c$) are allocated $T_{\text{moderate}} = 50$ trees, regularizing treatment effects to prevent overfitting.
    \item \textbf{Tree Structure Prior:} For any node at depth $d$, the probability of splitting is given by $\alpha(1 + d)^{-\beta}$. The settings are:
    \begin{enumerate}
        \item Prognostic forests: $\alpha_c = 0.95, \beta_c = 2.0$.
        \item Moderating forests: $\alpha_m = 0.25, \beta_m = 3.0$.
    \end{enumerate}
    The much smaller $\alpha_m$ and larger $\beta_m$ heavily penalize deep trees for the treatment effect, reflecting the prior that treatment effect heterogeneity is smooth and low-dimensional.
    \item \textbf{Leaf Parameters Scale Prior:} The leaf parameters are assigned normal priors $\mathcal{N}(0, \sigma^2_{\text{leaf}})$.
    \begin{enumerate}
        \item Hurdle stage: the prior medians of the cumulative forest scales are set to $\sigma_{\mu_b} = 2.0$ (latent scale) and $\sigma_{\tau_b} = 1.0$.
        \item Continuous stage: prior medians are scaled data-adaptively: $\sigma_{\mu_c} = 2.0 \cdot \text{sd}(\log(Y_{\text{active}}))$ and $\sigma_{\tau_c} = 0.25 \cdot \text{sd}(\log(Y_{\text{active}})) / \text{sd}(Z_{\text{active}})$.
    \end{enumerate}
    \item \textbf{Variance Prior:} The continuous error variance is assigned a conjugate scaled inverse chi-squared prior $\sigma_c^2 \sim \text{Inv-}\chi^2(\nu, \lambda)$ with degrees of freedom $\nu = 3$. The scale parameter $\lambda$ is calibrated such that the prior 90th percentile of $\sigma_c$ matches the residual standard deviation from a linear model fit on the active subset.
\end{enumerate}

\subsection{MCMC Convergence Diagnostics}
\label{subsec:supp_convergence}

Convergence is assessed rather than assumed, and the assessment is part of the software rather than a separate manual exercise. The \texttt{zicbcf} package exports \nolinkurl{zicbcf_convergence()}, which takes one or more fitted chains and returns, for every monitored functional, the Gelman--Rubin potential scale reduction factor $\widehat{R}$ \citep{gelman1992inference}, the effective sample size, the Monte Carlo standard error, and the Geweke spectral-density statistic comparing the first tenth of each chain with its last half \citep{geweke1992evaluating}. All three are obtained from \texttt{coda} \citep{plummer2006coda}. When a single chain is supplied, $\widehat{R}$ is the split-chain variant, so within-chain drift is still detectable; when several chains are supplied it is the split version of the across-chain statistic. A companion function, \nolinkurl{zicbcf_convergence_table()}, applies the same three diagnostics to any set of posterior draws, and is used for the Gamma-hurdle benchmark, whose fits are not \texttt{zicbcf} objects.

The monitored set is deliberately not restricted to the average treatment effect, because a two-part model can mix well on an aggregate while mixing poorly on a component. It comprises 1. the response-scale ATE, 2. the participation-margin ATE, 3. Duan's smearing factor, 4. the log-scale residual standard deviation $\sigma_c$, 5. the cross-unit means of the four forest surfaces $\mu_b$, $\tau_b$, $\mu_c$, and $\tau_c$, and 6. ten unit-level CATEs chosen at evenly spaced quantiles of the posterior-mean effect distribution, so that the monitored units span the estimated effect surface rather than an arbitrary corner of it.

\subsubsection{Simulation Grid}
Diagnostics for the simulation study are produced by \path{simulation_studies/run_convergence_diagnostics.R}, which fits four independent chains of $1{,}000$ retained draws after $1{,}000$ burn-in iterations to five replicates of each of the six data-generating processes at $N = 500$, and writes the per-quantity table together with a per-DGP summary. Table~\ref{tab:supp_sim_convergence} reports the summary.

\begin{table}[htbp]
\centering
\caption{Convergence diagnostics for the simulation grid: four chains of 1,000 retained draws on each of five replicates per data-generating process at $N=500$. Global summaries are the eight scalar functionals listed above; unit-level CATEs are ten units spanning the effect surface. The Geweke column is the largest absolute value over all chains and quantities in the cell.}
\label{tab:supp_sim_convergence}
\resizebox{\textwidth}{!}{%
\begin{tabular}{llrrrrrr}
\toprule
Data-generating process & Monitored set & Median $\widehat{R}$ & 95th pct.\ $\widehat{R}$ & Max $\widehat{R}$ & Median ESS & Min ESS & Max $|z_{\mathrm{Geweke}}|$ \\
\midrule
\multirow{2}{*}{Linear DGP A (log-normal hurdle)} & Global summaries & 1.004 & 1.014 & 1.017 & 928 & 449 & 4.05 \\
 & Unit-level CATEs & 1.005 & 1.012 & 1.048 & 1{,}125 & 508 & 3.68 \\
\multirow{2}{*}{Linear DGP B (Gamma hurdle)} & Global summaries & 1.002 & 1.007 & 1.010 & 1{,}124 & 444 & 2.57 \\
 & Unit-level CATEs & 1.003 & 1.010 & 1.022 & 1{,}563 & 693 & 3.53 \\
\multirow{2}{*}{Linear DGP C (Tweedie)} & Global summaries & 1.004 & 1.038 & 1.118 & 1{,}113 & 203 & 4.08 \\
 & Unit-level CATEs & 1.004 & 1.011 & 1.030 & 1{,}489 & 869 & 4.91 \\
\multirow{2}{*}{Nonlinear DGP A (log-normal hurdle)} & Global summaries & 1.008 & 1.050 & 1.131 & 668 & 159 & 7.06 \\
 & Unit-level CATEs & 1.009 & 1.027 & 1.074 & 890 & 396 & 4.99 \\
\multirow{2}{*}{Nonlinear DGP B (Gamma hurdle)} & Global summaries & 1.003 & 1.012 & 1.015 & 888 & 451 & 4.20 \\
 & Unit-level CATEs & 1.006 & 1.015 & 1.026 & 1{,}398 & 532 & 4.13 \\
\multirow{2}{*}{Nonlinear DGP C (Tweedie)} & Global summaries & 1.005 & 1.074 & 1.162 & 1{,}086 & 161 & 4.46 \\
 & Unit-level CATEs & 1.004 & 1.009 & 1.016 & 1{,}646 & 966 & 3.43 \\
\bottomrule
\end{tabular}
}
\end{table}

The picture Table~\ref{tab:supp_sim_convergence} gives is not uniformly clean, and it should not be reported as though it were. Median $\widehat{R}$ never exceeds $1.009$ in any cell, and in the three link-linear cells and the nonlinear Gamma cell the 95th percentile stays below $1.015$. The two hardest cells, nonlinear DGP A and nonlinear DGP C, carry a small tail of poorly mixed quantities: the 95th percentile of $\widehat{R}$ reaches $1.050$ and $1.074$ and the maximum reaches $1.131$ and $1.162$, with effective sample sizes as low as 159 out of 4,000 pooled draws. Those values attach to global forest-scale summaries rather than to unit-level effects, and they occur in exactly the link-nonlinear configurations that the main manuscript identifies as most demanding. The practical reading is that a budget of 1,000 retained draws is adequate for the reported Monte Carlo summaries, whose accuracy depends on averaging over 100 replicates rather than on any single chain, but is not generous for a single applied fit. That is why both applied studies use four chains and a larger per-chain budget, and report $\widehat{R}$ below $1.01$ throughout.

\subsubsection{Applied Studies}
For the final SPECTRUM fits, the summed-duration endpoint has maximum $\widehat{R}=1.001$, minimum effective sample size $2{,}670$, and maximum absolute Geweke statistic $2.03$ for ZIC-BCF-Smear, while the Gamma hurdle has corresponding values $1.000$, $16{,}000$, and $1.42$. On the distinct-calendar-day endpoint, the corresponding values are $1.002$, $2{,}777$, and $2.21$ for ZIC-BCF-Smear and $1.000$, $16{,}000$, and $1.42$ for the Gamma hurdle. All of these sit well inside the calibrated null ranges at the multiplicity that the monitored set implies, and the final ZIC-BCF-Smear fits use the 5,000-iteration burn-in motivated in Section~\ref{sec:supp_models} (the originally reported 2,000-iteration configuration produced largest statistics of $3.67$ on the summed-duration endpoint and $4.90$ on the distinct-calendar-day endpoint, both on $\sigma_c$). The MEPS fits use the same 5,000-iteration burn-in, and their diagnostics are reported below.

\subsection{Homoskedasticity of the Log-Scale Errors}
\label{subsec:supp_smearing_test}

Duan's re-transformation replaces the parametric multiplier $\exp(\sigma_c^2/2)$ with the non-parametric factor
\begin{equation}
\widehat{\phi}^{(s)} = \frac{1}{n_{+}}\sum_{i \in \mathcal{A}} \exp\!\left(\widehat{\varepsilon}_i^{(s)}\right),
\qquad
\widehat{\varepsilon}_i^{(s)} = \log Y_i - \mu_c^{(s)}(X_i) - Z_i \tau_c^{(s)}(X_i),
\label{eq:supp_smearing_factor}
\end{equation}
one scalar per retained draw, applied identically to both counterfactual arms. Equation~\eqref{eq:supp_smearing_factor} is consistent for the response-scale mean when the log-scale errors are independent of the covariates. Under heteroskedasticity the correct multiplier varies with $x$, and because both potential-outcome means are multiplied by the same scalar, every response-scale estimate scales linearly in it. The assumption is therefore consequential and is tested.

The package exports \nolinkurl{zicbcf_smearing_diagnostics()}, which returns three things. First, a studentized Breusch--Pagan test \citep{breusch1979simple, koenker1981note} regressing the squared log-scale residuals on the fitted log-scale mean and on the covariates; the studentized form is used because the raw Breusch--Pagan statistic assumes normal errors, which is precisely what a smearing re-transformation exists to avoid assuming. Second, the same test recomputed inside each of a thinned set of retained draws, so the evidence is summarized by a posterior distribution of $p$-values rather than by a single plug-in number. Third, and most useful in practice, a locally smeared sensitivity analysis that replaces the single scalar with stratum-specific factors over quantiles of the fitted log-scale mean and recomputes the average treatment effect on that basis. When the locally smeared estimate is close to the scalar-smeared benchmark, homoskedasticity violations do not materially distort the reported effect regardless of formal test significance.

Table~\ref{tab:supp_sim_smearing} applies this to the simulation grid, where the truth is known. DGP A generates log-normal errors with a constant log-scale variance and therefore acts as a negative control; the Gamma and Tweedie generators do not, and act as positive ones.

\begin{table}[htbp]
\centering
\caption{Homoskedasticity of the log-scale errors across the simulation grid, five replicates per data-generating process at $N=500$. The residual-scale ratio is the largest divided by the smallest residual standard deviation across five strata of the fitted log-scale mean. The relative change is the locally smeared average treatment effect divided by the scalar-smeared one, minus one.}
\label{tab:supp_sim_smearing}
\resizebox{\textwidth}{!}{%
\begin{tabular}{lrrrrr}
\toprule
Data-generating process & Median BP $p$ & Rejected at $5\%$ & Residual-scale ratio & Median relative change & Max $|$relative change$|$ \\
\midrule
Linear DGP A (log-normal hurdle) & 0.194 & 1 of 5 & 1.23 & $+0.9\%$ & $3.1\%$ \\
Linear DGP B (Gamma hurdle) & 0.026 & 3 of 5 & 1.21 & $-4.4\%$ & $8.5\%$ \\
Linear DGP C (Tweedie) & $2.9 \times 10^{-7}$ & 5 of 5 & 2.12 & $-17.7\%$ & $19.5\%$ \\
Nonlinear DGP A (log-normal hurdle) & 0.313 & 0 of 5 & 1.21 & $+1.3\%$ & $2.6\%$ \\
Nonlinear DGP B (Gamma hurdle) & 0.242 & 1 of 5 & 1.27 & $+2.5\%$ & $7.7\%$ \\
Nonlinear DGP C (Tweedie) & $7.0 \times 10^{-4}$ & 5 of 5 & 1.54 & $-25.1\%$ & $32.5\%$ \\
\bottomrule
\end{tabular}
}
\end{table}

The pattern is the one the diagnostic is meant to produce. Where homoskedasticity holds by construction the test does not reject and the locally smeared estimate is within a few percent of the scalar-smeared one. Where it fails, the test rejects decisively and the locally smeared estimate moves materially, which is the quantity a reader actually needs in order to judge how much the assumption is doing. The applied results are reported with their studies, in Sections~\ref{sec:supp_onc} and~\ref{sec:supp_meps}.

\section{EDP+ZI Timing Analysis}
\label{sec:supp_timings}

The Enriched Dirichlet Process Zero-Inflated (EDP+ZI) model \citep{kim2024edpzi} is a flexible Bayesian nonparametric benchmark that models semicontinuous data by placing a Dirichlet process mixture prior over the joint distribution of hurdle and continuous intensity coefficients. However, its high computational complexity limits its practical utility for large datasets or simulation studies.

\subsection{Computational Bottleneck of Dirichlet Process Mixtures}
In EDP+ZI, observations are dynamically grouped into latent clusters. At each Gibbs sweep, the cluster assignments and the cluster-specific regression parameters are updated. For a dataset of size $N$, let $N_k$ be the size of cluster $k$. Updating the multivariate regression parameters for cluster $k$ requires inverting the covariance matrix of the design matrix, which has dimension $P \times P$ (where $P$ is the number of covariates). While this step is relatively fast, updating cluster assignments and evaluating marginal likelihoods for each observation involves computing multivariate normal densities that scale with the cluster sample size. In more general formulations (or when handling latent variables for zero-inflated outcomes), Gibbs updates require matrix operations that scale cubically with the sample size, $O(N^3)$, due to joint multivariate Normal updates or Gaussian Process approximations. 

In contrast, BCF models represent nonparametric functions using sum-of-trees. The computational cost of fitting a single regression tree of depth $d$ on a dataset of size $N$ with $P$ covariates scales as $O(d P N \log N)$, which is dominated by sorting the covariates at each split. For a forest of $T$ trees, the computational complexity per MCMC sweep is:
\begin{equation}
\mathcal{O}(T \cdot P \cdot N \log N)
\end{equation}
This linear-logarithmic scaling allows BCF models to scale efficiently to very large datasets, whereas the cubic cost of EDP+ZI becomes prohibitive.

\subsection{Empirical Timing Comparisons}
The EDP+ZI model was implemented and verified using the authors' reference code. The empirical runtimes on standard CPU threads are summarized below:
\begin{enumerate}
    \item \textbf{ZIC-BCF-Smear ($N=500$):} A single simulation run (1,000 burn-in + 1,000 sampling iterations) takes approximately \textbf{18 minutes} ($\approx 0.3$ hours). A complete set of 100 seeds for a single cell takes $\approx 30$ hours (parallelized to less than 5 hours on a 6-core machine).
    \item \textbf{EDP+ZI ($N=500$):} A single run takes approximately \textbf{27.5 hours}. This represents a $\bm{\approx 90\times}$ slowdown compared to a single run of ZIC-BCF-Smear.
    \item \textbf{EDP+ZI ($N=1000$):} Due to the $O(N^3)$ complexity bottleneck, scaling the sample size from $N=500$ to $N=1000$ increases the runtime by a factor of approximately $(1000/500)^3 = 8$. A single fit of EDP+ZI at $N=1000$ is estimated to take $\approx 220$ hours (\textbf{more than 9 days}) of continuous single-threaded execution.
\end{enumerate}

To run the full simulation grid for EDP+ZI (comprising 3 DGPs, 4 sample sizes, 4 zero-inflation levels, and 100 seeds) would require over 108,000 CPU hours (approximately 12 years of continuous runtime on a single machine). Given that standard academic cloud environments (like Google Colab) cap continuous sessions at 12 hours, running EDP+ZI across the full design was computationally infeasible. Consequently, while the model was verified for correctness, it was excluded from the empirical benchmark. This highlights that computational efficiency is a primary methodological advantage of ZIC-BCF-Smear.

\section{Complete Simulation Results \& Full Figures}
\label{sec:supp_full_results}

The main manuscript summarizes both sensitivity sweeps in prose and reports no sensitivity figures, because the numerical divergence (overflow) of the parametric models under misspecification generates extreme outlier values (RMSE up to $10^{11}$), which swamps the y-axis and makes visual comparison of the remaining stable models impossible. Both variants are therefore given here: the full 5-model figures including all benchmarks and diverged runs in this subsection and the next, and the legible variants restricted to the numerically stable estimators in Section~\ref{subsec:supp_nonlinear_stable}.

\subsection{Linear Sensitivity Plots}
Under link-linear DGPs, the parametric Gamma Hurdle model is correctly specified. However, the naive Gamma + .01 model (which adds a constant of 0.01 to the zero outcomes) still suffers from severe bias and numerical instability.
\begin{enumerate}
    \item Figure~\ref{fig:linear_n_sens_full} displays the full $N$-sensitivity plots for DGP A, B, and C. The naive Gamma + .01 model exhibits large RMSE values at small sample sizes ($N=100$) where the shift introduces massive bias relative to the small active set.
    \item Figure~\ref{fig:linear_zi_sens_full} displays the full zero-inflation sensitivity plots. As the zero proportion increases to $85\%$, the performance of the parametric Gamma Hurdle begins to degrade because the active sample size contracts, while Gamma + .01 collapses entirely.
\end{enumerate}

\subsection{Nonlinear Sensitivity Plots}
Under link-nonlinearity, both parametric models (Gamma Hurdle and Gamma + .01) suffer from severe functional-form misspecification. The resulting out-of-sample prediction overflows lead to massive divergence in 4\%--9\% of the seeds, inflating the mean CATE and ATE RMSEs.
\begin{enumerate}
    \item Figure~\ref{fig:nonlinear_n_sens_full} displays the full $N$-sensitivity plots under nonlinearity. The y-axis scales are dominated by the diverged parametric model metrics, which reach values of $10^8$ to $10^{10}$, showing the extreme risk of relying on parametric log-link models when the true surface is nonlinear.
    \item Figure~\ref{fig:nonlinear_zi_sens_full} displays the full zero-inflation sensitivity plots under nonlinearity, illustrating that the numerical instability of the parametric benchmarks is present across all levels of zero-inflation.
\end{enumerate}

\begin{figure}[htbp]
\centering
\begin{tabular}{@{}cc@{}}
\includegraphics[width=0.48\textwidth]{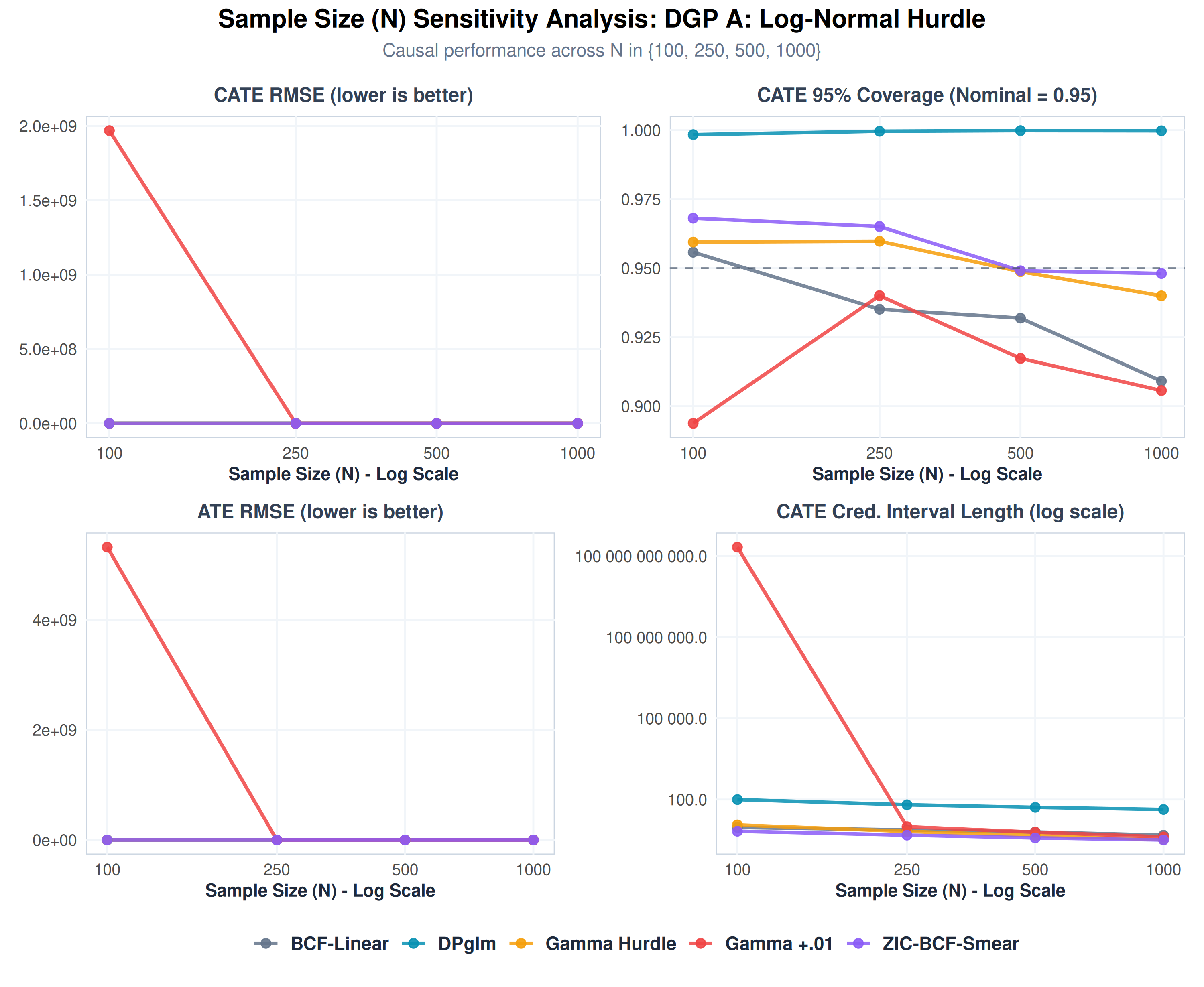} &
\includegraphics[width=0.48\textwidth]{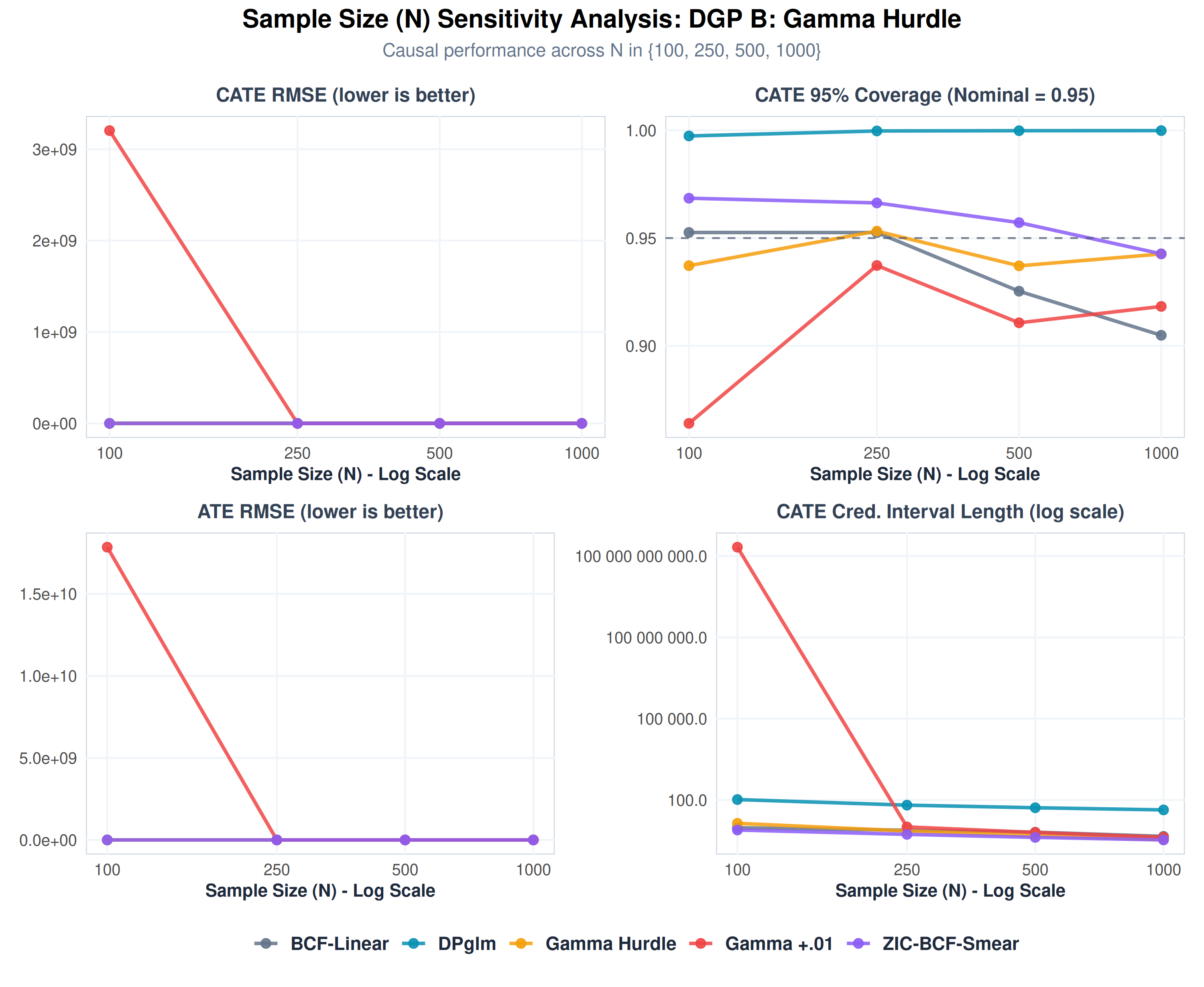} \\
(a) DGP A: Log-Normal Hurdle & (b) DGP B: Gamma Hurdle \\
\multicolumn{2}{c}{\includegraphics[width=0.48\textwidth]{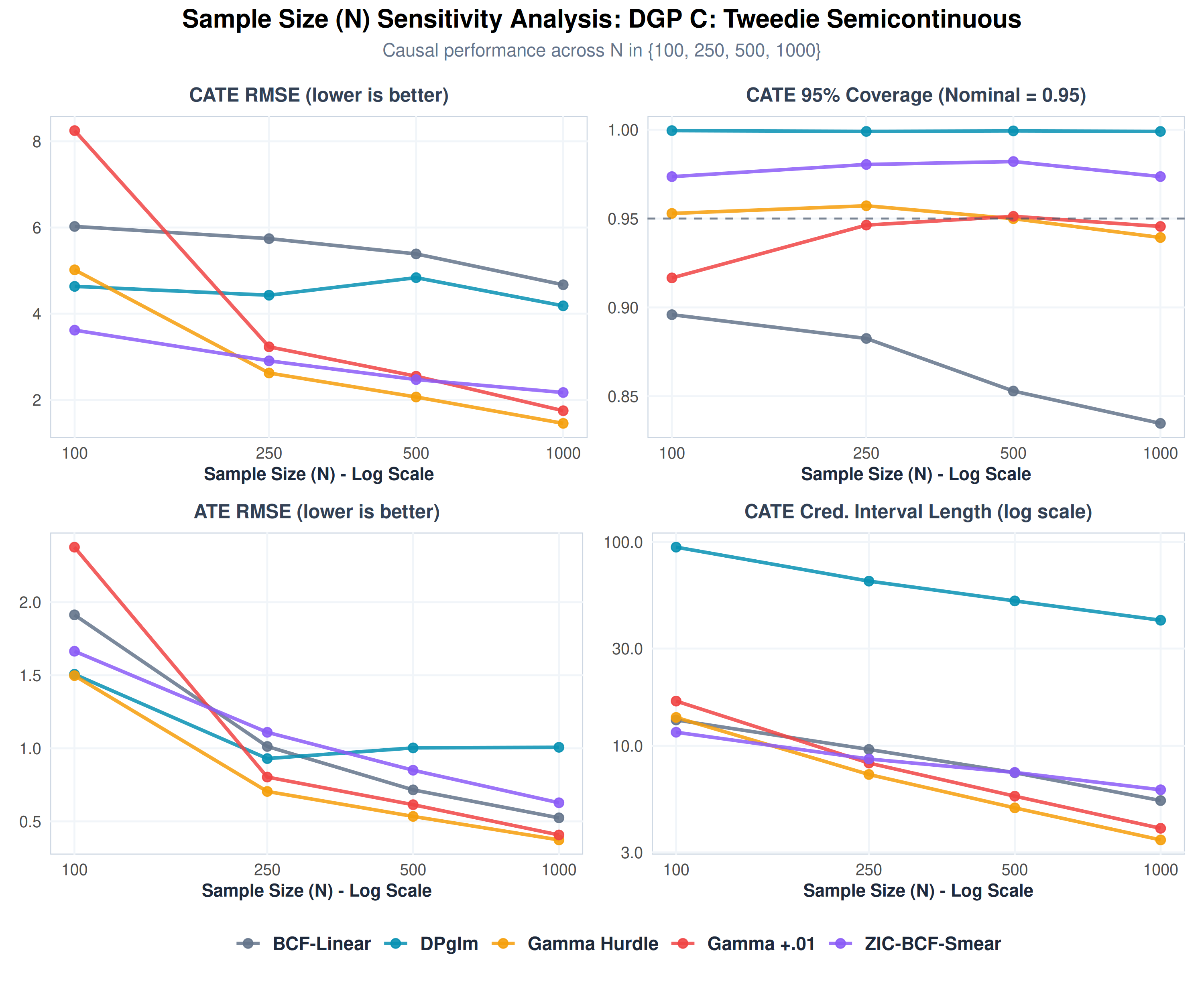}} \\
\multicolumn{2}{c}{(c) DGP C: Tweedie Semicontinuous}
\end{tabular}
\caption{Full link-linear $N$-sensitivity analysis (including the diverged Gamma + .01 benchmark) across DGP A, B, and C.}
\label{fig:linear_n_sens_full}
\end{figure}

\begin{figure}[htbp]
\centering
\begin{tabular}{@{}cc@{}}
\includegraphics[width=0.48\textwidth]{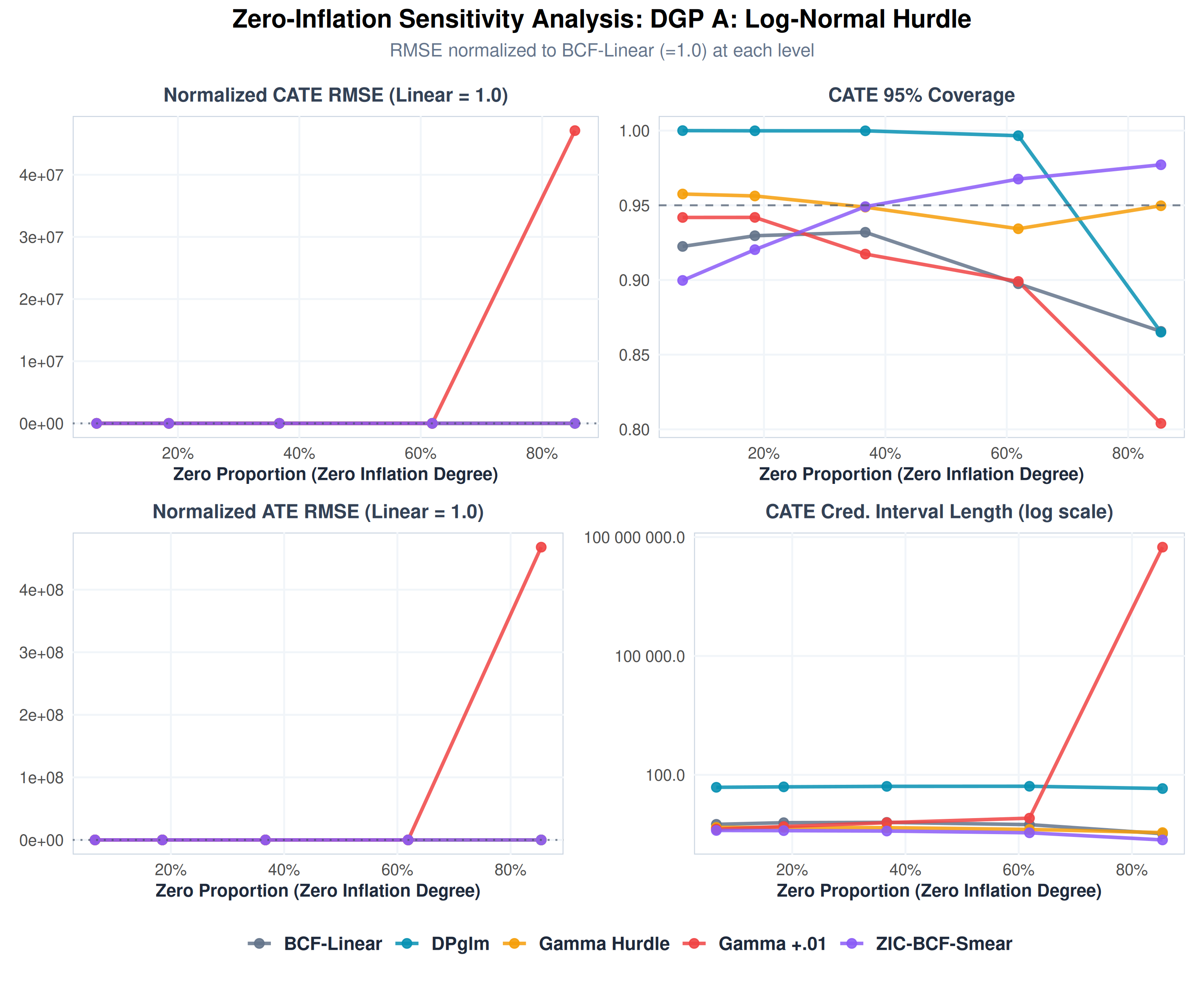} &
\includegraphics[width=0.48\textwidth]{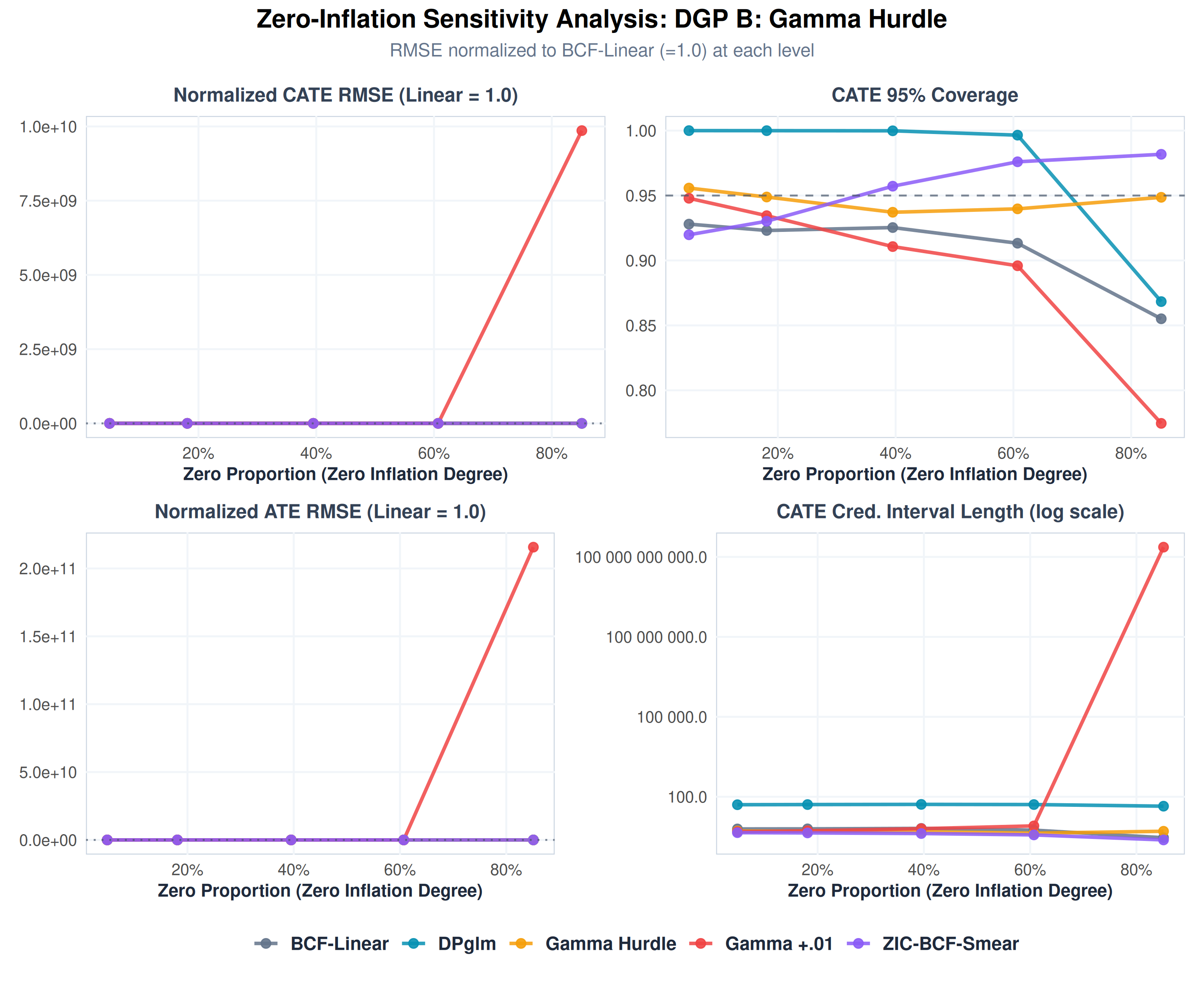} \\
(a) DGP A: Log-Normal Hurdle & (b) DGP B: Gamma Hurdle \\
\multicolumn{2}{c}{\includegraphics[width=0.48\textwidth]{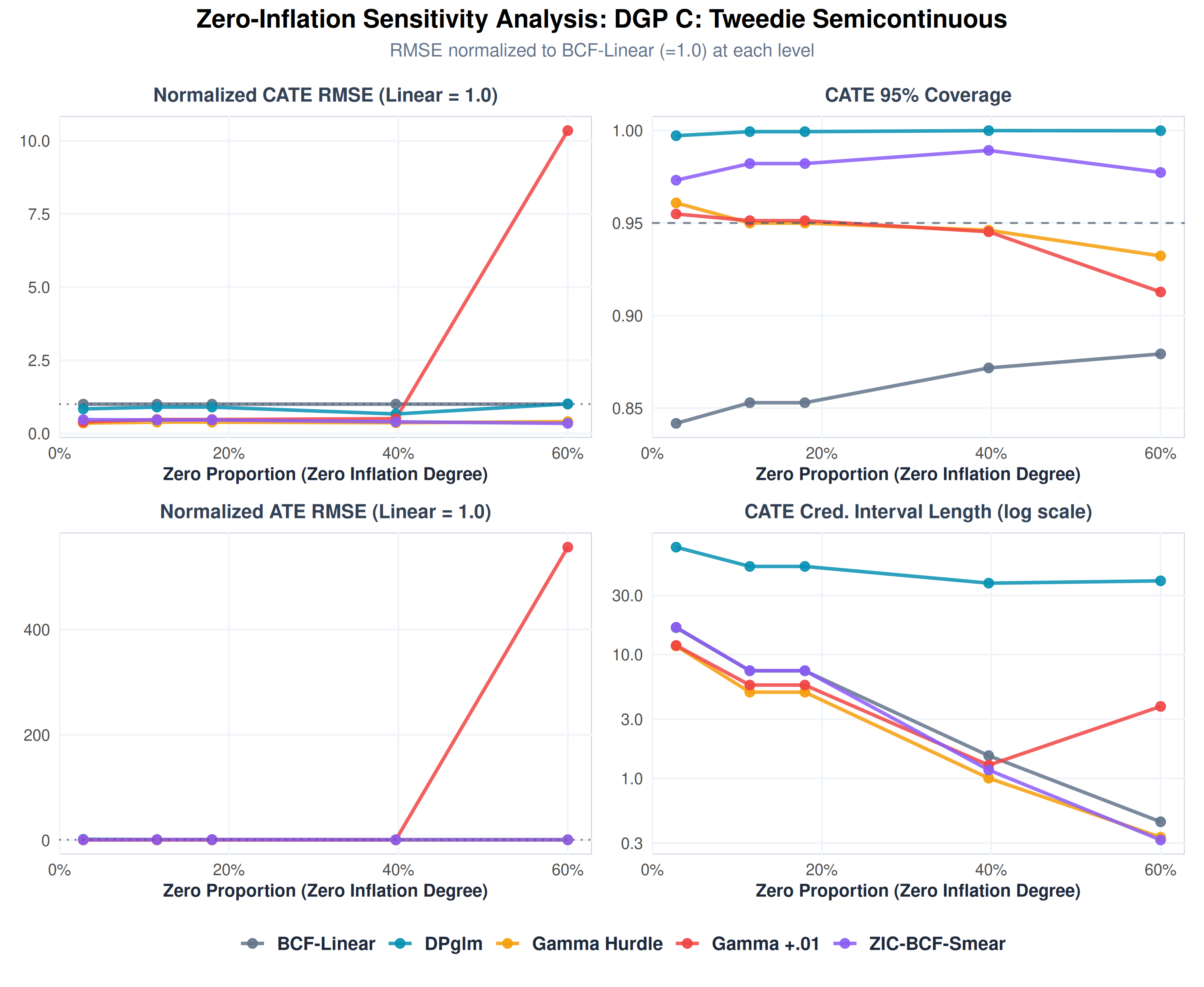}} \\
\multicolumn{2}{c}{(c) DGP C: Tweedie Semicontinuous}
\end{tabular}
\caption{Full link-linear zero-inflation sensitivity analysis (including the diverged Gamma + .01 benchmark) across DGP A, B, and C.}
\label{fig:linear_zi_sens_full}
\end{figure}

\begin{figure}[htbp]
\centering
\begin{tabular}{@{}cc@{}}
\includegraphics[width=0.48\textwidth]{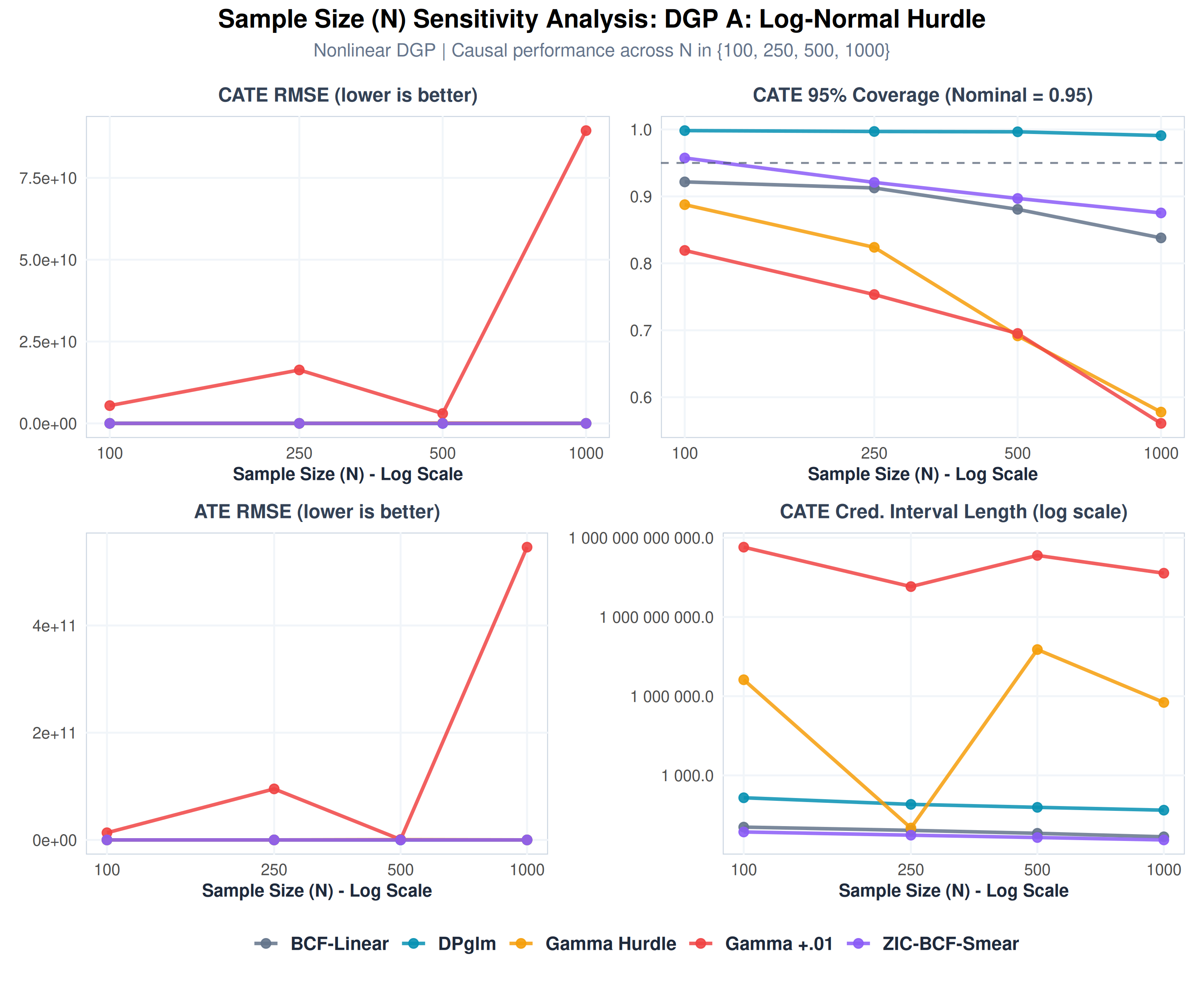} &
\includegraphics[width=0.48\textwidth]{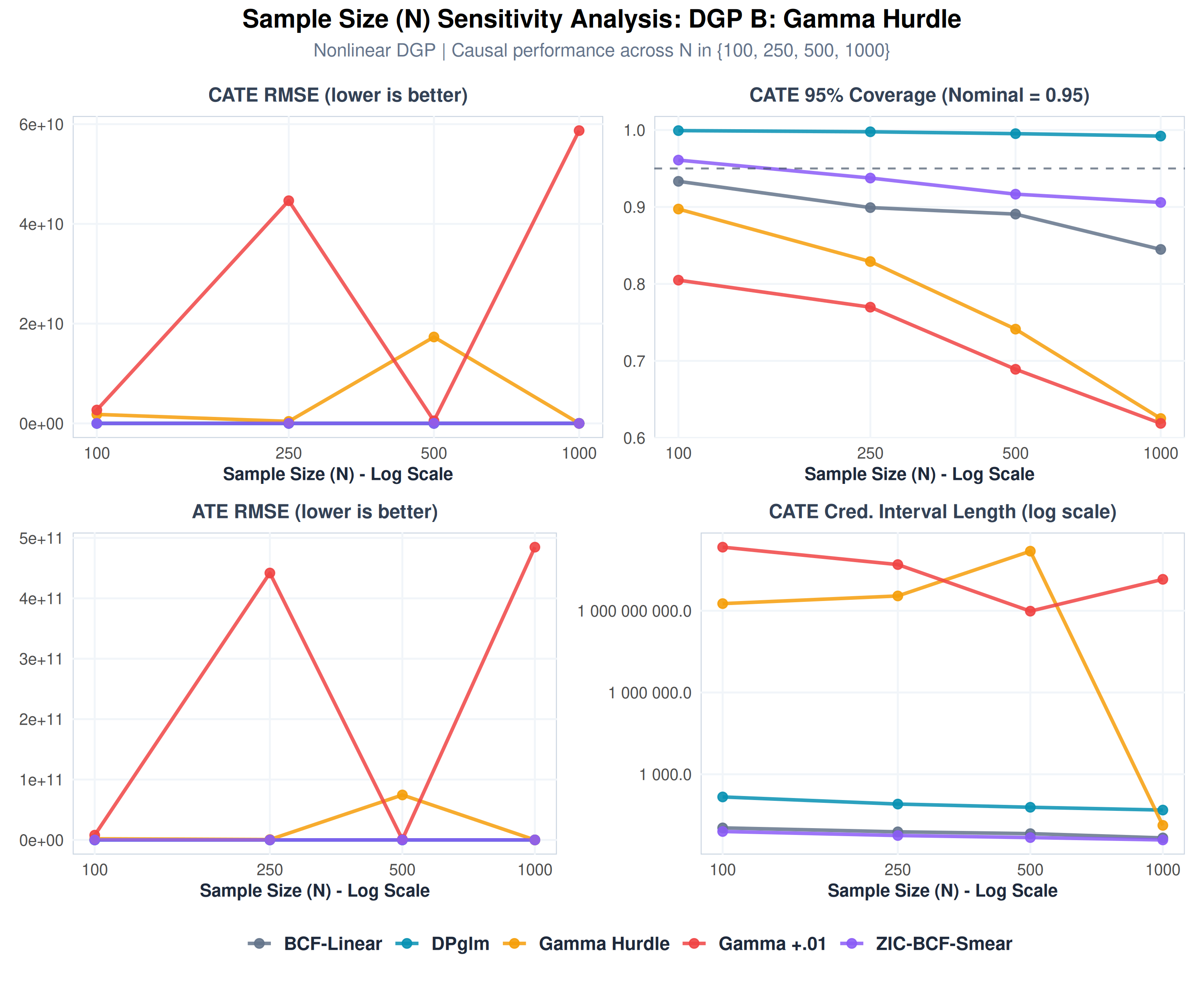} \\
(a) DGP A: Log-Normal Hurdle & (b) DGP B: Gamma Hurdle \\
\multicolumn{2}{c}{\includegraphics[width=0.48\textwidth]{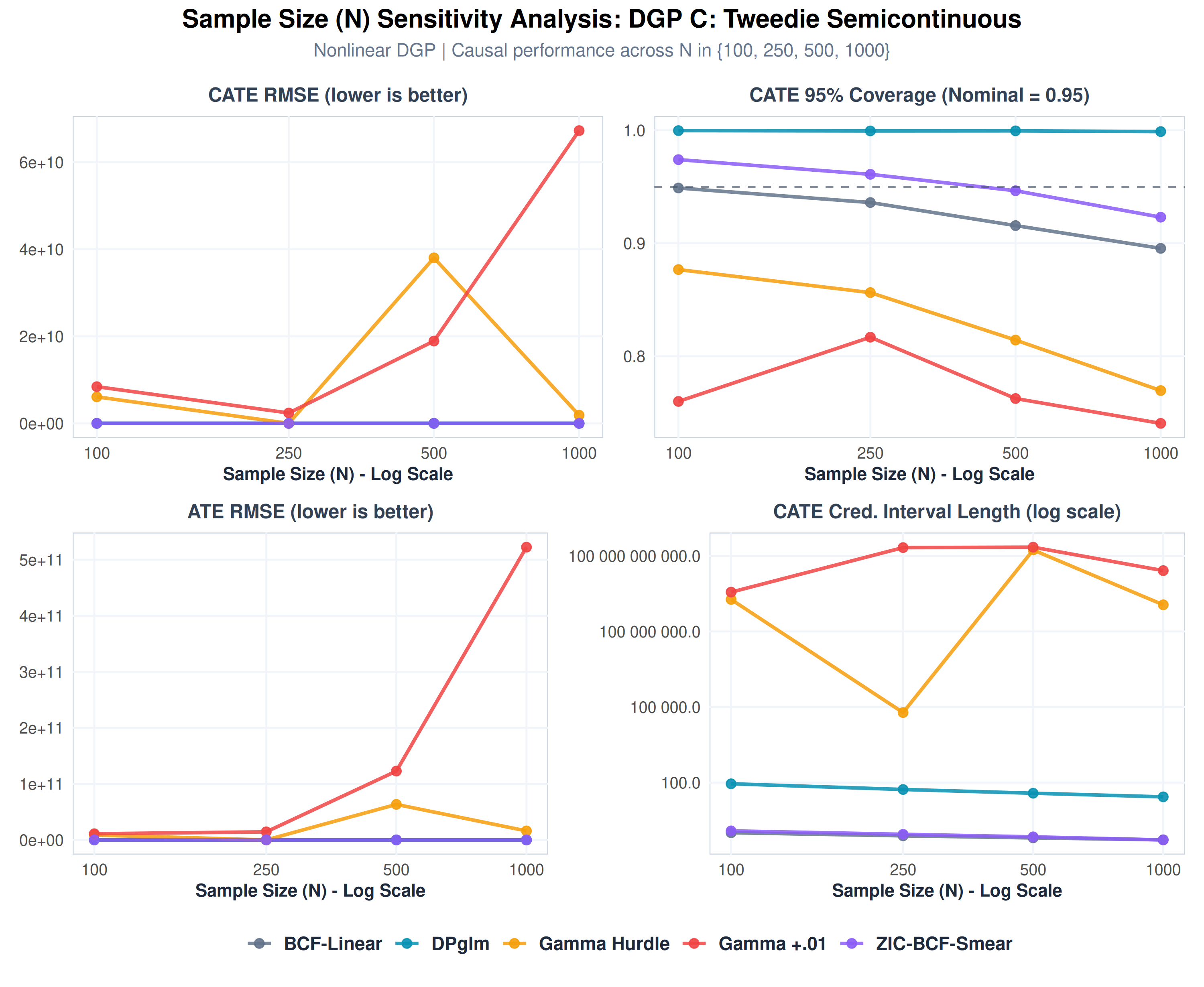}} \\
\multicolumn{2}{c}{(c) DGP C: Tweedie Semicontinuous}
\end{tabular}
\caption{Full link-nonlinear $N$-sensitivity analysis under misspecification, displaying the extreme y-axis distortion caused by the diverged parametric benchmarks.}
\label{fig:nonlinear_n_sens_full}
\end{figure}

\begin{figure}[htbp]
\centering
\begin{tabular}{@{}cc@{}}
\includegraphics[width=0.48\textwidth]{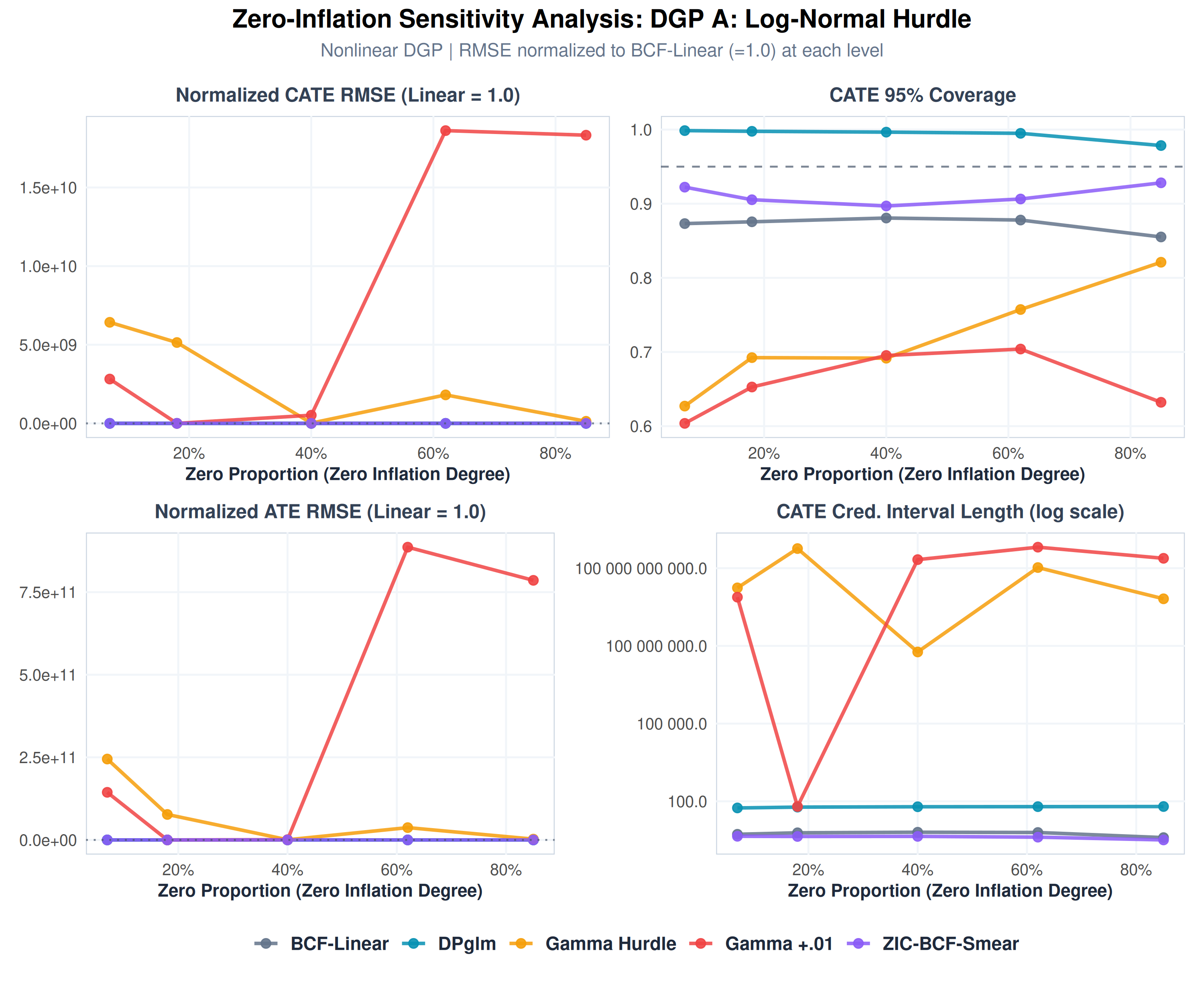} &
\includegraphics[width=0.48\textwidth]{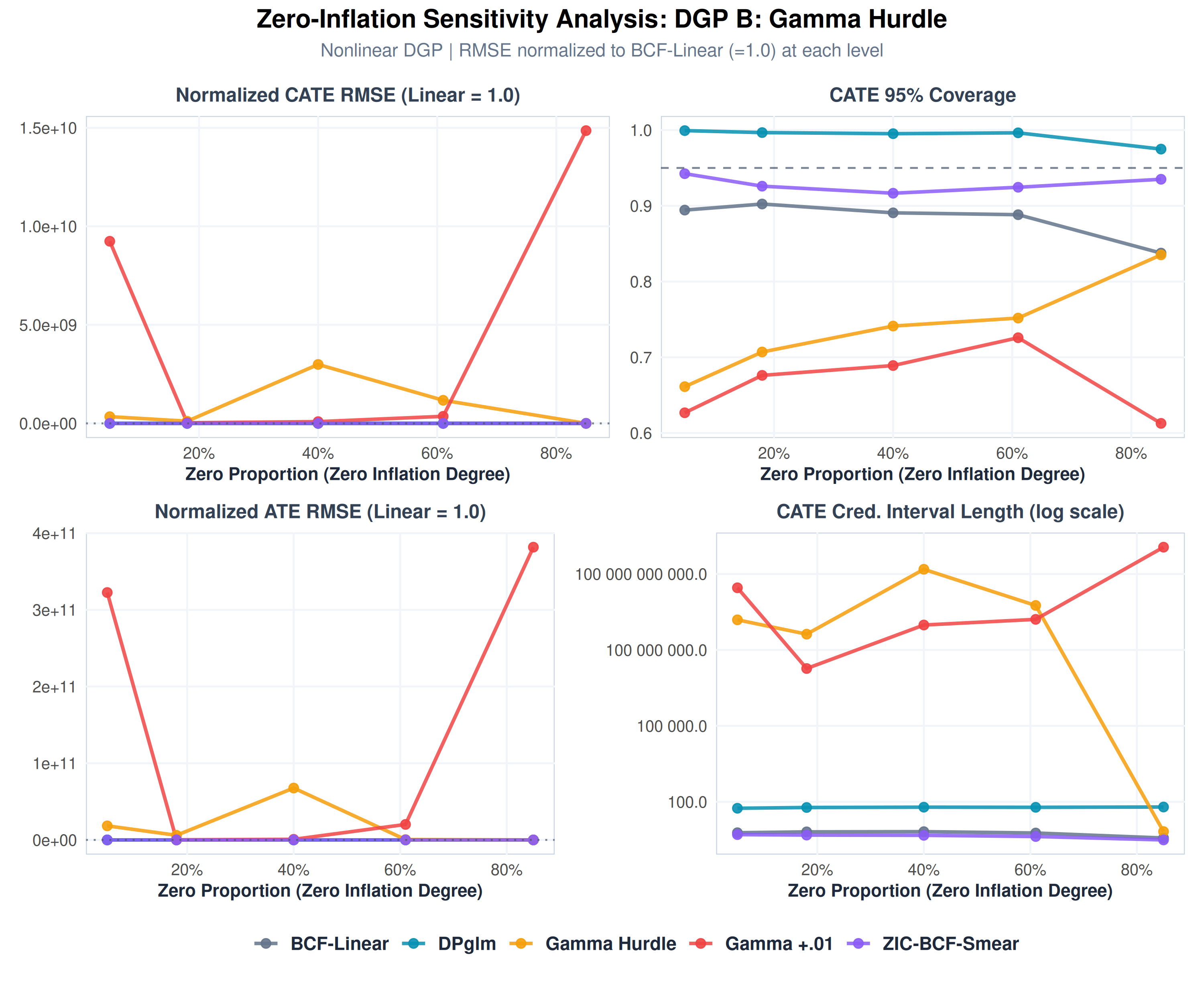} \\
(a) DGP A: Log-Normal Hurdle & (b) DGP B: Gamma Hurdle \\
\multicolumn{2}{c}{\includegraphics[width=0.48\textwidth]{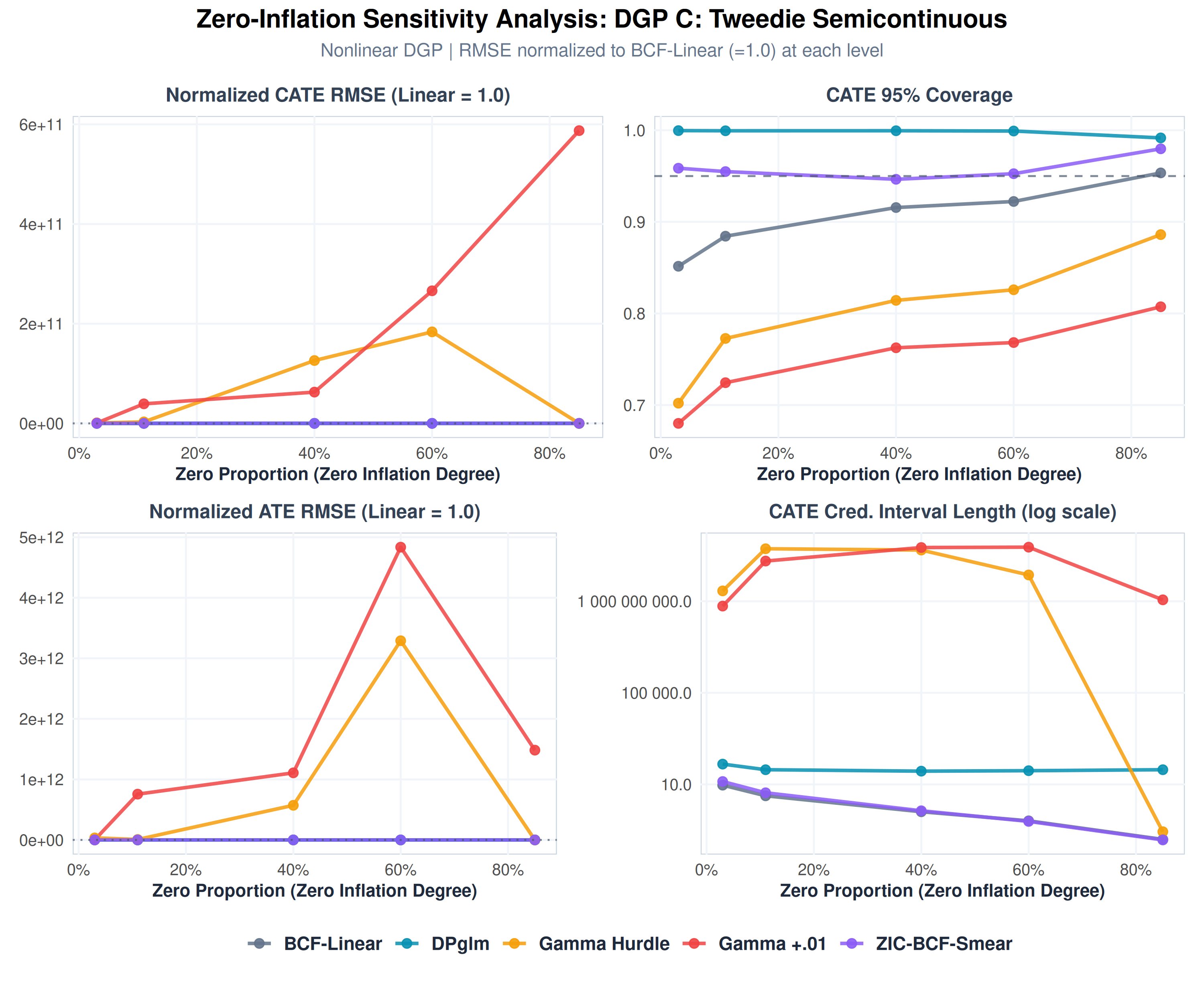}} \\
\multicolumn{2}{c}{(c) DGP C: Tweedie Semicontinuous}
\end{tabular}
\caption{Full link-nonlinear zero-inflation sensitivity analysis under misspecification, displaying the extreme y-axis distortion caused by the diverged parametric benchmarks.}
\label{fig:nonlinear_zi_sens_full}
\end{figure}

\clearpage

\subsection{Legible Sensitivity Plots (Stable Estimators Only)}
\label{subsec:supp_nonlinear_stable}
The main manuscript reports both sensitivity sweeps numerically rather than graphically, because the diverged parametric benchmarks destroy the y-axis scale of any plot that includes them. This subsection supplies the graphical counterpart with the diverged benchmarks removed, so that the numerically stable estimators can be compared on a readable scale. Figure~\ref{fig:supp_linear_n_sens_a} and Figure~\ref{fig:supp_linear_zi_sens_a} give the link-linear $N$-sensitivity and zero-inflation-sensitivity analyses for DGP A with the diverged Gamma + .01 benchmark removed; Figure~\ref{fig:supp_nonlinear_n_sens_stable} and Figure~\ref{fig:supp_nonlinear_zi_sens_stable} give the corresponding link-nonlinear analyses for all three DGPs with both diverged parametric benchmarks (Gamma Hurdle and Gamma + .01) removed.

\begin{figure}[htbp]
\centering
\includegraphics[width=0.86\textwidth]{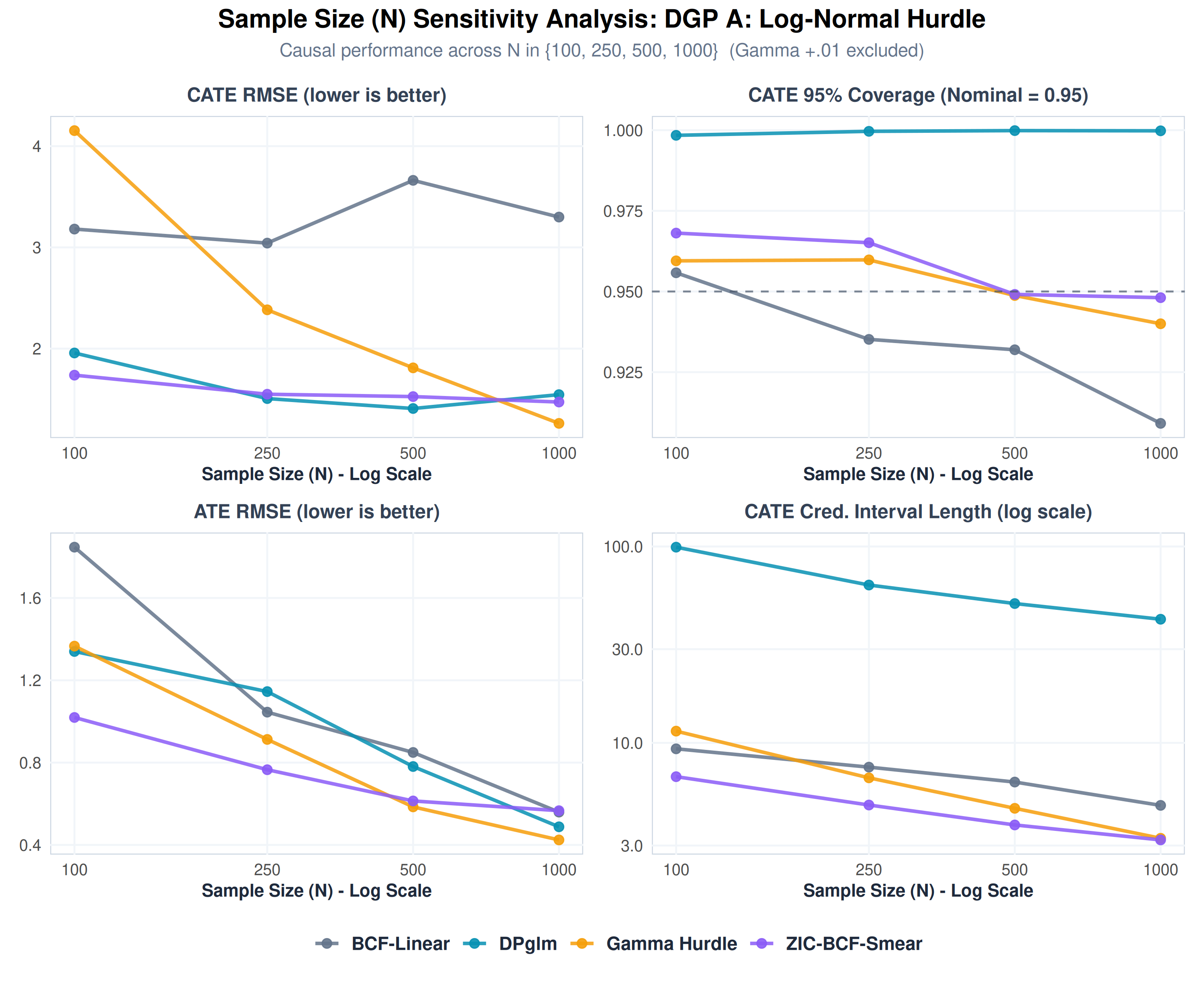}
\caption{Link-linear $N$-sensitivity analysis for DGP A (Log-Normal Hurdle), excluding the diverged Gamma +.01 benchmark for legibility. Reading the four panels together, ZIC-BCF-Smear and the correctly specified Gamma Hurdle both converge as $N$ grows, but ZIC-BCF-Smear reaches the lower CATE RMSE with the shorter credible intervals (log scale, bottom right) while holding coverage at the nominal $95\%$; DPglm's intervals remain an order of magnitude wider at every sample size, which is why its coverage is pinned at $100\%$.}
\label{fig:supp_linear_n_sens_a}
\end{figure}

\begin{figure}[htbp]
\centering
\includegraphics[width=0.86\textwidth]{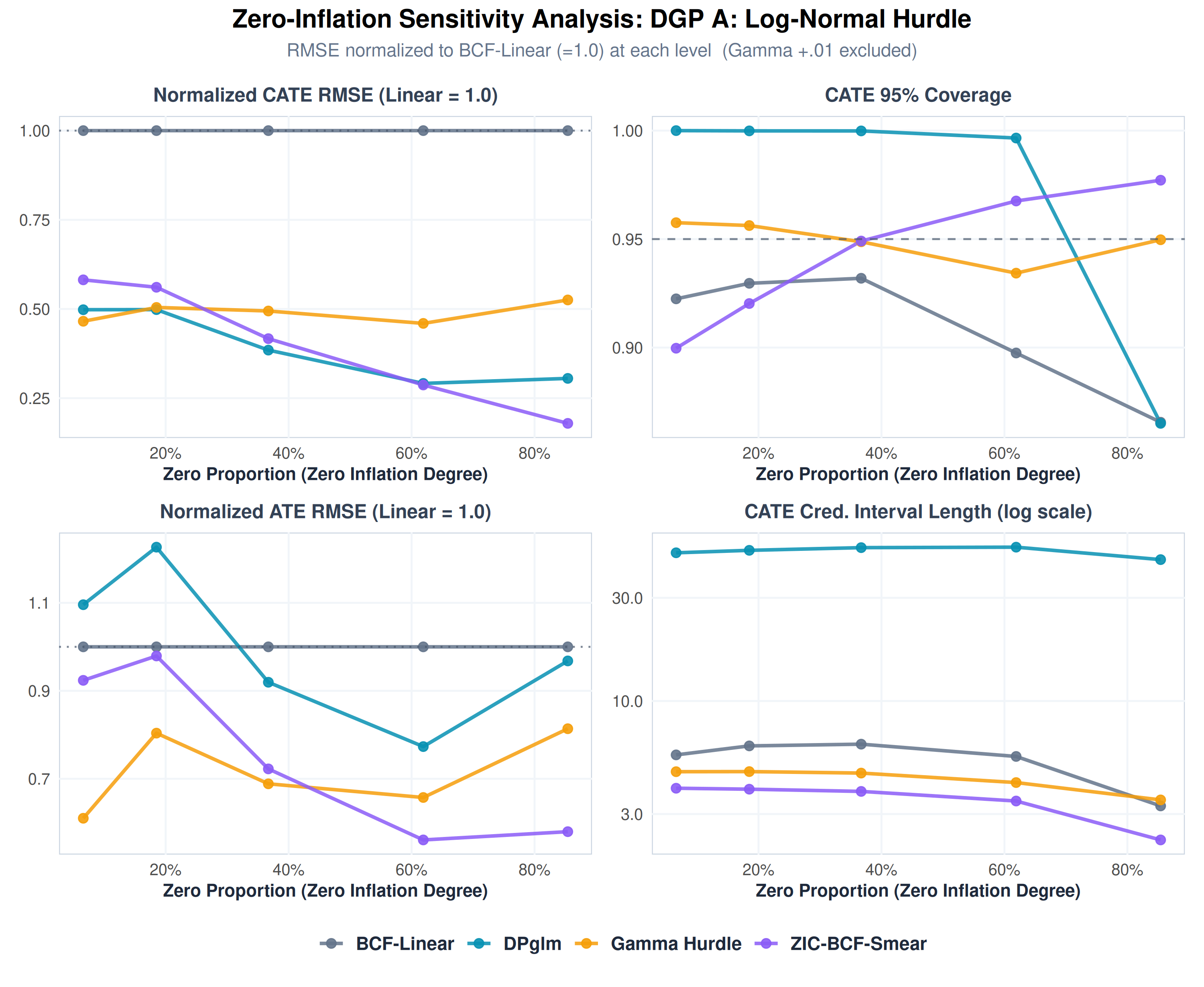}
\caption{Link-linear zero-inflation sensitivity analysis for DGP A (Log-Normal Hurdle), excluding the diverged Gamma +.01 benchmark for legibility. RMSE is normalized by the true CATE scale so that cells with different zero proportions are comparable. The ZIC-BCF-Smear advantage over the single-part BCF-Linear baseline widens monotonically as the zero proportion rises toward $85\%$, while the correctly specified Gamma Hurdle deteriorates at the right-hand end of the range as the active sample shrinks below $\approx 15\%$ of the data.}
\label{fig:supp_linear_zi_sens_a}
\end{figure}

\begin{figure}[htbp]
\centering
\begin{tabular}{@{}cc@{}}
\includegraphics[width=0.48\textwidth]{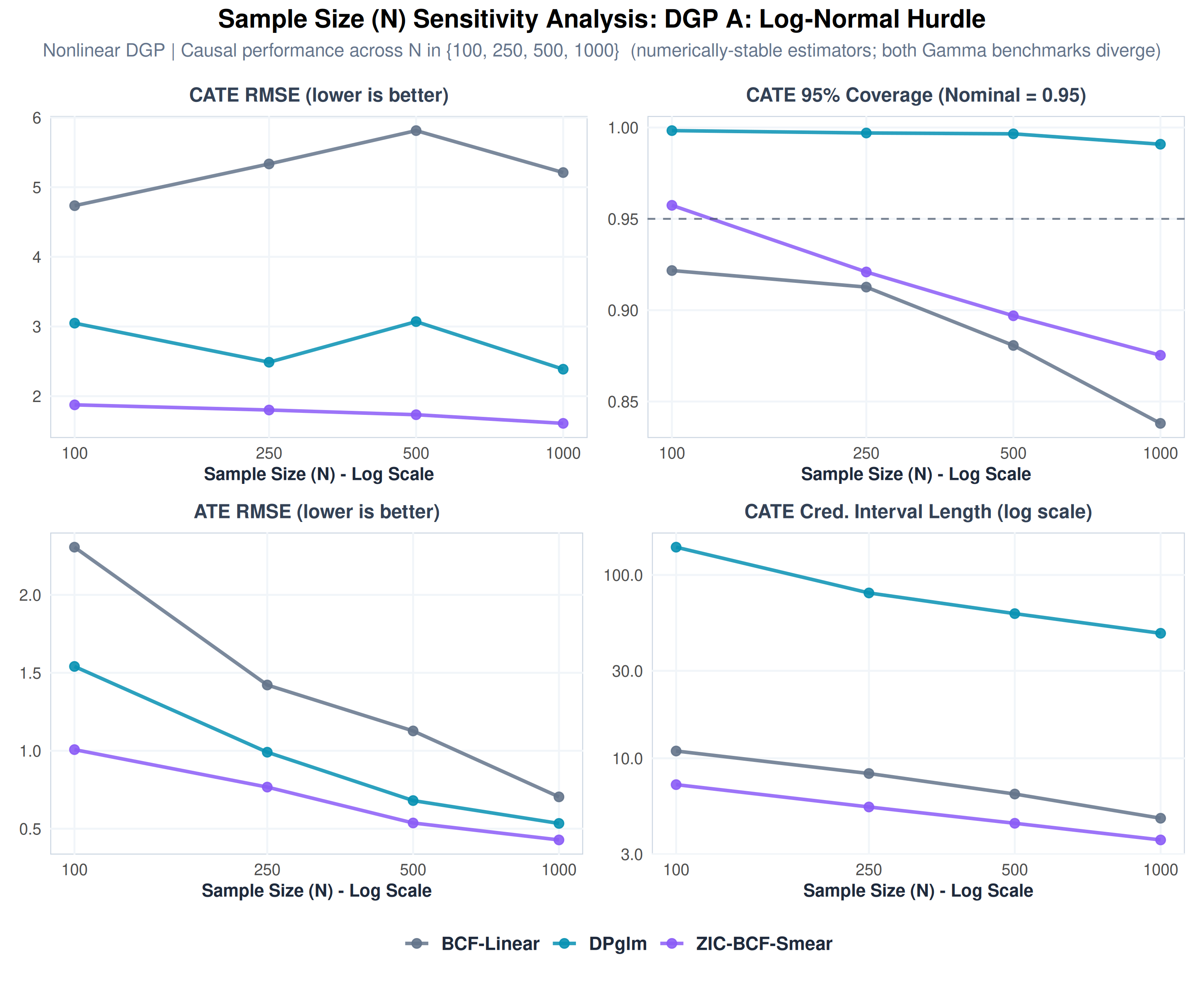} &
\includegraphics[width=0.48\textwidth]{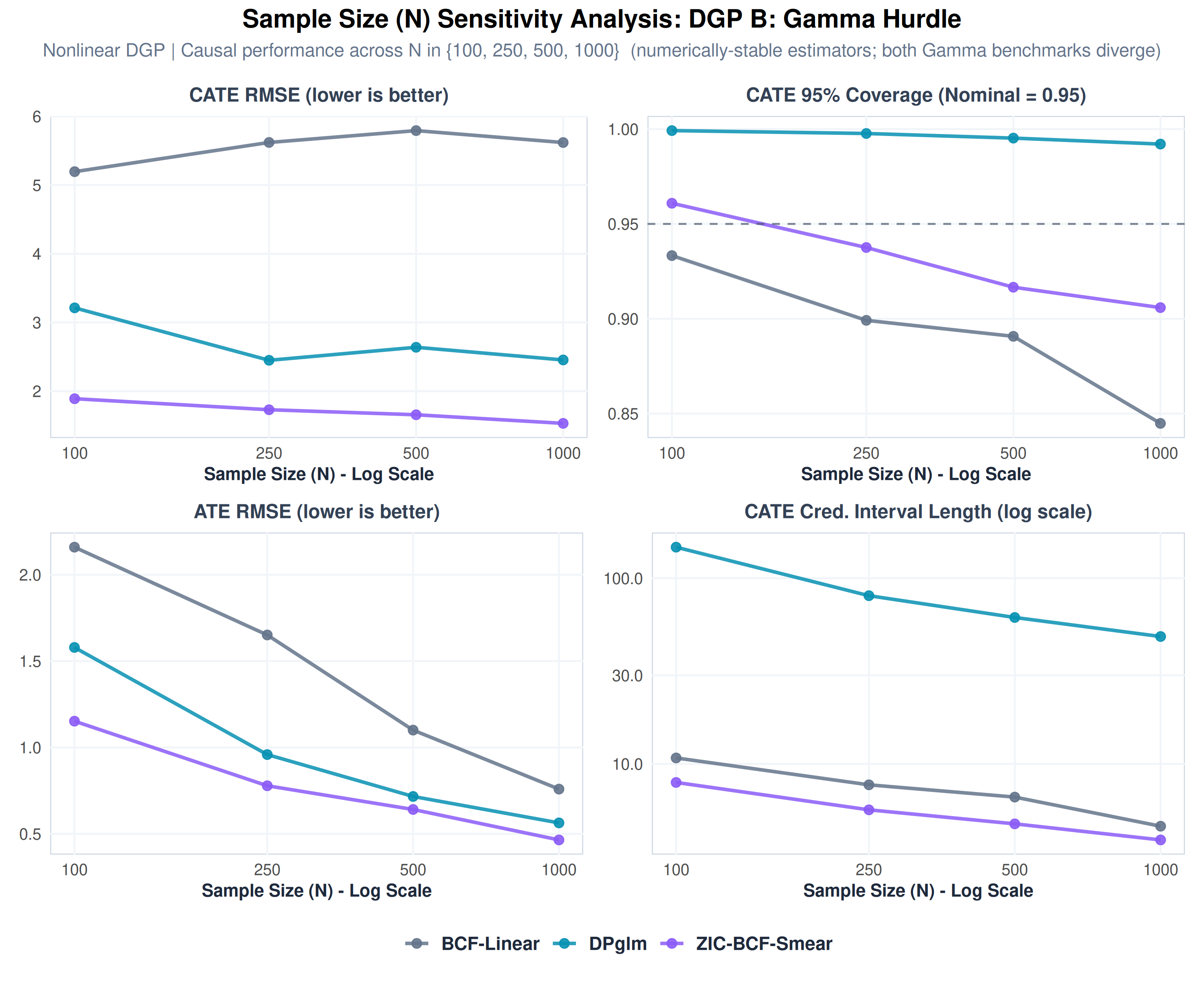} \\
(a) DGP A: Log-Normal Hurdle & (b) DGP B: Gamma Hurdle \\
\multicolumn{2}{c}{\includegraphics[width=0.48\textwidth]{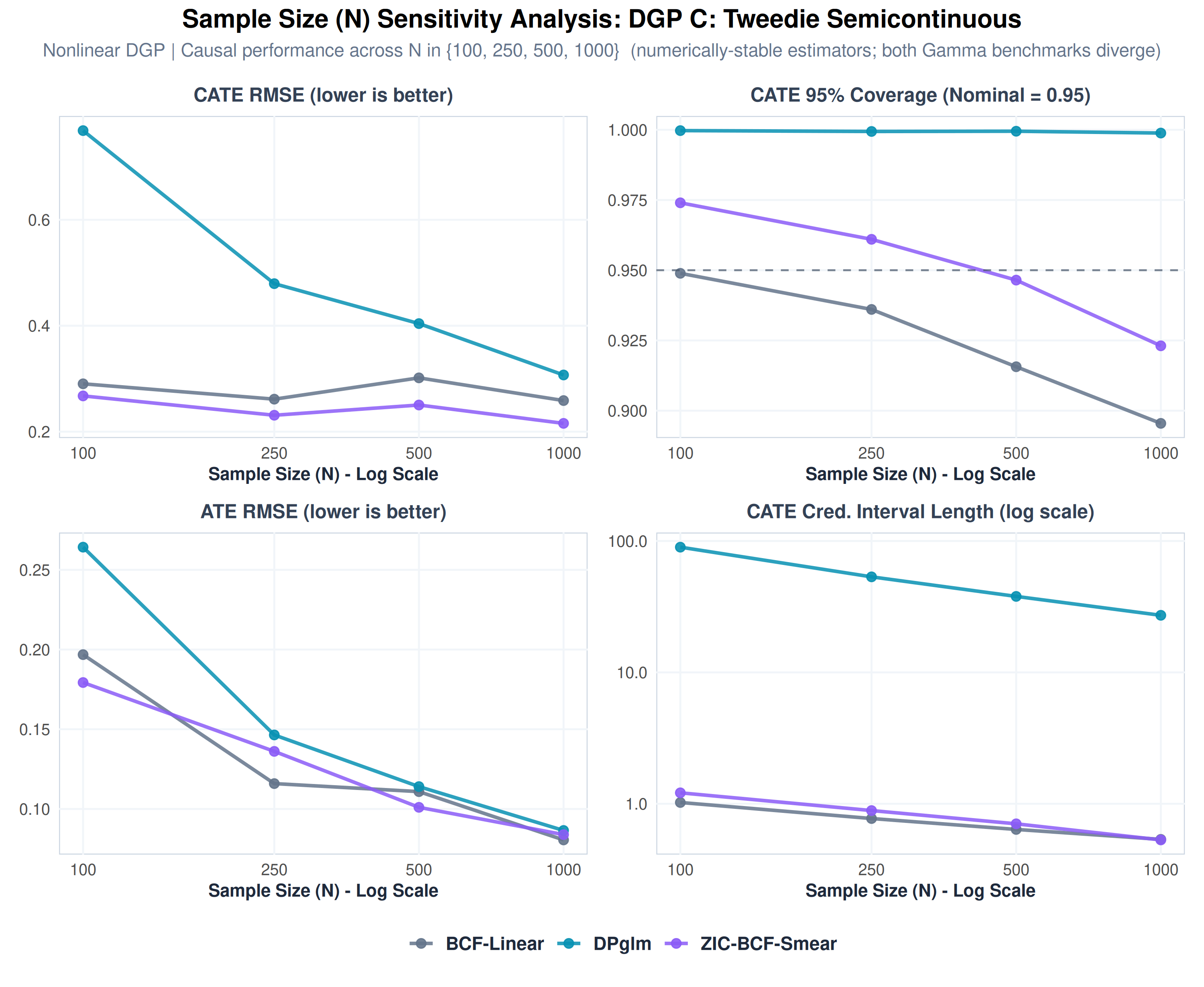}} \\
\multicolumn{2}{c}{(c) DGP C: Tweedie Semicontinuous}
\end{tabular}
\caption{Link-nonlinear $N$-sensitivity analysis restricted to the numerically stable estimators (BCF-Linear, DPglm, and ZIC-BCF-Smear). Reading across the panels, ZIC-BCF-Smear holds the lowest CATE RMSE and the coverage closest to the nominal $95\%$ at every sample size in all three DGPs, while DPglm's credible-interval length stays an order of magnitude wider throughout.}
\label{fig:supp_nonlinear_n_sens_stable}
\end{figure}

\begin{figure}[htbp]
\centering
\begin{tabular}{@{}cc@{}}
\includegraphics[width=0.48\textwidth]{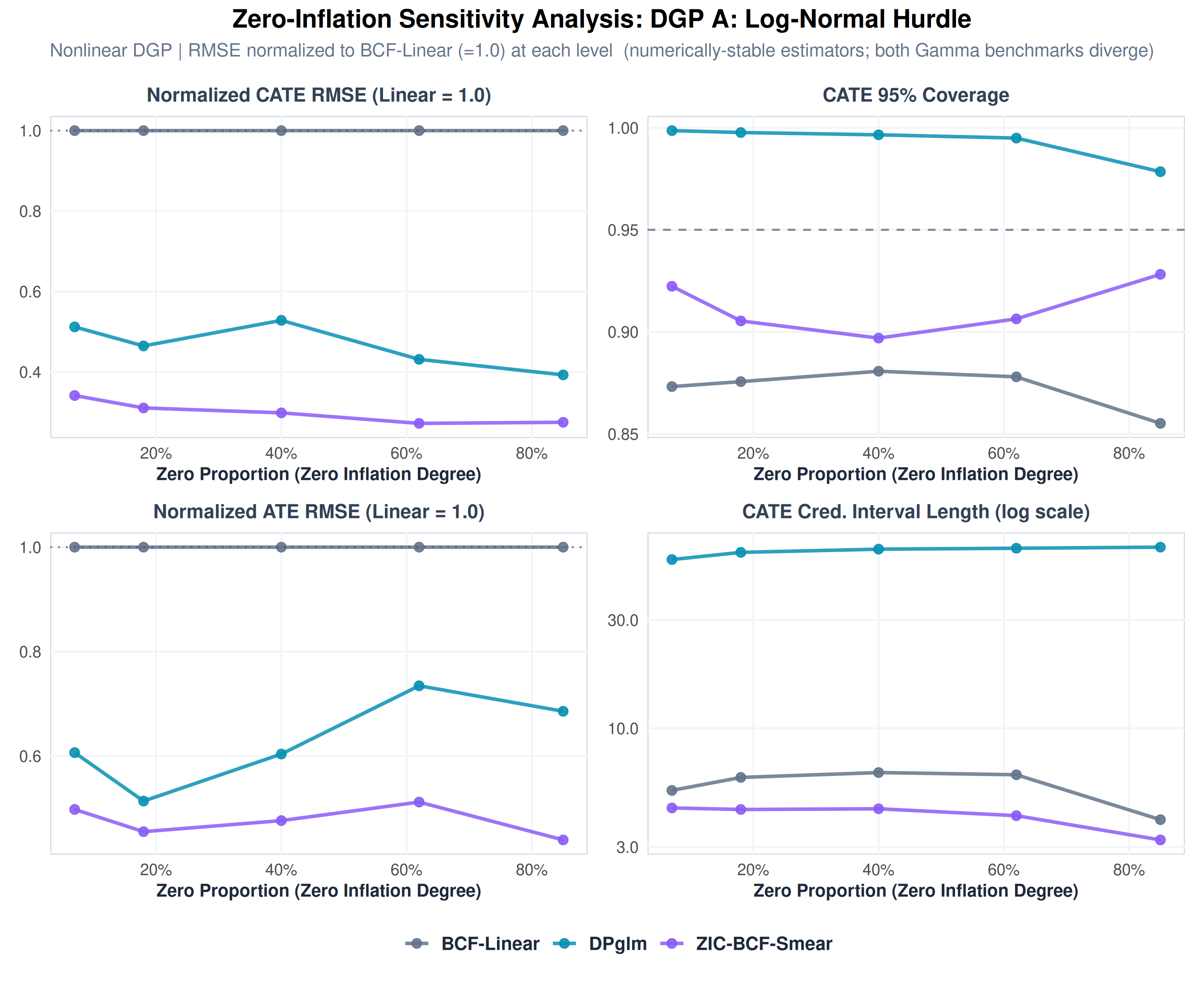} &
\includegraphics[width=0.48\textwidth]{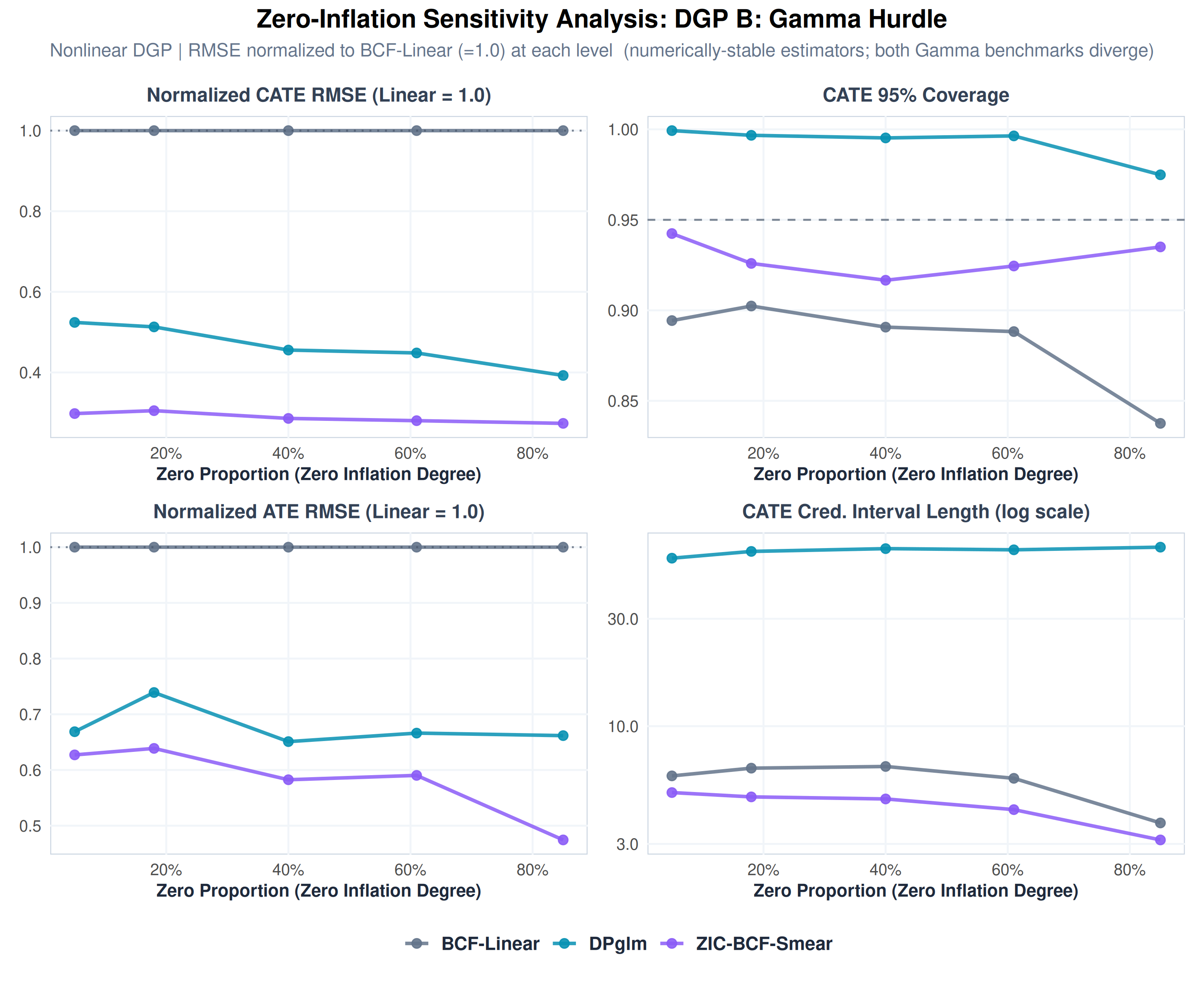} \\
(a) DGP A: Log-Normal Hurdle & (b) DGP B: Gamma Hurdle \\
\multicolumn{2}{c}{\includegraphics[width=0.48\textwidth]{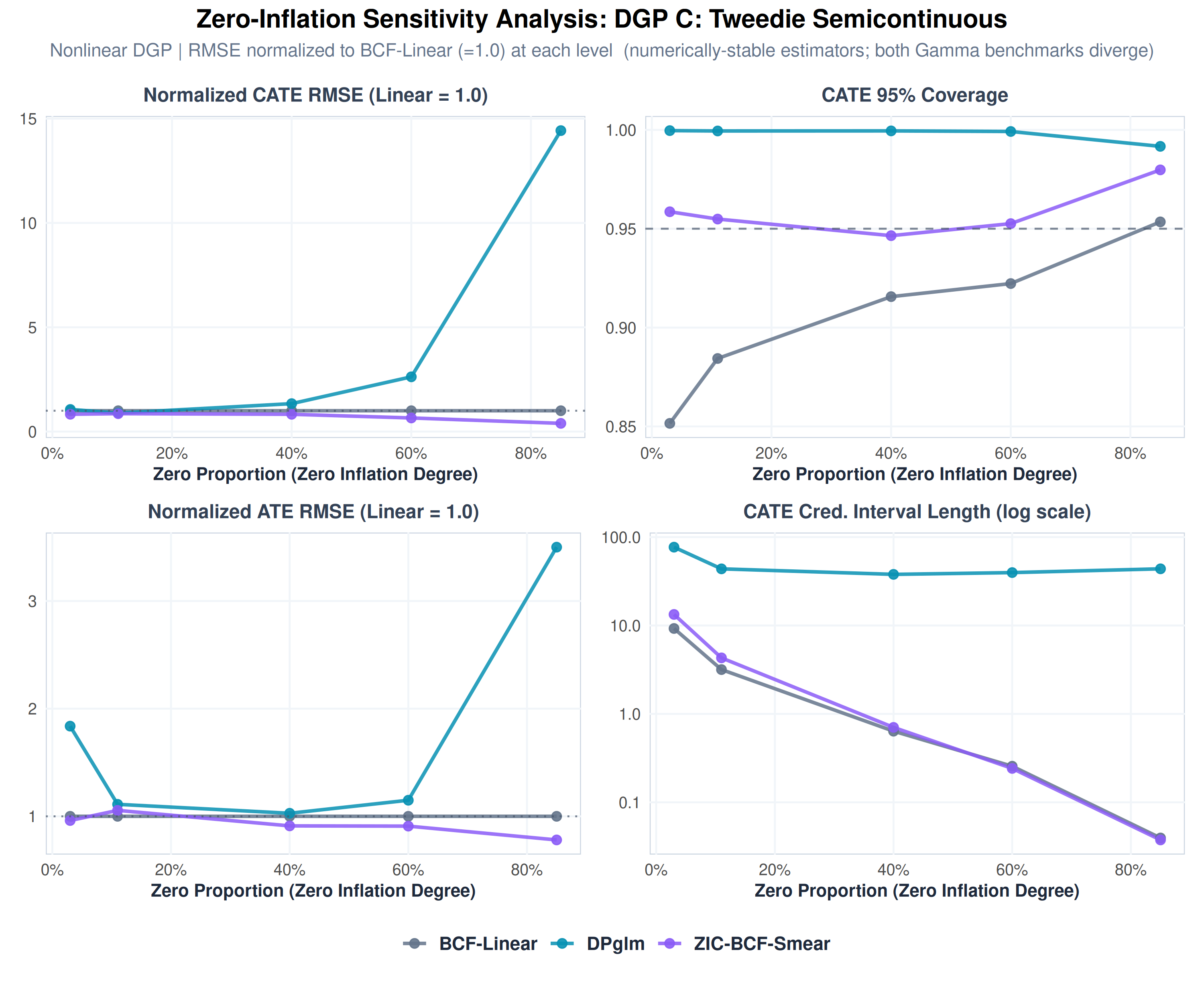}} \\
\multicolumn{2}{c}{(c) DGP C: Tweedie Semicontinuous}
\end{tabular}
\caption{Link-nonlinear zero-inflation sensitivity analysis restricted to the numerically stable estimators. The gap between ZIC-BCF-Smear and the single-part BCF-Linear baseline widens monotonically as the zero proportion rises toward $85\%$, which is the graphical signature of fitting the intensity forest only on units that clear the hurdle.}
\label{fig:supp_nonlinear_zi_sens_stable}
\end{figure}

\clearpage

\subsection{Numerical Stability of the Parametric Benchmarks}
\label{subsec:supp_stability}
Table~\ref{tab:supp_nonlinear_stability} reports the divergence rates of the parametric benchmarks under the link-nonlinear DGPs, together with medians computed across the surviving seeds. The medians are given so that the comparison does not rest solely on means contaminated by overflow. They show that excluding the diverged seeds does not rescue the benchmarks: the CATE correlations remain negative, and the robust median CATE RMSEs remain a factor of $2.0$ to $2.7$ above ZIC-BCF-Smear.

\begin{table}[htbp]
\centering
\caption{Numerical stability of the parametric benchmarks under nonlinear DGPs ($N = 500$, ZI Level 3). Reports divergence rates and robust medians across 100 seeds.}
\label{tab:supp_nonlinear_stability}
\resizebox{\textwidth}{!}{%
\begin{tabular}{llcccc}
\toprule
DGP & Model & Diverged Seeds & CATE RMSE (median) & ATE Abs Err (median) & CATE Corr \\
\midrule
\multirow{3}{*}{DGP A: Log-Normal}
& Gamma Hurdle  & 4 / 100 & 3.960 & 0.759 & -0.039 \\
& Gamma +.01    & 5 / 100 & 4.336 & 0.747 & -0.081 \\
& \textbf{ZIC-BCF-Smear} & \textbf{0 / 100} & \textbf{1.611} & \textbf{0.352} & \textbf{0.686} \\
\midrule
\multirow{3}{*}{DGP B: Gamma}
& Gamma Hurdle  & 5 / 100 & 3.704 & 0.784 & -0.040 \\
& Gamma +.01    & 9 / 100 & 4.145 & 0.692 & -0.068 \\
& \textbf{ZIC-BCF-Smear} & \textbf{0 / 100} & \textbf{1.537} & \textbf{0.494} & \textbf{0.645} \\
\midrule
\multirow{3}{*}{DGP C: Tweedie}
& Gamma Hurdle  & 5 / 100 & 0.369 & 0.122 & -0.060 \\
& Gamma +.01    & 7 / 100 & 0.460 & 0.147 & -0.045 \\
& \textbf{ZIC-BCF-Smear} & \textbf{0 / 100} & \textbf{0.188} & \textbf{0.065} & \textbf{0.175} \\
\bottomrule
\end{tabular}%
}
\end{table}

\subsection{Hurdle Stage Sensitivity Results}
For completeness, the full numerical values of the Hurdle stage sensitivity analyses are provided. The standard configuration ($N = 500$) appears as the middle column of each table, and reproduces the hurdle-stage figures quoted in the main manuscript. Table~\ref{tab:linear_hurdle_sens} displays the Hurdle stage metrics across sample size $N$ and zero-inflation levels for the link-linear study. Table~\ref{tab:nonlinear_hurdle_sens} displays the Hurdle stage metrics for the link-nonlinear study.

\begin{table}[htbp]
\centering
\caption{Linear study Hurdle stage sensitivity results across sample sizes $N$ (under nominal zero-inflation) and zero-inflation levels (under $N=500$).}
\label{tab:linear_hurdle_sens}
\resizebox{\textwidth}{!}{%
\begin{tabular}{lllccccc}
\toprule
DGP & Model & Parameter & $N=100$ & $N=250$ & $N=500$ & $N=1000$ & ZI 85\% (Level 1) \\
\midrule
\multirow{6}{*}{DGP A: Log-Normal} 
& \multirow{2}{*}{DPglm} 
& RMSE & 0.086 & 0.067 & 0.053 & 0.057 & 0.050 \\
&& Coverage & 98.0\% & 99.5\% & 99.7\% & 99.7\% & 86.6\% \\
& \multirow{2}{*}{Gamma Hurdle} 
& RMSE & 0.208 & 0.129 & 0.098 & 0.071 & 0.080 \\
&& Coverage & 95.1\% & 95.8\% & 94.0\% & 94.8\% & 95.1\% \\
& \multirow{2}{*}{\textbf{ZIC-BCF-Smear}} 
& RMSE & \textbf{0.068} & \textbf{0.054} & \textbf{0.046} & \textbf{0.040} & \textbf{0.036} \\
&& Coverage & \textbf{98.9\%} & \textbf{99.0\%} & \textbf{99.3\%} & \textbf{98.6\%} & \textbf{98.3\%} \\
\midrule
\multirow{6}{*}{DGP B: Gamma} 
& \multirow{2}{*}{DPglm} 
& RMSE & 0.096 & 0.065 & 0.055 & 0.056 & 0.047 \\
&& Coverage & 97.9\% & 99.5\% & 99.7\% & 99.6\% & 86.9\% \\
& \multirow{2}{*}{Gamma Hurdle} 
& RMSE & 0.208 & 0.129 & 0.098 & 0.071 & 0.081 \\
&& Coverage & 95.0\% & 95.8\% & 94.0\% & 94.8\% & 95.0\% \\
& \multirow{2}{*}{\textbf{ZIC-BCF-Smear}} 
& RMSE & \textbf{0.067} & \textbf{0.054} & \textbf{0.046} & \textbf{0.040} & \textbf{0.036} \\
&& Coverage & \textbf{98.5\%} & \textbf{98.8\%} & \textbf{99.3\%} & \textbf{98.3\%} & \textbf{98.6\%} \\
\midrule
\multirow{6}{*}{DGP C: Tweedie} 
& \multirow{2}{*}{DPglm} 
& RMSE & 0.070 & 0.052 & 0.044 & 0.036 & 0.059 \\
&& Coverage & 81.3\% & 84.3\% & 87.0\% & 88.3\% & 100.0\% \\
& \multirow{2}{*}{Gamma Hurdle} 
& RMSE & 0.173 & 0.114 & 0.081 & 0.061 & 0.097 \\
&& Coverage & 97.5\% & 96.8\% & 95.0\% & 92.7\% & 94.4\% \\
& \multirow{2}{*}{\textbf{ZIC-BCF-Smear}} 
& RMSE & \textbf{0.044} & \textbf{0.035} & \textbf{0.028} & \textbf{0.026} & \textbf{0.045} \\
&& Coverage & \textbf{95.3\%} & \textbf{98.2\%} & \textbf{99.3\%} & \textbf{97.9\%} & \textbf{97.9\%} \\
\bottomrule
\end{tabular}%
}
\end{table}

\begin{table}[htbp]
\centering
\caption{Nonlinear study Hurdle stage sensitivity results across sample sizes $N$ (under nominal zero-inflation) and zero-inflation levels (under $N=500$).}
\label{tab:nonlinear_hurdle_sens}
\resizebox{\textwidth}{!}{%
\begin{tabular}{lllccccc}
\toprule
DGP & Model & Parameter & $N=100$ & $N=250$ & $N=500$ & $N=1000$ & ZI 85\% (Level 1) \\
\midrule
\multirow{6}{*}{DGP A: Log-Normal} 
& \multirow{2}{*}{DPglm} 
& RMSE & 0.105 & 0.085 & 0.077 & 0.078 & 0.078 \\
&& Coverage & 98.2\% & 98.8\% & 98.6\% & 96.1\% & 96.9\% \\
& \multirow{2}{*}{Gamma Hurdle} 
& RMSE & 0.229 & 0.167 & 0.129 & 0.111 & 0.104 \\
&& Coverage & 92.6\% & 89.9\% & 88.6\% & 82.2\% & 85.9\% \\
& \multirow{2}{*}{\textbf{ZIC-BCF-Smear}} 
& RMSE & \textbf{0.083} & \textbf{0.074} & \textbf{0.065} & \textbf{0.062} & \textbf{0.065} \\
&& Coverage & \textbf{98.0\%} & \textbf{96.4\%} & \textbf{94.7\%} & \textbf{90.8\%} & \textbf{92.8\%} \\
\midrule
\multirow{6}{*}{DGP B: Gamma} 
& \multirow{2}{*}{DPglm} 
& RMSE & 0.101 & 0.088 & 0.077 & 0.077 & 0.080 \\
&& Coverage & 98.4\% & 98.8\% & 98.2\% & 96.6\% & 96.7\% \\
& \multirow{2}{*}{Gamma Hurdle} 
& RMSE & 0.229 & 0.166 & 0.129 & 0.111 & 0.104 \\
&& Coverage & 92.6\% & 90.0\% & 88.5\% & 82.3\% & 86.0\% \\
& \multirow{2}{*}{\textbf{ZIC-BCF-Smear}} 
& RMSE & \textbf{0.083} & \textbf{0.075} & \textbf{0.066} & \textbf{0.062} & \textbf{0.065} \\
&& Coverage & \textbf{98.0\%} & \textbf{96.4\%} & \textbf{94.5\%} & \textbf{90.9\%} & \textbf{92.9\%} \\
\midrule
\multirow{6}{*}{DGP C: Tweedie} 
& \multirow{2}{*}{DPglm} 
& RMSE & 0.099 & 0.067 & 0.057 & 0.049 & 0.051 \\
&& Coverage & 99.6\% & 99.9\% & 99.8\% & 99.6\% & 98.3\% \\
& \multirow{2}{*}{Gamma Hurdle} 
& RMSE & 0.220 & 0.150 & 0.111 & 0.090 & 0.083 \\
&& Coverage & 93.7\% & 93.5\% & 92.4\% & 88.1\% & 93.6\% \\
& \multirow{2}{*}{\textbf{ZIC-BCF-Smear}} 
& RMSE & \textbf{0.069} & \textbf{0.051} & \textbf{0.044} & \textbf{0.040} & \textbf{0.028} \\
&& Coverage & \textbf{99.2\%} & \textbf{99.7\%} & \textbf{98.4\%} & \textbf{96.6\%} & \textbf{98.5\%} \\
\bottomrule
\end{tabular}%
}
\end{table}

\clearpage

\section{Applied Study: Severe Adverse-Event Burden in the SPECTRUM Head-and-Neck Cancer Trial}
\label{sec:supp_onc}

This section is the extended, self-contained version of the oncology applied study summarized in the main manuscript. It develops in full the secondary causal analysis of the SPECTRUM phase III trial \citep{vermorken2013spectrum, nct00460265}, whose individual-participant data were obtained through Project Data Sphere \citep{pds_headne_265}: the analytic cohort, the two outcome definitions and the estimand they identify, the posterior models and their computation, the complete subgroup and contrast results under both endpoints, the reconciliation with the clinical and statistical literature, the limitations, and computational reproducibility.

\subsection{Trial Context and Analytic Cohort}
\label{sec:supp_context}

SPECTRUM was an open-label phase III randomized trial of cisplatin and fluorouracil with or without panitumumab for recurrent or metastatic squamous-cell carcinoma of the head and neck \citep{vermorken2013spectrum,nct00460265}. The parent trial reported 657 randomized patients. The current analytic extract contains 520 participants, balanced at 260 per arm.

\subsubsection{What the Cohort Derivation Can and Cannot Establish}
\label{subsec:supp_cohort_derivation}

The reduction from 657 to 520 patients warrants explicit examination, because randomization grounds causal identification and applies to the full randomized cohort rather than automatically to an analytic subset.

What is checkable from the supplied files is the following. The analytic cohort is the set of subjects in the supplied core-variable file with a codeable treatment arm, one row per subject, and it contains exactly 260 patients per arm. The adverse-event file contains 9,934 records on 514 distinct subjects, 256 in the chemotherapy arm and 258 in the panitumumab arm; the 6 analytic-cohort patients with no adverse-event record contribute an outcome of zero at every horizon. Recomputing the summed-duration endpoint from the adverse-event file reproduces the patient-level outcome for all 520 patients exactly, so the linkage between the two files is sound. That reconciliation, from 520 to 514 and back, is complete.

What is not checkable is the step from 657 to 520. The reduction occurs in the supplied extract itself rather than in any analysis code: the core-variable file distributed with the \nolinkurl{HeadNe_Amgen_2007_265} package contains those 520 subjects, and no file in the package records which parent-trial patients were omitted or why. A reader therefore cannot tell from these materials whether the reduction is a benign completeness filter or a selection that differs by arm, and the exactly balanced 260 per arm is consistent with either a deliberately balanced release or a coincidence. This bears directly on the intention-to-treat characterization: the estimand below is the average treatment effect in the 520-person analytic cohort, and the extent to which that transfers to the parent trial's randomized population is not established here. Resolving it requires the release documentation for the extract, which would have to be requested from Project Data Sphere.

\subsubsection{Treatment Coding and the Adjustment Set}

Treatment is coded as $Z_i=1$ for panitumumab plus chemotherapy and $Z_i=0$ for chemotherapy alone. The baseline covariate vector $X_i$ contains age, ECOG status, sex, diagnosis site, HPV status, disease status, and the patient-level prior-radiotherapy indicator PRRADIO. Because the trial randomized assignment, the design propensity is known to be 0.5 and is supplied directly to ZIC-BCF-Smear rather than estimated from the realized arms. The unadjusted arm difference is therefore already unbiased for the cohort average effect.

Two specific considerations govern the selection and encoding of these baseline covariates.

First, tumor category is omitted because \texttt{TUMCAT} is an exact deterministic recode of diagnosis site (\texttt{DIAGTYPE}) in this extract (larynx and oropharynx map to Y, while hypopharynx and oral cavity map to N, with no exceptions among all 520 patients). Including both would introduce an exact linear dependency into the design matrix. Tumor category is likewise excluded from prespecified subgroup analyses, as it represents a strict coarsening of the diagnosis site variable already analyzed at full resolution.

Second, while overall prior head-and-neck cancer treatment (\texttt{PRHNTRTC}) is retained descriptively, the model adjustment set and prespecified subgroup lists utilize the clinically specific prior-radiotherapy indicator \texttt{PRRADIO}. \texttt{PRHNTRTC} is present for 204 of 260 chemotherapy patients (78.5\%) and 216 of 260 panitumumab patients (83.1\%); \texttt{PRRADIO} is present for 200 chemotherapy patients (76.9\%) and 216 panitumumab patients (83.1\%). Prior radiotherapy is a principal clinical candidate modifier for mucosal and cutaneous toxicity, the outcome being modeled. The two indicators differ for only four patients, all in the chemotherapy arm, so they are not entered together: their near-redundancy destabilized the Gamma benchmark, whereas \texttt{PRRADIO} provides the clinically relevant adjustment and subgroup variable.

Three qualifications attach to these variables. \texttt{PRHNTRTC} is a composite: the data definition table derives it as Y if the subject had prior radiotherapy as per RAHX, or other prior treatment as per TXHX or OTHEPROC \citep{amgen_20050251_ddt}. The baseline demographic records supply \texttt{PRRADIO}, and the trial radiotherapy administration domain (\texttt{radiotx}) supplies 961 detailed radiotherapy records covering all 520 subjects, 857 of them with \texttt{RAANY} $=$ Y at the record level and every patient recorded as \texttt{PRRADIO} $=$ N carrying an explicit \texttt{RAANY} $=$ N record. The subject-level agreement between \texttt{RAANY} and \texttt{PRRADIO} for all 520 patients (416 versus 104) is therefore a genuine coding concordance rather than an artifact of absent files being coalesced to no radiotherapy. The lesion-level tracking domain supplies \texttt{LSPIRRNY}, and that flag does not merely leave gaps relative to \texttt{PRRADIO} but contradicts it often enough to be informative: 42 of the 104 patients recorded as \texttt{PRRADIO} $=$ N (40.4\%) have at least one lesion marked previously irradiated, against 110 of the 416 patients recorded as \texttt{PRRADIO} $=$ Y (26.4\%), and among the 373 patients with any lesion-level information the corresponding rates are 50.0\% and 38.1\%. A flag more often positive among patients recorded as having had no prior radiotherapy is either a different construct, for instance a lesion treated in another field or for another indication, or a merge or level-coding issue; the flag is therefore not used as a covariate and this cross-tabulation is reported here for transparency. The extract also carries \texttt{PRHNTRTI}, derived from the randomization stratification variable, and the two prior-treatment versions agree for 501 of 520 patients (96.3\%). Finally, the treated and untreated strata carry the same exposure imbalance as the cohort as a whole, so a difference between them cannot be attributed to prior treatment or prior radiotherapy alone.

\subsection{Outcome Construction and Estimand}
\label{sec:supp_outcome}

Let $\mathcal{A}_i$ denote the on-study AE records for patient $i$, let $d_{ij}$ be the recorded duration of event $j$, and let $G_{ij}$ be its recorded grade. The initial patient-level outcome is
\begin{equation}
Y_i^{\mathrm{sum}}=\sum_{j\in\mathcal{A}_i}d_{ij}\,\mathbf{1}(G_{ij}\geq 3),
\label{eq:summed_duration}
\end{equation}
where records explicitly marked as pre-treatment are excluded and severe records without a recorded duration contribute zero in the current implementation. Equation~\eqref{eq:summed_duration} is a sum of event-record durations. It is not the union of severe-AE calendar-date intervals. Accordingly, simultaneous severe events can add more than one event-day on the same date, while recurrence adds further event-days. This outcome should be called \emph{summed severe-AE-record duration} or \emph{severe-AE event-day burden}, rather than distinct days in a severe-AE state.

Of 9,934 AE records, 8,518 have a recorded duration. Among 1,718 grade 3+ records, 1,589 have a recorded duration (92.5\%). The missing-duration records are not evidence that the corresponding severe event had zero duration. They remain an outcome-reconstruction limitation. The summed-duration outcome has 123 zero values (23.7\%), 397 positive values, a median positive value of 26, and a maximum of 641 event-days. Such zero-inflated nonnegative continuous outcomes require both the point mass and the skewed positive distribution to be handled explicitly \citep{liu2019review}.

\subsubsection{Distinct Severe-AE Calendar Days}
\label{subsec:supp_distinct_endpoint}

The companion outcome counts the distinct on-study calendar days on which patient $i$ experienced at least one grade 3+ AE. Let $s_{ij}$ and $e_{ij}$ denote the imputed start and end study days for a qualifying record with complete interval boundaries, and let $\mathcal{C}_i$ be the set of these records. The endpoint is
\begin{equation}
Y_i^{\mathrm{cal}}
=
\left|
\bigcup_{j\in\mathcal{C}_i}
\{s_{ij},s_{ij}+1,\ldots,e_{ij}\}
\right|.
\label{eq:distinct_days}
\end{equation}
The interval is inclusive, consistently with the data dictionary definition of AE duration as the end date minus the start date plus one. Within each patient, overlapping or adjacent intervals are merged computationally and each calendar day is counted once. Equation~\eqref{eq:distinct_days} is an outcome-definition sensitivity analysis, not a correction to Equation~\eqref{eq:summed_duration}: summed duration gives additional weight to concurrent severe conditions, whereas distinct days measure observed time with any severe toxicity.

The original subject identifier in the 9,934-record AE file links exactly to the patient-level file containing the 520 randomized patients, of whom 514 have at least one AE record. Recomputing Equation~\eqref{eq:summed_duration} from the AE file reproduces the existing patient-level outcome for all 520 patients. The distinct-day endpoint uses records not explicitly identified as prior to treatment and with severity grade at least 3.

\subsubsection{Accounting for Unresolved Interval Boundaries}
\label{subsec:supp_boundaries}

Of 1,671 qualifying records, 1,547 (92.6\%) have complete imputed start and end study days. The remaining 124 records have an unresolved boundary. Patient disposition records supply \texttt{DSDY} (safety follow-up day) and \texttt{LASTCTDY} (last-contact study day) across all 520 patients. However, these variables reflect contact or protocol milestones rather than imputed adverse-event resolution dates, and only 354 patients completed the formal safety follow-up. Complete intervals reproduce the corresponding recorded durations exactly. Thus, Equation~\eqref{eq:distinct_days} is more directly interpretable than summed record duration but remains an observed-complete-interval endpoint.

Two methodological consequences of this construction warrant explicit documentation. The endpoint builder retains only records with a complete imputed interval and assigns zero to any patient with none, so both counts matter. In all 124 unresolved records the missing boundary is the end day rather than the start day, without exception. Those records fall across 88 patients, 51 on panitumumab and 37 on chemotherapy, whose distinct-day totals therefore omit some severe-adverse-event days. A smaller group of 13 patients has no complete qualifying interval at all and is assigned zero outright, and that group is imbalanced 10 on panitumumab against 3 on chemotherapy.

This should not be over-read. The adverse-event file carries no ongoing-at-cutoff flag, so a missing end day cannot be shown to mean the event was still running when data collection stopped. The disposition records show that 93 complete intervals end after \texttt{DSDY} while only 3 end after \texttt{LASTCTDY}, supporting \texttt{LASTCTDY} as an informative observed upper bound for sensitivity diagnostics rather than proof of complete safety ascertainment. Six unresolved intervals also begin after \texttt{DSDY} itself, which is one further reason \texttt{DSDY} cannot serve directly as a censoring variable while \texttt{LASTCTDY} remains the better observed window. What is measurable is the 10-versus-3 imbalance among the forced zeros. It runs in the direction of understating the panitumumab arm, and because it acts on the zero indicator it lands specifically on the hurdle component, which is the component this study reports as uncertain. The imbalance is small relative to the 520-patient cohort but it is not neutral with respect to the conclusion it touches.

\subsubsection{The Estimand}

For either outcome $k\in\{\mathrm{sum},\mathrm{cal}\}$, the causal estimand is the analytic-cohort average treatment effect
\begin{equation}
\Delta^{k}=\mathbb{E}\{Y_i^{k}(1)-Y_i^{k}(0)\mid i\in\mathcal{S}\},
\label{eq:ate}
\end{equation}
where $\mathcal{S}$ is the 520-person analytic cohort. Under random assignment, consistency, and adequate outcome reconstruction, Equation~\eqref{eq:ate} is the effect of \emph{assignment} on accrued event-day burden and carries an intention-to-treat interpretation within $\mathcal{S}$, subject to the caveat of Section~\ref{subsec:supp_cohort_derivation} about how $\mathcal{S}$ was formed.

To avoid misinterpretation, three properties of the estimand in Equation~\eqref{eq:ate} must be highlighted: 1. it is not a fixed-horizon restricted mean; 2. it is not a competing-risk estimand (death, progression, and discontinuation are post-randomization events that affect accumulated burden and are not modeled); and 3. it is not a rate per unit of treatment exposure. Treatment duration differs sharply between the arms, and structurally so, because panitumumab continues as maintenance after chemotherapy stops. That difference is a consequence of assignment, so it lies on the causal pathway from assignment to accrued burden: it is part of the effect being estimated rather than a nuisance to be removed, and conditioning on it would both block part of the effect and condition on a post-treatment variable. Equation~\eqref{eq:ate} is therefore the effect of assignment under the trial's own differential treatment duration, and Section~\ref{sec:supp_exposure} quantifies how much of the effect travels through that channel.

\begin{table}[htbp]
\centering
\caption{Observed summed-duration outcome by randomized arm. Positive-part summaries condition on $Y_i^{\mathrm{sum}}>0$.}
\label{tab:supp_observed}
\resizebox{\textwidth}{!}{%
\begin{tabular}{lrrrrrr}
\toprule
Arm & $n$ & Zero (\%) & Mean & Positive mean & Positive median & Maximum \\
\midrule
Chemotherapy & 260 & 24.6 & 26.75 & 35.49 & 20.5 & 263 \\
Panitumumab + chemotherapy & 260 & 22.7 & 42.43 & 54.89 & 36.0 & 641 \\
\bottomrule
\end{tabular}
}
\end{table}

\begin{table}[htbp]
\centering
\caption{Descriptive comparison of the summed-duration and distinct-calendar-day endpoints. The unadjusted difference is panitumumab plus chemotherapy minus chemotherapy.}
\label{tab:supp_endpoint_descriptive}
\resizebox{\textwidth}{!}{%
\begin{tabular}{lrrrrr}
\toprule
Endpoint & Total days & Chemotherapy mean & Panitumumab + chemotherapy mean & Unadjusted difference & Zeros \\
\midrule
Summed AE-record days & 17,988 & 26.75 & 42.43 & 15.68 & 123 \\
Distinct severe-AE calendar days & 13,107 & 19.18 & 31.23 & 12.06 & 123 \\
\bottomrule
\end{tabular}
}
\end{table}

The interval union removes 4,881 record-days (27.1\% of the original total) across 217 patients. It reduces the unadjusted arm difference by 3.62 days (23.1\%) without changing the zero-outcome count. Every positive recorded duration in this extract has an associated complete date interval, which explains why the hurdle indicator is unchanged.

\subsection{Treatment Exposure, Common-Horizon Truncation, and Person-Time}
\label{sec:supp_exposure}

The endpoints of Section~\ref{sec:supp_outcome} count days on which a patient was in a severe-adverse-event state. That quantity accrues with time on treatment, so it is interpretable as a between-arm comparison only if the reader is told how far the arms differ in exposure. They differ a great deal, and the difference is structural rather than incidental: panitumumab continues as maintenance after chemotherapy stops. Table~\ref{tab:supp_exposure} reports the distribution of TRTDUR, defined in the data definition table as the last dose date minus the first dose date plus one \citep{amgen_20050251_ddt}, so it measures time on treatment and not length of follow-up.

\begin{table}[htbp]
\centering
\caption{Treatment duration (TRTDUR, days) by randomized arm in the 520-person analytic cohort. TRTDUR is a post-randomization quantity and appears in no adjustment set.}
\label{tab:supp_exposure}
\begin{tabular}{lrrrrrr}
\toprule
Arm & $n$ & Mean & SD & Median & Maximum \\
\midrule
Chemotherapy & 260 & 83.72 & 50.14 & 89.0 & 276 \\
Panitumumab plus chemotherapy & 260 & 130.30 & 127.19 & 112.5 & 1{,}149 \\
\bottomrule
\end{tabular}
\end{table}

Mean time on treatment is 56\% longer under panitumumab, and the maxima differ by a factor of four. The consequence for the endpoint is large. Table~\ref{tab:supp_truncation} truncates every patient's observed accrual at a common study day and recomputes the unadjusted arm difference in distinct severe-adverse-event days. Patient disposition records provide a separate observed-window diagnostic based on last-contact study day (\texttt{LASTCTDY}), reported below, but they do not convert treatment duration into a uniform follow-up measure.

\begin{table}[htbp]
\centering
\caption{Mean distinct severe-adverse-event calendar days by arm under common-horizon truncation, on the 520-person basis with 260 patients per arm. The six patients with no adverse-event record contribute zero at every horizon. The primary row uses the imputed start and end days; the three remaining date variants recorded in the file are shown for comparison. This table is a diagnostic, not a corrected estimate; see the text.}
\label{tab:supp_truncation}
\resizebox{\textwidth}{!}{%
\begin{tabular}{lrrrr}
\toprule
Date variant and horizon & Chemotherapy & Panitumumab & Difference \\
\midrule
\multicolumn{4}{l}{\emph{Imputed start, imputed end (primary)}} \\
\quad Uncapped & 19.18 & 31.23 & $+12.06$ \\
\quad 120 days & 15.55 & 19.65 & $+4.10$ \\
\quad 90 days & 11.96 & 15.33 & $+3.37$ \\
\quad 84 days & 11.25 & 14.39 & $+3.14$ \\
\midrule
\multicolumn{4}{l}{\emph{Raw start, raw end}} \\
\quad Uncapped & 18.83 & 29.27 & $+10.44$ \\
\quad 84 days & 10.90 & 13.35 & $+2.44$ \\
\midrule
\multicolumn{4}{l}{\emph{Raw start, imputed end}} \\
\quad Uncapped & 19.06 & 30.18 & $+11.12$ \\
\quad 84 days & 11.13 & 13.77 & $+2.64$ \\
\midrule
\multicolumn{4}{l}{\emph{Imputed start, raw end}} \\
\quad Uncapped & 18.95 & 29.85 & $+10.90$ \\
\quad 84 days & 11.02 & 13.64 & $+2.62$ \\
\bottomrule
\end{tabular}
}
\end{table}

Two cautions govern how Table~\ref{tab:supp_truncation} is read, and they matter more than the numbers in it.

The first is that this is truncation of observed accrual, not a fixed-horizon estimand. Capping at day 84 removes events recorded after day 84, but it does not create observation for a patient whose follow-up ended before day 84. Patient disposition records supply \texttt{LASTCTDY} for all 520 patients, showing that 47 chemotherapy patients and 41 panitumumab patients have last contact before day 84. Capping complete intervals at the patient-specific last-contact day gives mean observed-window burdens of 19.18 and 30.90 days by arm; at day 84, the corresponding means are 11.25 and 14.33 days, with 213 and 219 patients reaching that contact horizon. The horizon diagnostic checks the first alternative explanation of that collapse, that differential follow-up rather than exposure produces the attenuation, but it is not a corrected estimate. At day 84 the mean among patients whose contact actually reaches the horizon is 11.25 in the chemotherapy arm against 14.93 under panitumumab, a difference of 3.68 days that exceeds the 3.08-day difference over all patients; at day 120 the corresponding difference is 5.13 days (16.12 against 21.25) against 4.03 over all patients. The difference is therefore not smaller among patients observed through the horizon, so simple truncation of the shortest-observed records is not what produces the attenuation. Restricting to patients whose contact reaches the horizon conditions on a post-randomization variable, however, so informative censoring remains possible and the direction of any resulting selection bias cannot be signed in advance. These quantities remain diagnostics, not corrected fixed-horizon estimates, because early contact loss, incomplete event intervals, and competing events require an explicit censoring and state definition. Capping applies only to intervals with complete boundaries, so these diagnostics inherit, in bounded form, the forced-zero problem documented above. The attenuation in Table~\ref{tab:supp_truncation} also reproduces across all four date variants, so it is a property of the data rather than of one imputation convention.

The second is that whether the effect reverses on a per-exposure basis depends on the normalization, and the two natural normalizations disagree. Table~\ref{tab:supp_person_time} reports both.

\begin{table}[htbp]
\centering
\caption{Descriptive per-exposure burden by arm. The patient-level mean weights every patient equally and therefore up-weights patients with short exposure; the aggregate rate pools days over pooled treatment-time. Neither is an adjusted causal effect.}
\label{tab:supp_person_time}
\resizebox{\textwidth}{!}{%
\begin{tabular}{lrrrrr}
\toprule
Arm & Total treatment days & Total severe-AE days & Patient-level mean rate & Aggregate rate \\
\midrule
Chemotherapy & 21{,}766 & 4{,}986 & 53.72 & 22.91 \\
Panitumumab plus chemotherapy & 33{,}877 & 8{,}121 & 49.91 & 23.97 \\
\bottomrule
\end{tabular}
}
\end{table}

The patient-level mean of per-100-treatment-day rates reverses the sign of the arm comparison, at 53.72 against 49.91, while the aggregate person-time rate does not, at 22.91 against 23.97. Fitting the same two models to the per-exposure rate reproduces that ambiguity rather than resolving it: ZIC-BCF-Smear estimates $5.19$ severe-adverse-event days per 100 treatment-days (95\% credible interval $[-4.39, 17.28]$, $P(>0)=0.844$) and the Gamma hurdle $-5.23$ (95\% credible interval $[-37.06, 25.19]$, $P(>0)=0.369$). Both are compatible with no effect. The Gamma benchmark's exposure-rate estimate also exhibits sensitivity to minor adjustment specifications: entering both prior-treatment indicators instead of prior radiotherapy alone shifts the point estimate from $-5.23$ to $-3.47$ (95\% credible interval $[-33.40, 24.51]$), a swing produced entirely by how a covariate differing for four patients is parameterized. This volatility illustrates the fragility of the parametric benchmark under near-redundant linear predictors. The defensible summary is severe attenuation once exposure is accounted for, not reversal.

\subsubsection{Why This Is Reported as Descriptive}

Everything in this subsection is descriptive, and the randomized analysis of Section~\ref{sec:supp_models} is left standing rather than replaced. The reason is not caution for its own sake. Treatment duration is measured after randomization, so dividing by it conditions on a post-treatment variable and can induce selection bias. The direction of that bias is not fixed in advance and is not recoverable from these files. A patient who discontinues early because of toxicity contributes a short denominator alongside a large numerator, which inflates that patient's rate and makes their arm look worse; a patient who discontinues early for an unrelated reason pushes the same arm's rate down. Which of the two dominates depends on why patients discontinued, and the extract records no discontinuation reason. The per-exposure quantities are therefore labeled descriptive rates per treatment-day rather than adjusted causal effects, and the truncation table is labeled a diagnostic.

Three methodological considerations must be kept distinct. First, treatment exposure is directly recorded and reported here. Second, patient-level contact and safety follow-up milestones are recorded, but because safety follow-up was completed by only 354 patients, they support censoring sensitivity diagnostics rather than a fully observed fixed-horizon safety endpoint. Third, the primary estimand is governed by treatment duration as a post-randomization consequence of assignment, which can neither be ignored in interpretation nor naively conditioned on in estimation. Incorporating patient disposition records therefore makes a fixed-horizon analysis feasible in principle, but does not render it assumption-free or automatically identified from the observed endpoint.

The right target remains a prespecified fixed-horizon restricted-mean or multistate formulation. Although disposition records supply last-contact timing, a valid fixed-horizon analysis still requires explicit structural choices regarding censoring rules, intercurrent treatment policies, and competing risks such as death, progression, and discontinuation. If instead a causal quantity that removes the duration channel is wanted from the existing data, it requires an explicit assumption about a treatment-duration policy, stated as such. An offset on treatment duration is not that assumption and should not be presented as one.

\begin{table}[htbp]
\centering
\caption{Disposition and observed follow-up records across randomized arms. \texttt{DSDY} denotes the recorded safety follow-up day, while \texttt{LASTCTDY} denotes the last-contact study day. Neither variable constitutes a fixed-horizon outcome by itself.}
\label{tab:supp_disposition}
\resizebox{\textwidth}{!}{%
\begin{tabular}{lrrrrrr}
\toprule
Arm & $n$ & Completed safety follow-up & Mean DSDY & Mean LASTCTDY & LASTCTDY $<84$ & LASTCTDY $\geq 84$ \\
\midrule
Chemotherapy & 260 & 181 (69.6\%) & 110.7 & 376.8 & 47 & 213 \\
Panitumumab plus chemotherapy & 260 & 173 (66.5\%) & 159.5 & 403.5 & 41 & 219 \\
\bottomrule
\end{tabular}%
}
\end{table}

\subsection{Posterior Models and Computation}
\label{sec:supp_models}

ZIC-BCF-Smear models the hurdle and positive components with separate Bayesian causal forests. The hurdle model yields posterior draws of the probability of a positive reconstructed burden, and the continuous model operates on the positive outcome. A smearing re-transformation returns the total-outcome contrast to the original event-day scale \citep{duan1983smearing}. Each outcome was fit as four independent chains of 4,000 retained draws after 5,000 burn-in iterations, with seeds 1 through 4, and the reported posterior is the pooled set of 16,000 draws. For every posterior draw, the analysis saves the unit-level total-outcome CATE and the hurdle-margin CATE. A subgroup posterior is the average of its members' unit-level effects within each draw. A contrast posterior is formed by subtracting the two subgroup averages within draw. This preserves posterior covariance, rather than comparing unrelated marginal intervals.

The Gamma-hurdle benchmark uses the same treatment coding, analytic cohort, and dummy-encoded baseline covariates for each outcome, drawn from the same shared definition file so that the two specifications cannot drift apart. It models the positive outcome with a Gamma component and the zero versus positive component with a hurdle model. Each fit used four chains of 4,000 draws, matched to the ZIC-BCF-Smear budget, with the same four seeds; the benchmark posterior is a Laplace approximation, so its draws are independent and no burn-in iteration is involved. The Gamma fit also returns posterior hurdle probabilities, allowing direct comparison of the probability-difference margin. Flexible Bayesian models for zero-inflated outcomes have been developed precisely because both distributional features can matter for causal estimation and uncertainty propagation \citep{oganisian2021zeroinflated}.

Convergence diagnostics are computed by the analysis scripts and follow the construction of Section~\ref{subsec:supp_convergence}. On the summed-duration endpoint the largest Gelman--Rubin statistic across all monitored quantities is 1.001 for ZIC-BCF-Smear and 1.000 for the Gamma hurdle, the smallest effective sample sizes are 2,670 and 16,000, and the largest absolute Geweke statistics are 2.03 and 1.42. For the Gamma hurdle the first two values are exact by construction rather than evidence of mixing, its Laplace posterior being sampled independently. On the distinct-calendar-day endpoint the corresponding values are 1.002, 2,777, and 2.21 for ZIC-BCF-Smear and 1.000, 16,000, and 1.42 for the Gamma hurdle. The assessed Geweke set is large, so a plain reading of its maximum needs a multiplicity context: eighteen functionals across four chains give 72 standard normal statistics per fit, whose 5\% family-wise critical value is about 3.38, and the observed maxima of 2.03 and 2.21 sit well below it. The final ZIC-BCF-Smear fits use a 5,000-iteration burn-in because the originally reported 2,000-iteration configuration did not clear this bar: there the largest statistics were 3.67 on the summed-duration endpoint and 4.90 on the distinct-calendar-day endpoint, both attaching to $\sigma_c$, the log-scale residual standard deviation that sets the smearing factor, with family-wise probabilities of about 0.017 for the summed-duration endpoint and about $7 \times 10^{-5}$ for the distinct-calendar-day endpoint under the correction. An investigation by direct replication, archived with the analysis scripts, established both the mechanism and the remedy. Conditional on the forest fit, $\sigma_c$ receives a conjugate inverse-chi-square update, so its individual draws are nearly uncorrelated (lag-one autocorrelation about $0.04$, effective sample size about $93\%$ of nominal), which is why effective sample size and Gelman--Rubin diagnostics look clean throughout; but the joint level of $\sigma_c$ and the fitted log-scale surface equilibrates through a two-way Gibbs coupling whose relaxation time varies by chain, so at the shorter burn-in some retained chains carried a small decaying level transient. Across twenty-four fresh seed combinations at that configuration the per-chain Geweke statistics of $\sigma_c$ had standard deviation about 1.46 against the calibrated null value of 1, and every window pairing touching the early portion of a flagged chain flagged together. Extending burn-in to 5,000 iterations on the identical seeds removed every family-wise-threatening value while moving posterior means by at most $0.15\%$, orders of magnitude below reported precision, so the longer burn-in was adopted for all final models. The two remaining diagnostics cannot detect such transients by construction: $\widehat{R}$ pools split-half means across chains, against which one chain's early tilt is negligible, and effective sample size measures variation around a fixed level, which a one-off decaying shift does not generate, leaving the early-versus-late Geweke contrast as the only of the three sensitive to residual burn-in. Because the quantity involved is the one that sets the smearing factor, Section~\ref{subsec:supp_smearing_test}'s locally smeared sensitivity analysis remains the relevant robustness check, and it bounds the consequence of exactly this kind of residual-scale behavior.

The homoskedasticity assumption behind the smearing factor is tested as described in Section~\ref{subsec:supp_smearing_test}. The studentized Breusch--Pagan test gives $p = 0.122$ on summed duration and $p = 0.157$ on distinct calendar days. The locally smeared sensitivity analysis moves the summed-duration ATE from 14.54 to 11.70 event-days, a 19.5\% reduction that remains within the scalar-smeared posterior interval, and moves the distinct-day ATE from 9.75 to 9.39, a 3.7\% reduction. The conclusions of this study are therefore not carried by the scalar smearing assumption.

\begin{table}[htbp]
\centering
\caption{Posterior average treatment effects. Positive values favor a higher burden or a higher probability of a positive reconstructed severe-AE burden under panitumumab assignment.}
\label{tab:supp_model_comparison}
\resizebox{\textwidth}{!}{%
\begin{tabular}{lrrr}
\toprule
Model and estimand & Estimate & 95\% credible interval & $P(>0)$ \\
\midrule
ZIC-BCF-Smear, summed duration (event-days) & 14.27 & [4.69, 24.23] & 0.999 \\
Gamma Hurdle, summed duration (event-days) & 15.95 & [6.15, 26.30] & 0.999 \\
\emph{Unadjusted arm difference, summed duration} & \emph{15.68} & \emph{--} & \emph{--} \\
ZIC-BCF-Smear, distinct calendar days & 10.00 & [3.57, 16.62] & 0.999 \\
Gamma Hurdle, distinct calendar days & 12.20 & [5.56, 19.23] & 1.000 \\
\emph{Unadjusted arm difference, distinct calendar days} & \emph{12.06} & \emph{--} & \emph{--} \\
ZIC-BCF-Smear, positive reconstructed burden, summed-duration fit & 0.0288 & [$-0.0379$, 0.1019] & 0.786 \\
Gamma Hurdle, positive reconstructed burden, summed-duration fit & 0.0171 & [$-0.0554$, 0.0893] & 0.683 \\
ZIC-BCF-Smear, positive reconstructed burden, distinct-day fit & 0.0273 & [$-0.0392$, 0.1023] & 0.774 \\
Gamma Hurdle, positive reconstructed burden, distinct-day fit & 0.0171 & [$-0.0554$, 0.0893] & 0.683 \\
\bottomrule
\end{tabular}
}
\end{table}

Both models retain a credible positive effect when overlapping events are counted once. Relative to summed duration, the ZIC-BCF-Smear ATE decreases by 4.27 days and the Gamma-hurdle ATE decreases by 3.75 days. This expected attenuation reflects the change in endpoint rather than a reversal of the safety conclusion. The probability-margin estimates remain uncertain and almost unchanged because the zero indicator is unchanged. Thus, the supported conclusion concerns accumulated recorded burden, not a clearly established change in the fraction of patients with a positive reconstructed severe-AE burden. The Gamma-hurdle benchmark now uses the same four-chain, $4{,}000$-draw-per-chain budget as ZIC-BCF-Smear on both endpoints, and every estimate above is within Monte Carlo error of the shorter benchmark run, so the specification comparison no longer mixes unequal posterior sample sizes.

Both models also sit close to the unadjusted arm differences, and under randomization that is the benchmark rather than a weakness. Because assignment was randomized, the raw contrast is itself an unbiased estimate of the average treatment effect in this cohort, so a fitted model whose point estimate departed sharply from it would owe the reader an explanation. The Gamma-hurdle distinct-day estimate of 12.20 days against an unadjusted 12.06 is therefore reassuring rather than uninformative. What the adjusted fits contribute is not bias removal, which randomization already supplies, but precision, the decomposition into participation and intensity, and the resolution of the effect by covariate profile. This point was previously stated only in passing and only in the supplement, and it is now stated in the main text as well.

An independent Gamma-hurdle fit of the distinct-day endpoint with seed 2026, outside the four seeds used by the main analysis, is not used for the updated primary summary. The four-chain result above has $\widehat{R}=1.000$ and a stable positive posterior, which is sufficient for the reported specification comparison.

\begin{figure}[htbp]
\centering
\includegraphics[width=0.96\textwidth]{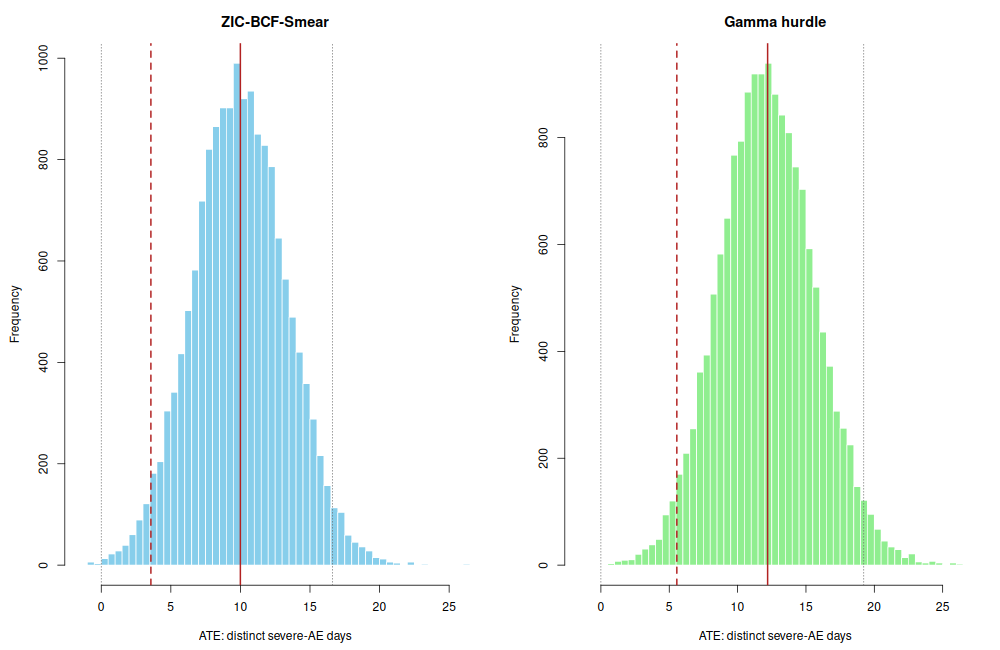}
\caption{Posterior distributions of the average treatment effect on distinct severe-AE calendar days under ZIC-BCF-Smear (left) and the Gamma-hurdle benchmark (right). In each panel the solid vertical line marks the posterior mean, the dashed line the 2.5\% posterior quantile, and the dotted lines the zero reference and the 97.5\% posterior quantile.}
\label{fig:supp_distinct_ate}
\end{figure}

Figure~\ref{fig:supp_distinct_ate} displays the two posterior distributions of the distinct-calendar-day ATE that underlie the third and fourth rows of Table~\ref{tab:supp_model_comparison}. Three features are worth noting. First, both posteriors place essentially all of their mass to the right of zero, which is the graphical counterpart of $P(\mathrm{ATE}>0)=0.999$ for ZIC-BCF-Smear and $1.000$ for the Gamma hurdle; the zero reference sits in the extreme left tail rather than outside the plotted support, so the evidence is strong without being degenerate. Second, the ZIC-BCF-Smear posterior is centred lower (10.00 days) and is mildly right-skewed, with a longer upper tail extending beyond 20 days. That shape is expected from the smearing re-transformation, which propagates uncertainty in the positive-part residual distribution onto the original day scale and therefore does not force a symmetric effect posterior. Third, the two posteriors overlap over almost their entire range, so the 2.20-day gap between the ZIC-BCF-Smear and Gamma-hurdle point estimates is small relative to the width of either distribution and should not be read as a substantive disagreement about the average effect.

\subsection{Subgroup Results and Benchmark Sensitivity}
\label{sec:supp_subgroups}

Table~\ref{tab:supp_zic_subgroups} reports the ZIC-BCF-Smear CATEs. The subgroup effects are all positive in posterior mean, but the width of the intervals grows in small strata. No pairwise total-outcome contrast has a 95\% credible interval excluding zero. The strongest ZIC-BCF-Smear signals, ECOG 0 versus ECOG 1 and HPV-negative versus HPV-not-recorded, remain compatible with no contrast.

\begingroup
\small
\begin{longtable}{llrrrr}
\caption{ZIC-BCF-Smear subgroup CATEs for summed severe-AE-record duration.}\label{tab:supp_zic_subgroups}\\
\toprule
Covariate & Level & $n$ & CATE & 95\% credible interval & $P(>0)$ \\
\midrule
\endfirsthead
\toprule
Covariate & Level & $n$ & CATE & 95\% credible interval & $P(>0)$ \\
\midrule
\endhead
Overall & All & 520 & 14.27 & [4.69, 24.23] & 0.999 \\
Sex & Male & 455 & 14.20 & [4.53, 24.37] & 0.999 \\
Sex & Female & 65 & 14.76 & [2.28, 29.89] & 0.990 \\
ECOG status & ECOG 0 & 150 & 18.74 & [5.76, 35.51] & 1.000 \\
ECOG status & ECOG 1 & 370 & 12.46 & [3.27, 22.47] & 0.996 \\
Diagnosis site & Hypopharynx & 75 & 13.37 & [3.56, 24.47] & 0.997 \\
Diagnosis site & Larynx & 157 & 14.00 & [3.49, 26.50] & 0.997 \\
Diagnosis site & Oral cavity & 142 & 14.57 & [3.22, 28.53] & 0.995 \\
Diagnosis site & Oropharynx & 146 & 14.72 & [3.92, 27.77] & 0.998 \\
HPV status & Negative & 270 & 16.85 & [5.88, 28.96] & 1.000 \\
HPV status & Positive & 69 & 18.17 & [4.50, 36.73] & 0.997 \\
HPV status & Not recorded & 181 & 8.93 & [$-$1.02, 19.27] & 0.960 \\
Disease status & Newly diagnosed & 113 & 16.97 & [4.43, 33.48] & 0.998 \\
Disease status & Recurrent & 407 & 13.52 & [4.14, 23.54] & 0.999 \\
Prior radiotherapy & No prior radiotherapy & 104 & 16.87 & [4.10, 33.76] & 0.997 \\
Prior radiotherapy & Prior radiotherapy & 416 & 13.62 & [4.17, 23.65] & 0.999 \\
Age group & $<50$ & 72 & 15.83 & [4.87, 29.76] & 0.999 \\
Age group & 50--59 & 240 & 13.92 & [4.45, 23.90] & 0.999 \\
Age group & 60--69 & 172 & 13.92 & [4.35, 24.31] & 0.998 \\
Age group & $70+$ & 36 & 15.15 & [4.51, 27.95] & 0.998 \\
\bottomrule
\end{longtable}
\endgroup

The Gamma-hurdle benchmark is deliberately included as a specification sensitivity analysis, not as a selector of a clinically preferred subgroup claim. Its total-duration ECOG 0 minus ECOG 1 contrast is 26.94 event-days (95\% credible interval [4.74, 51.94]), whereas the corresponding ZIC-BCF-Smear contrast is 6.28 event-days (95\% credible interval [$-2.57$, 22.20]). The cross-model disagreement remains substantive, and prevents a robust claim that baseline ECOG modifies the total event-day burden.

The models also disagree on certain hurdle-margin contrasts. Gamma Hurdle estimates ECOG 0 minus ECOG 1 at 0.169 (95\% credible interval [0.014, 0.323]) and HPV-negative minus HPV-not-recorded at 0.224 (95\% credible interval [0.066, 0.387]). The corresponding ZIC-BCF-Smear contrasts are 0.041 (95\% credible interval [$-0.026$, 0.176]) and 0.067 (95\% credible interval [$-0.020$, 0.238]), and no ZIC-BCF-Smear hurdle-margin contrast excludes zero. Since the average hurdle effect is uncertain for both models, these isolated benchmark-dependent findings are hypothesis-generating only.

\subsubsection{Two Contrasts That Need Naming}
\label{subsec:supp_named_contrasts}

Two of the contrasts in these tables should be described for what they measure rather than for the covariate they are labeled with.

The HPV-not-recorded category is created by coalescing patients with no recorded baseline HPV result, and it contains 181 of the 520 patients. Any contrast against it is therefore a missingness contrast, not a comparison of two measured biological states, and the HPV-negative minus HPV-not-recorded contrast that the Gamma benchmark reports as credible on the participation margin is a statement about which patients had a recorded result rather than about HPV biology. It is retained in the tables because dropping the category would silently drop those 181 patients from the cohort, but it should not be read as evidence about human papillomavirus.

Prior radiotherapy is the clinically motivated candidate modifier, and it behaves like a candidate rather than an established one. On the summed-duration endpoint the effect is larger among the 104 patients without prior radiotherapy (16.87 event-days, 95\% credible interval [4.10, 33.76]) than among the 416 with prior radiotherapy (13.62, [4.17, 23.65]), with a contrast of 3.25 event-days (95\% credible interval [$-7.61$, 17.82]); on the distinct-day endpoint the corresponding estimates are 13.09 and 9.22 days, with contrast 3.87 days (95\% credible interval [$-4.10$, 15.62]). These contrasts are not credible, and the smaller stratum is limited. The composite PRHNTRTC differs from PRRADIO for only four patients, all in the chemotherapy arm, so its subgroup contrast is a near-duplicate of the one reported here; it is omitted from the prespecified lists because it would add an unadjusted comparison that carries no additional information. The radiotx domain confirms PRRADIO exactly at the subject level for all 520 patients, none of them absent from the domain; the lesion-level flag is not a substitute because it is lesion-specific and, as documented above, contradicts PRRADIO for 42 of the 104 unirradiated patients.

\begin{figure}[htbp]
\centering
\includegraphics[width=0.78\textwidth]{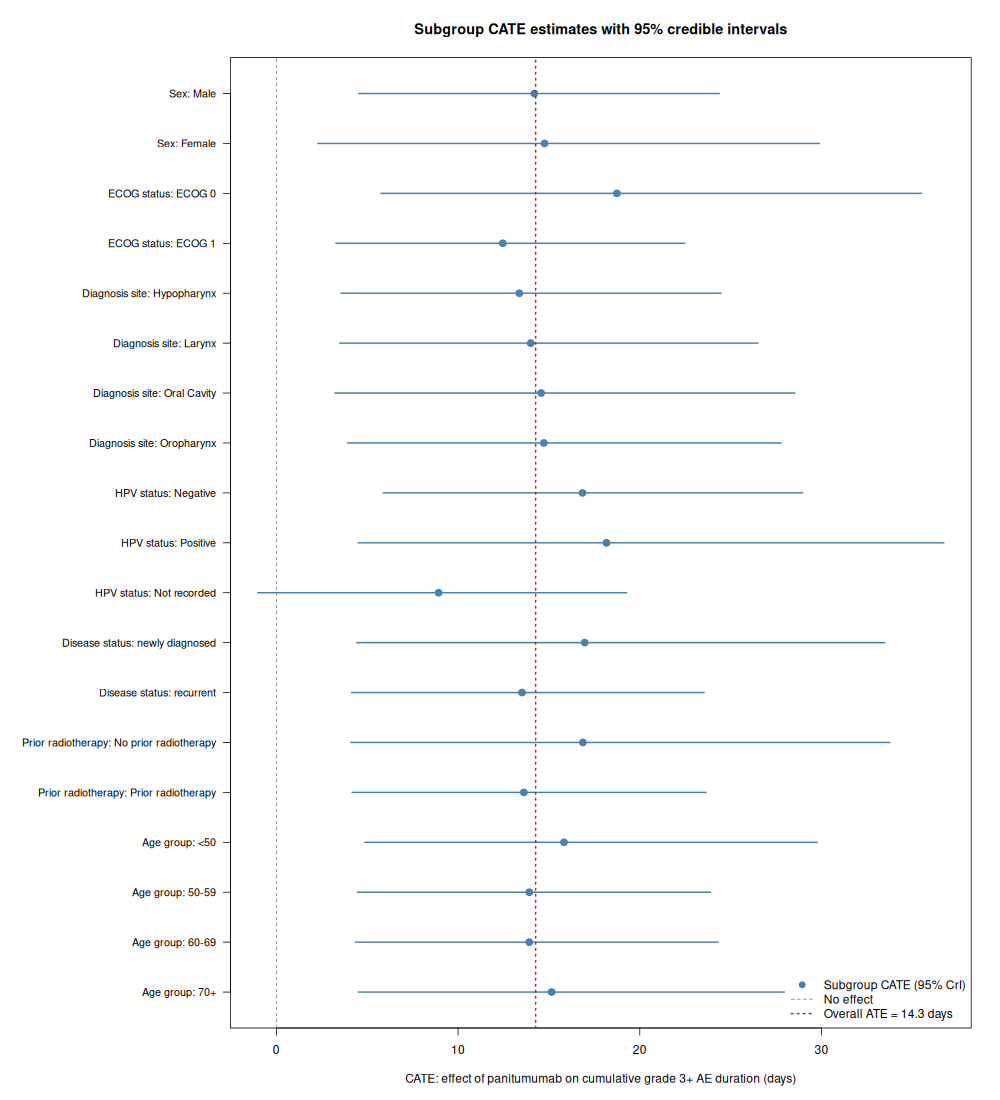}
\caption{ZIC-BCF-Smear subgroup CATEs for summed severe-AE-record duration, with 95\% credible intervals. The dashed vertical line at zero is the no-effect reference and the dotted vertical line is the overall ATE of 14.3 event-days.}
\label{fig:supp_zic_cates}
\end{figure}

\begin{figure}[htbp]
\centering
\includegraphics[width=0.78\textwidth]{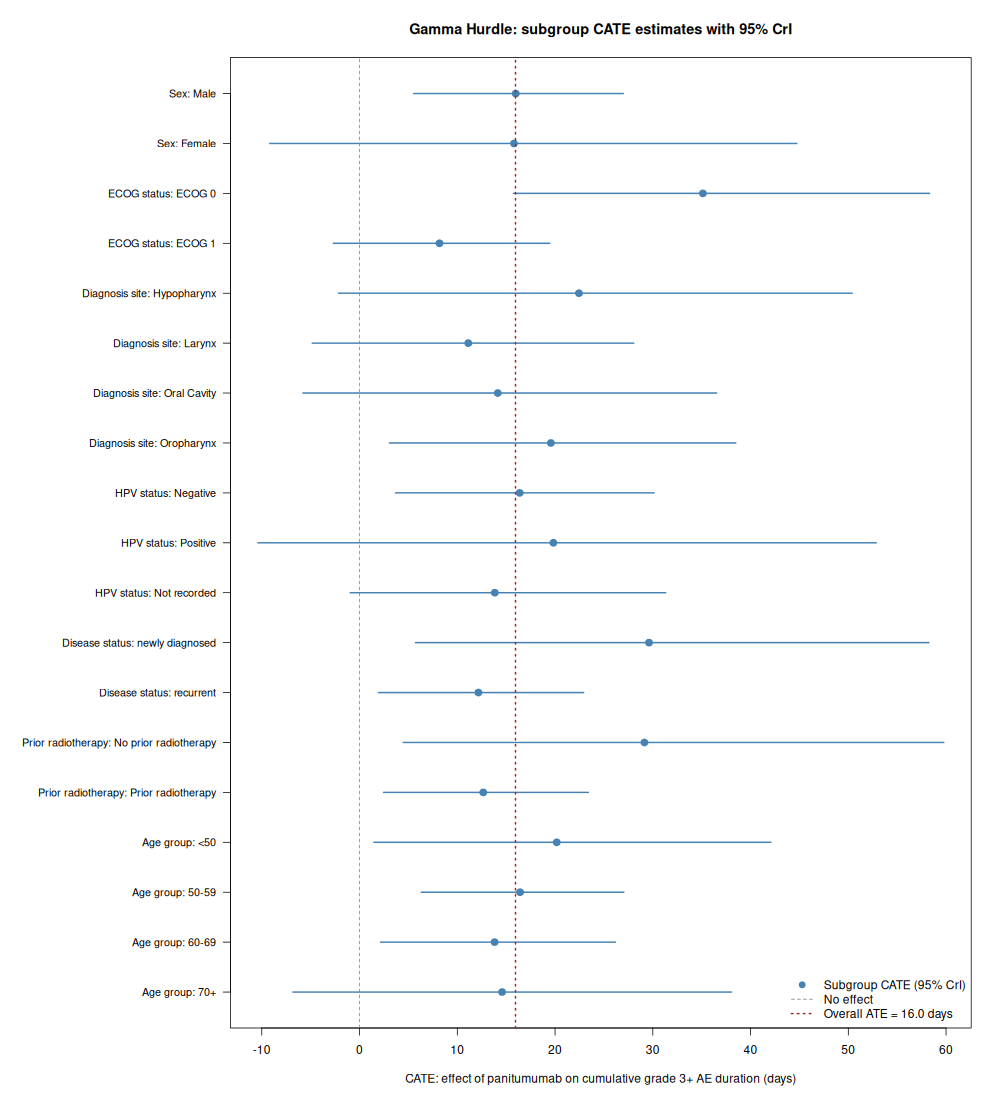}
\caption{Gamma-hurdle subgroup CATEs for the same summed severe-AE-record duration outcome, on the same 95\% credible-interval scale. The dotted vertical line is the Gamma-hurdle overall ATE of 16.1 event-days. Note that the horizontal axis spans roughly twice the range of Figure~\ref{fig:supp_zic_cates}.}
\label{fig:supp_gamma_cates}
\end{figure}

Figure~\ref{fig:supp_zic_cates} presents the same estimates graphically and makes the shrinkage behaviour of the causal forest visible. Every subgroup interval lies to the right of zero except HPV-not-recorded, and every posterior mean falls within roughly five event-days of the overall ATE, so the ZIC-BCF-Smear fit describes a broadly homogeneous effect whose apparent subgroup variation is small relative to within-subgroup uncertainty. Interval width tracks subgroup size almost monotonically, with the widest intervals for HPV-positive patients ($n=69$), women ($n=65$), and patients aged 70 years or older ($n=36$), which is the expected consequence of borrowing strength across a tree ensemble rather than fitting separate stratum-specific models.

Figure~\ref{fig:supp_gamma_cates} shows the corresponding Gamma-hurdle panel and, read against Figure~\ref{fig:supp_zic_cates}, isolates where the two specifications actually part company. The Gamma-hurdle estimates are more dispersed: several intervals now cross zero (women, hypopharynx, larynx, oral cavity, HPV-positive, HPV-not-recorded, and patients aged 70 years or older), and the ECOG contrast is driven by an ECOG 0 posterior mean of 35.2 event-days against 8.3 event-days for ECOG 1, a separation with no counterpart in the ZIC-BCF-Smear panel. Because both models are fit to identical data, covariates, and treatment coding, the difference is attributable to the specification: the parametric Gamma positive component with linear dummy-encoded covariates extrapolates subgroup means without the regularization that the causal forest imposes. The comparison therefore supports the interpretation of the ECOG signal as specification sensitivity rather than as corroborated effect modification.

\subsubsection{Distinct-Calendar-Day Subgroup Sensitivity}
\label{subsec:supp_distinct_subgroups}

Table~\ref{tab:supp_distinct_zic_subgroups} reports the ZIC-BCF-Smear CATEs for the overlap-adjusted endpoint. Posterior means are positive in every prespecified subgroup, but no between-group contrast has a 95\% credible interval excluding zero. The largest contrast is ECOG 0 minus ECOG 1 at 6.69 days (95\% credible interval [$-0.83$, 20.43]). The corresponding Gamma-hurdle contrast is 23.34 days (95\% credible interval [7.83, 40.98]). Because the ZIC-BCF-Smear interval includes zero, this apparent ECOG heterogeneity remains model-dependent and should not be presented as a substantive subgroup finding.

\begingroup
\small
\begin{longtable}{llrrrr}
\caption{ZIC-BCF-Smear subgroup CATEs for distinct severe-AE calendar days.}\label{tab:supp_distinct_zic_subgroups}\\
\toprule
Covariate & Level & $n$ & CATE & 95\% credible interval & $P(>0)$ \\
\midrule
\endfirsthead
\toprule
Covariate & Level & $n$ & CATE & 95\% credible interval & $P(>0)$ \\
\midrule
\endhead
Overall & All & 520 & 10.00 & [3.57, 16.62] & 0.999 \\
Sex & Male & 455 & 10.12 & [3.55, 17.01] & 0.999 \\
Sex & Female & 65 & 9.10 & [0.40, 18.63] & 0.980 \\
ECOG status & ECOG 0 & 150 & 14.76 & [4.86, 28.25] & 1.000 \\
ECOG status & ECOG 1 & 370 & 8.07 & [1.85, 14.58] & 0.994 \\
Diagnosis site & Hypopharynx & 75 & 9.55 & [2.65, 17.22] & 0.997 \\
Diagnosis site & Larynx & 157 & 10.88 & [3.15, 20.34] & 0.998 \\
Diagnosis site & Oral cavity & 142 & 9.46 & [1.82, 18.30] & 0.993 \\
Diagnosis site & Oropharynx & 146 & 9.80 & [2.37, 18.36] & 0.997 \\
HPV status & Negative & 270 & 11.92 & [4.36, 20.05] & 1.000 \\
HPV status & Positive & 69 & 12.22 & [3.17, 23.74] & 0.997 \\
HPV status & Not recorded & 181 & 6.28 & [$-$0.73, 13.50] & 0.961 \\
Disease status & Newly diagnosed & 113 & 13.02 & [3.51, 25.67] & 0.999 \\
Disease status & Recurrent & 407 & 9.16 & [2.89, 15.61] & 0.999 \\
Prior radiotherapy & No prior radiotherapy & 104 & 13.09 & [3.40, 26.02] & 0.998 \\
Prior radiotherapy & Prior radiotherapy & 416 & 9.22 & [2.97, 15.66] & 0.999 \\
Age group & $<50$ & 72 & 11.70 & [3.86, 21.60] & 0.999 \\
Age group & 50--59 & 240 & 9.85 & [3.41, 16.54] & 0.999 \\
Age group & 60--69 & 172 & 9.50 & [3.09, 16.26] & 0.999 \\
Age group & $70+$ & 36 & 9.93 & [2.81, 18.24] & 0.997 \\
\bottomrule
\end{longtable}
\endgroup

\begin{figure}[htbp]
\centering
\includegraphics[width=0.78\textwidth]{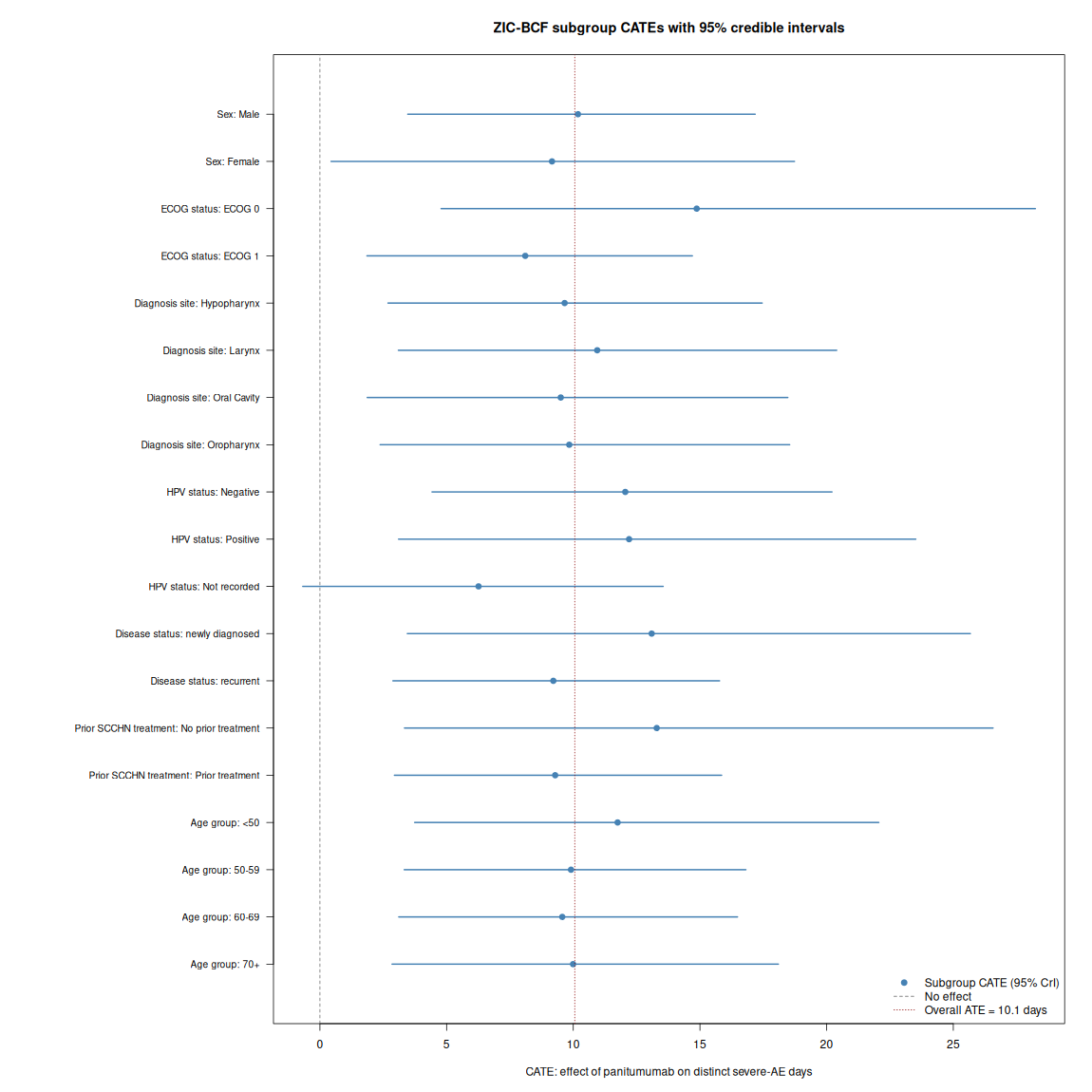}
\caption{ZIC-BCF-Smear subgroup CATEs for distinct severe-AE calendar days, with 95\% credible intervals. The dashed vertical line at zero is the no-effect reference and the dotted vertical line is the overall ATE of 10.1 days.}
\label{fig:supp_distinct_zic_cates}
\end{figure}

Figure~\ref{fig:supp_distinct_zic_cates} plots these estimates and should be compared directly with Figure~\ref{fig:supp_zic_cates}. The two panels have the same ordering of subgroups and nearly the same shape, but the distinct-day axis is compressed by roughly a third, which is the graphical signature of counting each calendar day once. No subgroup changes sign, and the only interval that still covers zero is HPV-not-recorded, exactly as in the summed-duration fit. The ECOG 0 estimate remains the largest and the widest, and its interval visibly overlaps the ECOG 1 interval, which is why the ECOG contrast in this endpoint also fails to exclude zero. Taken together, the two figures show that the overlap adjustment rescales the ZIC-BCF-Smear effect surface without reorganizing it, so the conclusions about heterogeneity are not artefacts of double-counting concurrent events.

\subsection{Position in the Clinical and Statistical Literature}
\label{sec:supp_literature}

The clinical trial literature supports four distinct findings. First, the direction of the average burden contrasts is clinically coherent. The primary SPECTRUM report found that the panitumumab regimen had more severe toxicity in several conventional safety categories \citep{vermorken2013spectrum}. Second, neither numerical ATE is directly comparable with a published SPECTRUM effect, because standard safety tables summarize event incidence, grade, and category rather than summed record durations or overlap-adjusted calendar days. Third, previous SPECTRUM subgroup discussions focused principally on efficacy and HPV or p16 status; the available HPV-specific safety analysis is a conference abstract and does not estimate baseline-modified causal effects on either burden outcome \citep{stohlmacher2012hpv}. Fourth, the review identified no prior study conducting the present secondary causal analysis of the SPECTRUM data.

The novel contribution is therefore specific and clinically interpretable. It combines 1. a patient-centred endpoint defined as distinct calendar days with at least one grade 3+ AE rather than AE counts or summed record durations, 2. an ATE of randomized panitumumab assignment estimated on that day scale, 3. a direct sensitivity analysis showing that the credibly positive overall conclusion persists after concurrent events are not double-counted, and 4. flexible, uncertainty-aware subgroup-effect estimation for a semicontinuous outcome. To our knowledge, this is the first secondary causal analysis of the SPECTRUM data using an overlap-adjusted severe-AE-day endpoint. This claim does not recast the known clinical direction as a new discovery and does not claim priority over every cumulative-toxicity analysis in oncology. The method is motivated by a broad statistical literature on semicontinuous outcomes and flexible causal modeling \citep{liu2019review,oganisian2021zeroinflated}.

Against this background, the primary question is how far the ZIC-BCF-Smear results themselves converge with what the clinical literature already supports. The convergence is strongest, and is essentially complete, for the direction and stability of the average burden effect. ZIC-BCF-Smear estimates 14.27 additional event-days (95\% credible interval [4.69, 24.23], $P(>0)=0.999$) on the summed-duration scale and 10.00 additional days (95\% credible interval [3.57, 16.62], $P(>0)=0.999$) on the overlap-adjusted calendar-day scale. Both sign and near-certainty match the qualitative SPECTRUM safety finding that the panitumumab-containing regimen produced more severe toxicity \citep{vermorken2013spectrum}, and the conclusion survives the one outcome-definition change that could plausibly have manufactured it, namely the double-counting of concurrent severe events. What the literature does not provide is a numerical benchmark: no published SPECTRUM analysis reports an effect on either duration scale, so external agreement can be assessed for direction and for robustness, but not for magnitude. The 4.27-day attenuation between the two endpoints is best read as a statement about how much of the summed-duration burden reflects simultaneity, and it is internal evidence rather than a replication of any published quantity.

Convergence is weaker, though not contradictory, at the level of mechanism. ZIC-BCF-Smear finds no credible population-average effect on the probability of a positive reconstructed severe-AE burden, estimating a 2.88 percentage-point increase (95\% credible interval $-3.79$ to 10.19) from the summed-duration fit and 2.73 percentage points (95\% credible interval $-3.92$ to 10.23) from the distinct-day fit. That margin is not literally the probability of experiencing any severe adverse event, since 13 patients with severe events enter the data with a zero burden because their observation intervals were incomplete. The published safety evidence emphasized particular adverse-event categories and grades rather than an aggregate any-severe-AE indicator, so there is no literature quantity that this margin either confirms or refutes. The substantive implication is internal to the model decomposition: the credible effect is carried by the positive component, that is, by how much severe-AE time accumulates among patients whose reconstructed burden is positive, rather than by the hurdle component that governs whether that burden is positive at all. Since 76.3\% of the cohort has a positive burden in both arms, the hurdle margin has limited room to move, and the supported claim concerns positive intensity rather than occurrence. That intensity can reflect event duration, event multiplicity, overlap, treatment exposure, and incomplete interval reconstruction, so it should not be labeled persistence specifically.

Convergence is weakest for effect modification, and here the ZIC-BCF-Smear results are best described as literature-consistent but uninformative. Posterior means are positive in every prespecified subgroup on both endpoints, no pairwise contrast excludes zero, and the strongest signals, ECOG 0 versus ECOG 1 and HPV-negative versus HPV-not-recorded, remain compatible with no contrast. The SPECTRUM literature does not establish either ECOG or HPV status as a modifier of the causal burden of panitumumab-associated severe toxicity, and the available HPV-specific analysis is a conference abstract that does not analyze either duration endpoint \citep{stohlmacher2012hpv}. A regularized model returning no credible heterogeneity is therefore concordant with the absence of established modifiers.

The Gamma-hurdle benchmark converges with the clinical literature on the same principal average-effect result. Its estimated effect of 15.95 event-days (95\% credible interval [6.15, 26.30]) for summed record durations, and 12.20 days (95\% credible interval [5.56, 19.23]) for distinct calendar days, agree in sign and broad magnitude with the corresponding ZIC-BCF-Smear estimates, and its any-severe-AE margin of 1.71 percentage points (95\% credible interval $-5.54$ to 8.93) is likewise uncertain. Agreement between a flexible nonparametric and a parametric specification on both endpoints strengthens the average-effect conclusion, since it is not an artefact of either functional form. The Gamma-hurdle point estimates are close to the unadjusted arm differences, and under randomization that is the internal benchmark a model should meet rather than a shortcoming: the raw contrast is itself unbiased for the cohort average effect. It is not external validation, since the published trial report did not analyze either duration endpoint, but neither is it a weakness of the fit.

The Gamma-hurdle subgroup findings are less literature-convergent, and this is where the two specifications diverge. For summed duration, Gamma Hurdle estimates an ECOG 0 minus ECOG 1 contrast of 26.94 event-days (95\% credible interval [4.74, 51.94]), while the corresponding ZIC-BCF-Smear contrast of 6.28 event-days has a credible interval that includes zero. For distinct calendar days, the contrast is 23.34 days (95\% credible interval [7.83, 40.98]) under Gamma hurdle and 6.69 days (95\% credible interval [$-0.83$, 20.43]) under ZIC-BCF-Smear. Gamma Hurdle also gives positive hurdle-margin contrasts for ECOG 0 versus ECOG 1 and HPV-negative versus HPV-not-recorded, whereas the corresponding ZIC-BCF-Smear contrasts remain uncertain. Given that the clinical literature establishes neither modifier, these Gamma-specific signals are best interpreted as model-sensitive hypotheses rather than as externally corroborated clinical effect modification.

\subsection{Limitations}
\label{sec:supp_limitations}

The first limitation is outcome reconstruction. The sum in Equation~\eqref{eq:summed_duration} can double count concurrent events, and 7.5\% of severe AE records lack its duration field. Equation~\eqref{eq:distinct_days} directly addresses overlap and confirms the positive overall conclusion, but 124 qualifying records have an unresolved end boundary, affecting 88 patients, and 13 patients with no complete qualifying interval are assigned zero outright with a 10-to-3 imbalance against panitumumab, as detailed in Section~\ref{subsec:supp_boundaries}. Patient disposition records provide \texttt{DSDY} and \texttt{LASTCTDY} across all participants, but these do not resolve missing event end dates; the companion endpoint should therefore be interpreted as observed distinct calendar days represented by complete intervals, rather than total toxicity time through a common follow-up horizon. Neither endpoint silently replaces the other because they answer different clinical questions.

The second limitation is differential treatment exposure, and it is the most consequential one. Mean time on treatment is 56\% longer under panitumumab, structurally so because the drug continues as maintenance after chemotherapy stops, and truncating accrual at a common study day collapses the unadjusted arm difference from 12.06 days to about 3 days. Section~\ref{sec:supp_exposure} develops this in full. The estimand is coherent, and under randomization it is identified, but it is the effect of assignment under the trial's own differential duration and not a per-exposure or fixed-horizon quantity. Because treatment duration is post-randomization it can be neither ignored in interpretation nor conditioned on in estimation, and the per-exposure analyses are reported as descriptive for that reason. A larger accumulated burden may therefore arise through a mixture of more toxicity, longer event duration, more co-occurring events, incomplete intervals, and more time on treatment, and this analysis does not separate those channels.

The third limitation is incomplete follow-up ascertainment. Trial disposition records document \texttt{DSDY} and \texttt{LASTCTDY} for all 520 patients, but only 354 completed protocol-specified safety follow-up, and last contact does not ensure comprehensive safety assessment. The analysis therefore does not standardize follow-up, model competing death or progression, or estimate a censoring-adjusted fixed-horizon endpoint. A prespecified fixed-horizon restricted-mean or multistate analysis is now feasible in principle, but it would require an explicit censoring rule, treatment policy, and competing-event definition. The present manuscript reports the last-contact-capped endpoint only as a descriptive diagnostic.

The fourth limitation concerns the analytic cohort. Randomization licenses the causal reading, and randomization attaches to the parent trial's 657 patients rather than automatically to the 520-person extract analyzed here. Section~\ref{subsec:supp_cohort_derivation} sets out what the supplied files do and do not establish about that reduction; the 520-to-514 reconciliation is complete, and the 657-to-520 step is not documented in any supplied file.

The fifth limitation concerns heterogeneity. Several subgroups are small, especially women, HPV-positive patients, patients aged 70 years or older, and the 100 patients with no prior head-and-neck treatment. The subgroup comparisons are exploratory and are not multiplicity-adjusted. Two of them measure something other than the covariate they are named for or rest on a stratum too small to resolve, as Section~\ref{subsec:supp_named_contrasts} sets out. And the Gamma-hurdle versus ZIC-BCF-Smear disagreement for the ECOG contrast persists under the distinct-day endpoint. This is useful sensitivity evidence, but it reinforces that no baseline effect-modification claim should be based on one model alone.

The sixth limitation concerns the resolution of prior-treatment information. \texttt{PRHNTRTC} remains a composite of prior radiotherapy and other prior treatments, whereas the baseline demographic file supplies the specific patient-level \texttt{PRRADIO} indicator alongside the detailed \texttt{radiotx} domain. \texttt{PRRADIO} agrees exactly with subject-level radiotherapy administration records and is utilized in the primary adjustment set and subgroup specifications. The lesion-level \texttt{LSPIRRNY} flag contradicts \texttt{PRRADIO} for 42 of the 104 patients recorded as not irradiated (Section~\ref{sec:supp_context}), so it cannot be used to recover a more granular patient-level radiation history until that discrepancy is resolved. Radiation-site and lesion-specific effect modification therefore remains an extension rather than an established result.

\subsection{Reproducibility}
\label{sec:supp_reproducibility}

\begingroup
\raggedright
All scripts are located in the \path{applied_study_oncology/} directory of the replication repository accompanying the \texttt{zicbcf} package \citep{zicbcf_package}. Data extraction and cohort preparation are documented in \path{prepare_analysis_data.R}, with domain reconciliation and censoring diagnostics executed by \path{New_Data_v2/run_domain_diagnostics.R}. Shared adjustment sets and subgroup definitions are specified in \path{oncology_common.R}, with the randomized propensity fixed at 0.5. Primary model fits, convergence metrics, and smearing diagnostics are generated by \path{run_zicbcf_oncology.R} and \path{run_gamma_hurdle_oncology.R}, while distinct-day endpoint construction and subgroup analyses are implemented in \path{New_Data/build_nonoverlap_calendar_day_endpoint.R} and \path{New_Data/run_nonoverlap_calendar_day_analysis.R}. Detailed execution scripts and output data objects are archived in the repository.

\subsection{Data Sharing}
\label{sec:supp_onc_sharing}

The individual-participant data analyzed in this section cannot be redistributed, and the reason is worth stating precisely because the usual intuition does not apply. Section III.B(ii) of the Project Data Sphere Platform Agreement prohibits disclosure of the data to any third party who is not a registered Authorized User, and Article VIII requires that they be held in strict confidence \citep{pds_platform_agreement}. This is a contractual confidentiality obligation rather than a patient-privacy one, so the instinct that further de-identification makes a file safe to post does not automatically hold here.

We state the practical position rather than a legal one. The agreement does not expressly define the point at which a derived quantity ceases to be ``the Data'', and Article VII does permit disclosure of what is reasonably necessary to publish or report research results, which is the basis on which the aggregate estimates, tables and figures in this section appear at all. Our operating rule is therefore that per-patient derivatives, including patient-level endpoint files and unit-level posterior draws, are not posted without written confirmation from Project Data Sphere or Amgen. The repository accordingly excludes every patient-level file and every per-patient derivative for this study, while committing all analysis scripts, so that a registered Authorized User can reproduce the study end to end from the supplied package.

Two consequences follow for the authorship and reporting of this work. Access is granted per individual, so every co-author who handled the patient-level data must be separately registered as an Authorized User. And Article VII makes an acknowledgement mandatory in prescribed wording; that sentence appears verbatim in the acknowledgements of the main manuscript and is not paraphrased. Two matters cannot be settled from outside the download record and should be confirmed against it: whether the accepted agreement is the same 2020 version cited here, and whether the Amgen 20050251 package carried dataset-specific Supplemental Terms, which Section III.A(ii) of the agreement permits.
\par
\endgroup

\clearpage

\section{Applied Study: Dental Insurance and Dental Expenditures in the Medical Expenditure Panel Survey}
\label{sec:supp_meps}

This section is the extended, self-contained version of the health-economics applied study referred to in the main manuscript. It develops in full the application to the 2023 Medical Expenditure Panel Survey: the data source and cleaning, the descriptive structure of the semicontinuous dental-expenditure outcome, the estimation pipeline and MCMC configuration, the draw-by-draw construction of subgroup effects, the complete set of subgroup CATE estimates and pairwise contrasts, and a detailed reconciliation of every finding against the applied dental health-economics literature, including the policy implications of the estimated heterogeneity. Because that literature is mature and quantitatively specific, this study functions as an external check on ZIC-BCF-Smear: it asks whether the estimator recovers gradients that decades of applied dental health-economics research have already established.

\subsection{Data Source and Outcome Construction}
\label{sec:supp_applied_data}

\subsubsection{The Medical Expenditure Panel Survey}
The Medical Expenditure Panel Survey (MEPS) is a nationally representative survey of the U.S.\ civilian non-institutionalized population, administered by the Agency for Healthcare Research and Quality, and is a standard data source for studying healthcare utilization and expenditure. The 2023 Full-Year Consolidated File (HC-251) is used here \citep{meps2023file}. Dental expenditures recorded in MEPS are a canonical semicontinuous outcome and have been used repeatedly to study dental-coverage effects \citep{manski2002private, christian2014determinants, luo2020rural, naib2026dental}.

\subsubsection{Variables}
The analysis uses the following variables. The outcome $Y$ is \texttt{DVTEXP23}, total annual dental expenditure in U.S.\ dollars, which is semicontinuous: a large point mass at exactly zero (non-utilizers) plus a heavily right-skewed continuous distribution among users. The treatment $Z$ is \nolinkurl{DNTINS23_M23}, a dental-insurance coverage indicator coded $1$ for covered and $0$ otherwise. The covariates $X$ are age, sex, race and ethnicity, family income, poverty category, census region, and marital status, which are standard determinants of dental utilization and expenditure in the MEPS literature \citep{manski1998demographic, christian2014determinants, zardak2023utilization}. The design variables are the person weight \texttt{PERWT23F}, the variance stratum \texttt{VARSTR}, and the variance primary sampling unit \texttt{VARPSU}. After excluding invalid, negative, and inapplicable survey responses (for example, out-of-scope codes on the treatment and outcome fields), the causal analysis retains $N = 18{,}763$ records.

Race and ethnicity are taken from \texttt{RACETHX}, the standard MEPS analytic variable, whose five levels are Hispanic, non-Hispanic White, non-Hispanic Black, non-Hispanic Asian, and non-Hispanic other or multiple race. We intentionally avoid race-only classifications (such as \texttt{RACEV2X}) because they lack an ethnicity dimension and would misclassify 3, 758 Hispanic respondents as White and 144 as Black in this sample. Utilizing \texttt{RACETHX} ensures that estimated racial and ethnic contrasts conform to established standards in health-services research.

\subsubsection{Encoding of Nominal Covariates}
\label{subsec:supp_meps_encoding}

The design matrix is constructed by expanding all nominal categorical factors into indicator columns using \texttt{model.matrix}. Expanding categorical variables is essential for tree-based ensemble estimators; naive numeric coercion would impose an artificial ordinal ranking on nominal attributes (such as census region or race and ethnicity), forcing decision trees to split along arbitrary administrative code orderings rather than evaluating categorical subsets.

\subsubsection{Descriptive Structure of the Outcome}
The dental-expenditure outcome is exactly the regime that motivates a two-part, distribution-free treatment. In the analytic sample, $54.9\%$ of respondents report zero dental spending, and $54.5\%$ on a weighted basis. Among the positive (active) subpopulation, the distribution is severely right-skewed: the median positive expenditure is $\$361$, the mean is $\$1{,}038$, the 90th and 95th percentiles are $\$2{,}505$ and $\$4{,}511$, and the maximum is $\$40{,}350$. Figure~\ref{fig:supp_zero_pos} shows the zero-versus-positive decomposition, Figure~\ref{fig:supp_loghist} the approximately log-normal shape of the positive part on the log scale, and Figure~\ref{fig:supp_any_by_ins} the raw association between coverage and the probability of any dental spending.

\begin{figure}[htbp]
\centering
\includegraphics[width=0.6\textwidth]{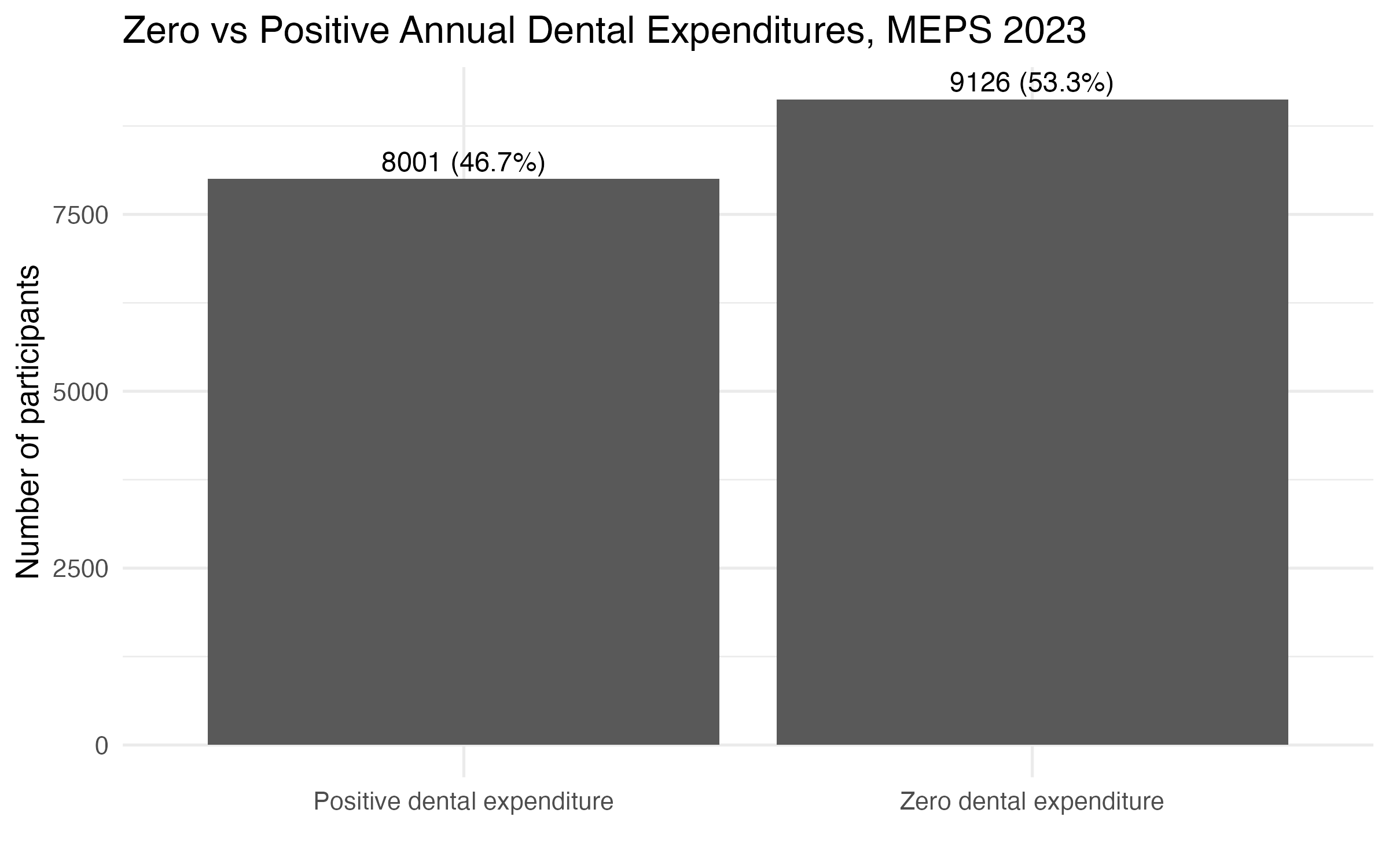}
\caption{Decomposition of the dental-expenditure outcome into the zero point mass ($\approx 53\%$ of records) and the positive continuous part.}
\label{fig:supp_zero_pos}
\end{figure}

\begin{figure}[htbp]
\centering
\includegraphics[width=0.6\textwidth]{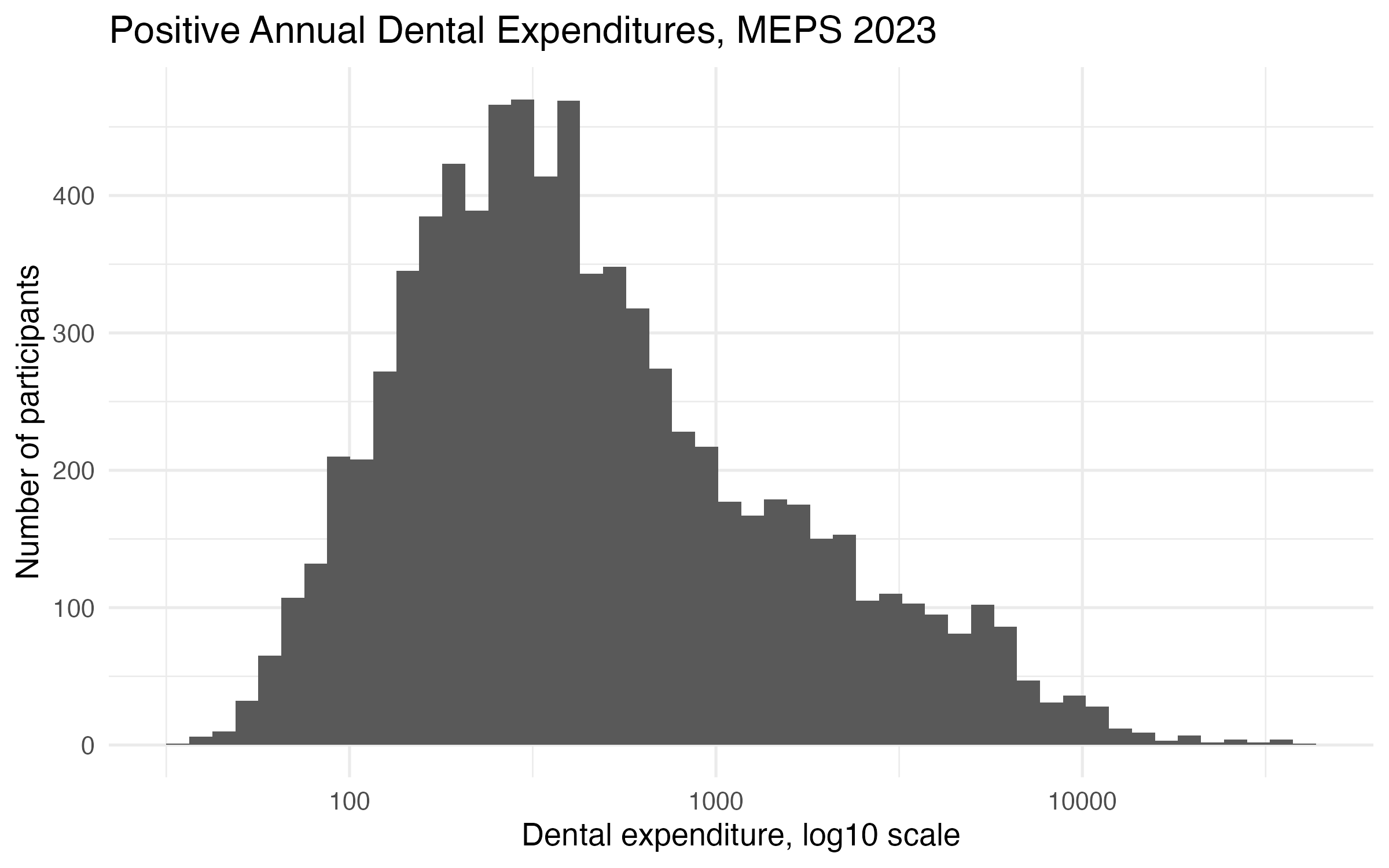}
\caption{Histogram of positive dental expenditures on the log scale, showing the strong right skew on the raw scale and the approximate log-normality that motivates the log-link continuous stage combined with a nonparametric (smearing) re-transformation.}
\label{fig:supp_loghist}
\end{figure}

\begin{figure}[htbp]
\centering
\includegraphics[width=0.6\textwidth]{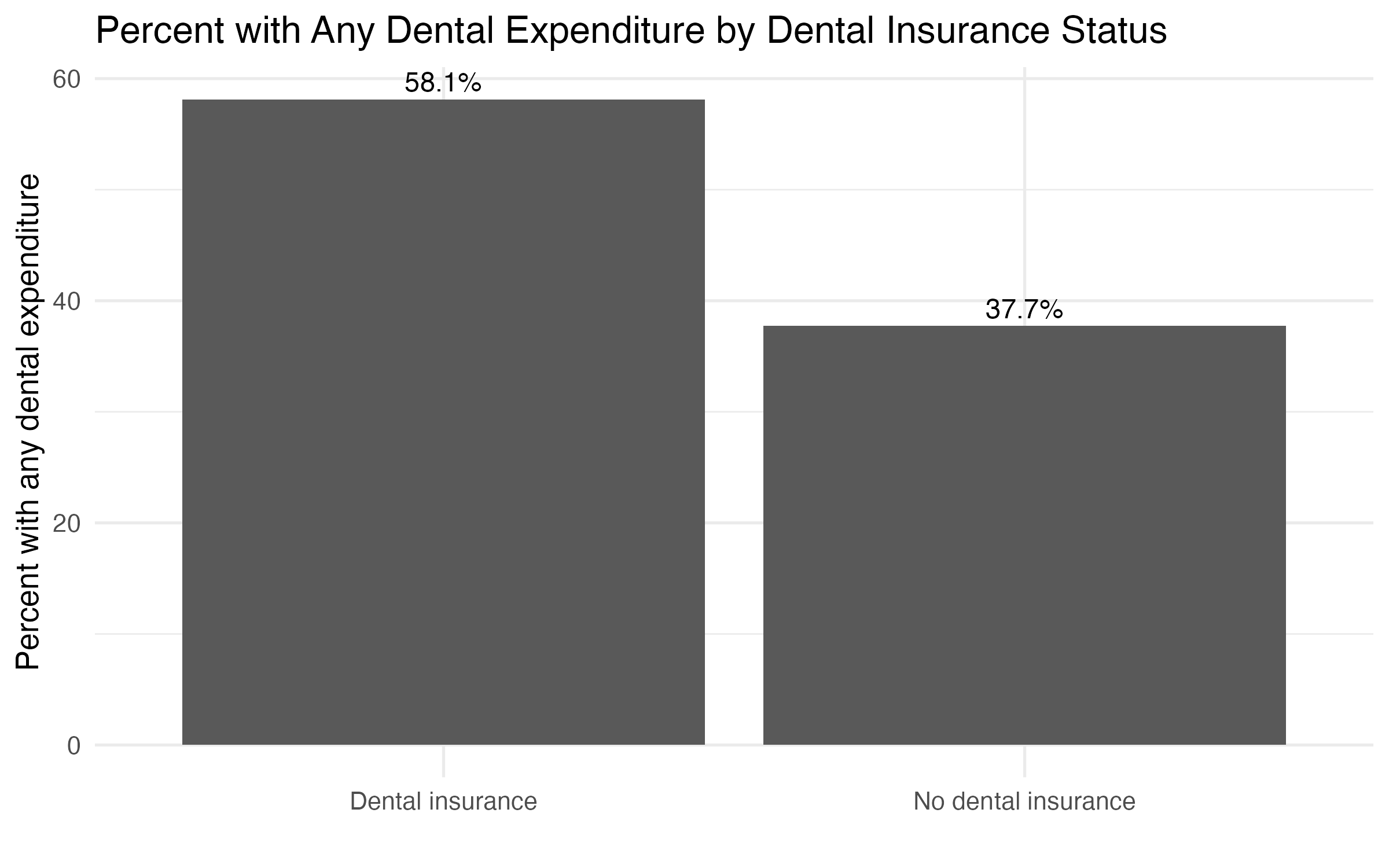}
\caption{Unadjusted probability of any dental expenditure by dental-insurance status. Coverage is strongly associated with clearing the zero hurdle; the causal analysis adjusts this association for confounding and separates it from the intensity effect.}
\label{fig:supp_any_by_ins}
\end{figure}

\subsection{Estimation Pipeline}
\label{sec:supp_applied_est}

\subsubsection{Propensity Models}
Two propensity models are fit. A full-sample logistic regression estimates $\widehat{\pi}(X) = P(Z=1 \mid X)$, which is supplied to the hurdle-stage prognostic forest. A second logistic regression is fit strictly on the active subpopulation $\{i : Y_i > 0\}$ to estimate the subpopulation propensity $\widehat{\pi}^{+}(X) = P(Z=1 \mid X, Y>0)$, which is supplied to the continuous-stage prognostic forest to implement the Subpopulation Propensity Adjustment (SPA). SPA is required because conditioning on the post-treatment event $\{Y>0\}$ opens a collider path; adjusting for $\widehat{\pi}^{+}$ within the active set restores identification of the intensity treatment effect under Assumption~\ref{as:supp_spa}.

\begin{assumption}[Subpopulation strong ignorability]
\label{as:supp_spa}
Conditional on covariates $X$ and active status $\{Y>0\}$, treatment $Z$ is independent of the potential positive-intensity outcomes, $(Y^{+}(0), Y^{+}(1)) \perp Z \mid X, Y>0$. As established in the main manuscript, this holds under full-sample strong ignorability together with conditional two-part independence of the participation and intensity processes.
\end{assumption}

\subsubsection{ZIC-BCF-Smear Configuration}
The \nolinkurl{zicbcf_smear} routine of the \texttt{zicbcf} package \citep{zicbcf_package} is used. The hurdle stage is a Probit BCF fit on all $N$ records (participation), and the continuous stage is a Gaussian BCF fit on $\log(Y^{+})$ over the active subset (intensity), each following the BCF construction of \citet{hahn2020bcf} with BART components \citep{chipman2010bart, hill2011bayesian}. Log-scale predictions are mapped to the dollar scale draw-by-draw with Duan's nonparametric smearing factor \citep{duan1983smearing}, which avoids the parametric multiplier $\exp(\sigma^2/2)$ and its misspecification bias \citep{manning2001estimating}. Four independent chains are run with seeds 1 through 4, each for $2{,}000$ retained draws after $5{,}000$ burn-in iterations, giving $8{,}000$ pooled draws. Chains are run and summarized one at a time and then discarded, so that peak memory stays near the size of a single fit on an $18{,}763$-record sample; what is retained is the per-draw average of unit-level effects within each reported domain, which is sufficient for every subgroup effect and every contrast reported below, together with the unit-level posterior means used by the marginal figures.

\subsubsection{Survey Design and the Two Reported Estimands}
\label{subsec:supp_meps_design}

MEPS is a stratified, clustered, unequal-probability sample of the U.S.\ civilian non-institutionalized population \citep{meps_survey_estimation}. The analytic sample spans 105 variance strata and 262 primary sampling units, every stratum containing at least two, and the person weights sum to $331{,}946{,}662$ persons. An unweighted average over the $18{,}763$ records represents a sample-specific quantity over an oversampled design, where posterior uncertainty reflects superpopulation sampling rather than finite-population survey design.

Both estimands are reported side by side for every effect, because they answer distinct scientific questions and their comparison is methodologically informative.

The \emph{analytic-sample average} is the unweighted average of unit-level effects, $|G|^{-1}\sum_{i \in G}\tau^{(s)}(X_i)$, summarized by its posterior. It describes the sample.

The \emph{survey-weighted population target} is the weighted average $\left(\sum_{i \in G} w_i\right)^{-1}\sum_{i \in G} w_i \tau^{(s)}(X_i)$ with $w_i = \texttt{PERWT23F}$, taken within each posterior draw. Its interval adds a design-based variance component to the posterior variance. The design component is obtained by Taylor linearization \citep{binder1983variances, wolter2007introduction}: for a domain mean the linearized value of unit $i$ is $u_i = w_i d_i (\tau_i - \widehat{\theta}_G) / \sum_j w_j d_j$ with $d_i$ the domain indicator, these are summed within primary sampling unit, and the variance is the usual between-PSU within-stratum sum of squares $\sum_h \frac{n_h}{n_h - 1}\sum_{a \in h}(u_{ha} - \bar{u}_h)^2$. Units outside a domain contribute zero but remain attached to their sampling unit, so domain estimation is handled correctly. For a between-domain contrast the linearized values of the two domains are differenced before aggregation, which retains the covariance induced by their sharing primary sampling units. The reported interval is the posterior mean plus or minus $1.96$ times $\sqrt{V_{\mathrm{posterior}} + V_{\mathrm{design}}}$.

Two properties of this construction warrant explicit mention. First, it treats sampling variability and posterior uncertainty as approximately independent, which represents the standard decomposition when carrying model-based estimates onto complex survey designs \citep{binder1983variances}. Second, because the Bayesian forest algorithm does not incorporate survey weights directly during tree splitting, weighting enters through the linear aggregation of unit-level causal effects. Accordingly, income tertile cutpoints are defined using survey-weighted population quantiles ($\$56{,}133$ and $\$125{,}320$, compared to unweighted sample quantiles of $\$45{,}400$ and $\$110{,}600$) to reflect national economic thresholds accurately.

It is not knowable in advance whether respecting the design widens or narrows an interval, since clustering ordinarily widens and stratification ordinarily narrows. In this application the design component turns out to be small relative to the posterior: for the overall dollar-scale effect the design standard error is $\$1.34$ against a posterior standard deviation of $\$15.73$. That is a finding rather than an assumption, and it did not have to come out that way.

\subsubsection{Draw-by-Draw Subgroup and Contrast Estimation}
All subgroup summaries are computed from the posterior draw matrix, so that uncertainty propagates correctly.
\begin{enumerate}
\item \textbf{Overall ATE.} For each draw $s$, average the unit-level effects over all $N$ units, $\mathrm{ATE}^{(s)} = N^{-1}\sum_i \tau^{(s)}_{\mathrm{resp}}(X_i)$; the posterior mean and $2.5/97.5\%$ quantiles give the point estimate and $95\%$ credible interval.
\item \textbf{Subgroup CATE.} For a subgroup $G$, average within each draw over its members only, $\mathrm{CATE}^{(s)}_G = |G|^{-1}\sum_{i \in G}\tau^{(s)}_{\mathrm{resp}}(X_i)$, giving a full posterior for the subgroup effect.
\item \textbf{Contrasts.} For two subgroups $G_1, G_2$, form the per-draw difference $\Delta^{(s)} = \mathrm{CATE}^{(s)}_{G_1} - \mathrm{CATE}^{(s)}_{G_2}$; the posterior of $\Delta$ yields a credible interval and the posterior probability $P(\Delta > 0)$.
\end{enumerate}
This is the same logic used for the overall ATE, restricted to the units defining each subgroup, and it is what distinguishes a proper subgroup CATE analysis from inspecting the marginal spread of unit-level point estimates.

\subsection{Results}
\label{sec:supp_applied_results}

\subsubsection{Average Treatment Effect}
The estimated ATE of dental insurance on annual dental expenditure is $\$188.02$ as an analytic-sample average, with a $95\%$ credible interval of $[\$156.95,\ \$220.67]$, and $\$184.67$ as a survey-weighted population target, with a design-aware $95\%$ interval of $[\$154.63,\ \$214.70]$. The posterior probability of a positive effect is $1.00$ under both. Figure~\ref{fig:supp_ate} shows the ATE posterior and Figure~\ref{fig:supp_unit_hist} the marginal distribution of the $18{,}763$ unit-level posterior-mean CATEs.

Convergence diagnostics follow Section~\ref{subsec:supp_convergence}. Across the seven monitored functionals the largest $\widehat{R}$ is $1.007$, the smallest effective sample size is $1{,}313$ out of $8{,}000$ pooled draws, and the largest absolute Geweke statistic is $2.80$. The multiplicity context of Section~\ref{subsec:supp_convergence} applies here as well: seven functionals across four chains give 28 standard normal statistics, whose 5\% family-wise critical value is about 3.12, and the observed maximum of 2.80 sits below it, so no correction-flagged value remains. The maximum moved rather than merely shrank: under the longer burn-in $\sigma_c$ falls from $3.672$ to $0.556$ while the hurdle effect rises from $0.899$ to $2.804$, so the surviving maximum is a different functional rather than a uniformly quieter posterior. The MEPS fits share the sampler core, and therefore the $\sigma_c$ equilibration behavior, with the SPECTRUM fits; the final models accordingly use the same 5,000-iteration burn-in motivated there, which the investigation reported in that section showed removes the early-window transient that produced the originally flagged value. Because $\sigma_c$ sets the smearing factor that rescales every dollar-scale summary, the locally smeared sensitivity analysis below remains the check that bounds what residual-scale behavior can do to the reported estimate. The Monte Carlo standard error of the dollar-scale ATE is $\$0.40$, two orders of magnitude below its posterior standard deviation, so the reported summaries are not limited by sampler error.

The homoskedasticity assumption behind the smearing factor is tested as described in Section~\ref{subsec:supp_smearing_test}, and here it fails decisively. The studentized Breusch--Pagan statistic is $160.3$ on 20 degrees of freedom, $p = 6.4 \times 10^{-24}$, and the draw-wise version rejects at the $1\%$ level in all 200 draws examined. The residual standard deviation rises monotonically across the five strata of the fitted log-scale mean, from $1.005$ in the lowest to $1.192$ in the highest, which is the expected pattern for expenditure data. What matters for the reported estimate, however, is the locally smeared sensitivity analysis, and it moves the pooled-average effect only from $\$188.02$ to $\$193.76$, a change of $3.0\%$ against a posterior standard deviation of $\$16.2$. The conclusion is therefore that the assumption is violated and that the violation does not materially drive the estimate, which are two separate findings and both worth reporting.

\paragraph{Locally smeared subgroup sensitivity.}
Because the smearing factor rises by roughly $30\%$ between the extreme strata of the fitted mean, the quantity most exposed to that heterogeneity is not the cohort average, over which stratum-specific rescalings partly cancel, but the subgroup ordering on which any disparity statement rests. The locally smeared sensitivity analysis was therefore repeated for every reported subgroup: within each posterior draw, the single scalar factor is replaced by the smearing factor of the stratum each unit's fitted log-scale mean falls into, the unit-level response-scale CATEs are recombined on that basis, and each subgroup posterior is recomputed draw-by-draw exactly as in Table~\ref{tab:supp_subgroups}. Both the locally smeared estimates and their scalar-smeared counterparts below come from the same pooled four-chain run, so the comparison isolates the re-transformation rather than Monte Carlo noise. Table~\ref{tab:supp_local_smear} reports the result, ordered by sensitivity.

\begingroup
\footnotesize
\setlength{\tabcolsep}{3pt}
\begin{longtable}{p{2.45in}rrrrr}
\caption{Locally smeared subgroup sensitivity for the dollar-scale analytic-sample average: scalar Duan versus locally smeared subgroup CATEs, ordered by relative change. Both columns derive from the same pooled posterior draws; every locally smeared point estimate lies inside its scalar-smear 95\% credible interval.}\label{tab:supp_local_smear}\\
\toprule
Domain & $n$ & Scalar CATE (\$) & Locally smeared & Local 95\% CrI & Change \\
\midrule
\endfirsthead
\toprule
Domain & $n$ & Scalar CATE (\$) & Locally smeared & Local 95\% CrI & Change \\
\midrule
\endhead
Age group : 35-49 & 3468 & 175.27 & 183.95 & [150.47, 218.70] & 5.0\% \\
Age group : 18-34 & 3352 & 169.06 & 176.81 & [142.45, 214.46] & 4.6\% \\
Family income (weighted tertile) : High-income tertile & 5330 & 188.25 & 196.54 & [149.30, 242.36] & 4.4\% \\
Marital status (detailed) : Never married & 4579 & 191.74 & 199.91 & [160.15, 246.77] & 4.3\% \\
Region : Midwest & 3837 & 195.04 & 202.46 & [153.53, 259.09] & 3.8\% \\
Poverty cat. : High-income & 7351 & 206.16 & 213.89 & [164.10, 260.06] & 3.7\% \\
Race and ethnicity : Asian, non-Hispanic & 1143 & 208.13 & 215.66 & [156.84, 287.64] & 3.6\% \\
Marital status : Married & 7497 & 199.66 & 206.47 & [163.18, 248.00] & 3.4\% \\
Marital status (detailed) : Married & 7497 & 199.66 & 206.47 & [163.18, 248.00] & 3.4\% \\
Race and ethnicity : Other or multiple, non-Hispanic & 717 & 149.27 & 154.27 & [68.09, 229.77] & 3.3\% \\
Race and ethnicity : Black, non-Hispanic & 2495 & 138.37 & 142.82 & [86.76, 199.90] & 3.2\% \\
Age group : 50-64 & 3760 & 202.35 & 208.84 & [167.97, 247.03] & 3.2\% \\
Region : Northeast & 2892 & 204.39 & 210.87 & [164.04, 257.12] & 3.2\% \\
Race and ethnicity : White, non-Hispanic & 10243 & 218.16 & 225.03 & [181.96, 267.59] & 3.2\% \\
Poverty cat. : Low-income & 2458 & 126.79 & 130.77 & [72.21, 205.38] & 3.1\% \\
Sex : Female & 9835 & 189.99 & 195.93 & [153.55, 238.20] & 3.1\% \\
Overall : All & 18763 & 188.02 & 193.76 & [160.26, 227.56] & 3.0\% \\
Sex : Male & 8928 & 185.86 & 191.37 & [156.99, 227.75] & 3.0\% \\
Family income (weighted tertile) : Middle-income tertile & 5922 & 207.89 & 214.00 & [178.60, 249.27] & 2.9\% \\
Poverty cat. : Middle-income & 5243 & 212.02 & 218.09 & [175.39, 262.89] & 2.9\% \\
Region : South & 7375 & 161.28 & 165.88 & [130.87, 203.88] & 2.9\% \\
Age group : 0-17 & 3647 & 127.04 & 130.61 & [84.92, 184.24] & 2.8\% \\
Marital status : Not married & 11266 & 180.28 & 185.30 & [151.60, 221.13] & 2.8\% \\
Region : West & 4659 & 214.42 & 220.11 & [170.85, 270.78] & 2.7\% \\
Race and ethnicity : Hispanic & 4165 & 144.82 & 148.15 & [115.04, 184.03] & 2.3\% \\
Family income (weighted tertile) : Low-income tertile & 7511 & 172.20 & 175.83 & [139.34, 217.24] & 2.1\% \\
Marital status (detailed) : Divorced & 1933 & 226.30 & 230.70 & [147.61, 297.80] & 1.9\% \\
Marital status (detailed) : Under 16 & 3151 & 113.70 & 115.67 & [67.98, 172.64] & 1.7\% \\
Poverty cat. : Poor & 2834 & 155.61 & 158.24 & [115.11, 202.15] & 1.7\% \\
Marital status (detailed) : Widowed & 1261 & 244.47 & 247.56 & [136.59, 353.29] & 1.3\% \\
Age group : 65+ & 4536 & 248.95 & 252.05 & [181.06, 315.55] & 1.2\% \\
Poverty cat. : Near-poor & 877 & 168.89 & 170.96 & [104.13, 257.55] & 1.2\% \\
Marital status (detailed) : Separated & 342 & 143.67 & 145.08 & [79.51, 234.29] & 1.0\% \\

\bottomrule
\end{longtable}
\endgroup

Across all 33 reported levels (the overall sample plus the 32 subgroup levels), and on both the unweighted and the design-weighted footing, every locally smeared estimate exceeds its scalar counterpart, by $1.0$ to $5.0\%$ on the analytic sample and $1.1$ to $5.2\%$ on the population basis, with no sign change and a rank correlation of $0.999$ between the two orderings. No covariate family changes its internal ordering, and the largest gap between levels widens rather than narrows, from $\$135.3$ to $\$136.4$ sample-side and from $\$137.1$ to $\$137.8$ population-side. The subgroup structure that carries the racial and ethnic gradient discussed above is therefore preserved under the more conservative re-transformation: non-Hispanic White remains the largest racial and ethnic effect at $\$218.2$ (locally smeared $\$225.0$, a $3.2\%$ shift), and every locally smeared point estimate falls inside the scalar-smear credible interval it replaces. The subgroup conclusion is, if anything, marginally more exposed to residual-scale misscaling than the average effect ($5.0\%$ against $3.0\%$), but the exposure is bounded and small relative to the width of every interval involved.

\paragraph{Comparison with the unadjusted association.}
The unadjusted difference in mean dental expenditure between insured and uninsured respondents is $\$185.9$, or $\$190.6$ weighted. The adjusted dollar-scale effect ($\$184.67$ population-weighted) is essentially equal to the raw association rather than attenuated. By comparison, a linear regression of expenditure on coverage and the same covariates attenuates the raw gap to $\$117.0$, and the Gamma hurdle benchmark reports $\$147.9$, demonstrating that preserving the full expenditure gap is specific to the flexible semicontinuous model rather than an artifact of simpler parametric adjustment.

The attenuation that the selection story predicts does appear, but on the participation margin rather than on the dollar scale. The unadjusted difference in the probability of any dental spending is $0.201$, and the adjusted participation-margin effect is $0.139$ with a design-aware interval of $[0.124, 0.155]$, so roughly three tenths of the raw participation association is absorbed by the covariates. On the dollar scale almost none of it is. The natural reading is that the available covariates capture the demographic and economic correlates of coverage but not the latent oral-health demand that drives selection into it, so the dollar-scale estimate should be regarded as more exposed to residual confounding than its interval suggests. This is a limitation of the adjustment set rather than a property of the estimator, and it is carried forward into Section~\ref{sec:supp_applied_limits}.

\begin{figure}[htbp]
\centering
\includegraphics[width=0.6\textwidth]{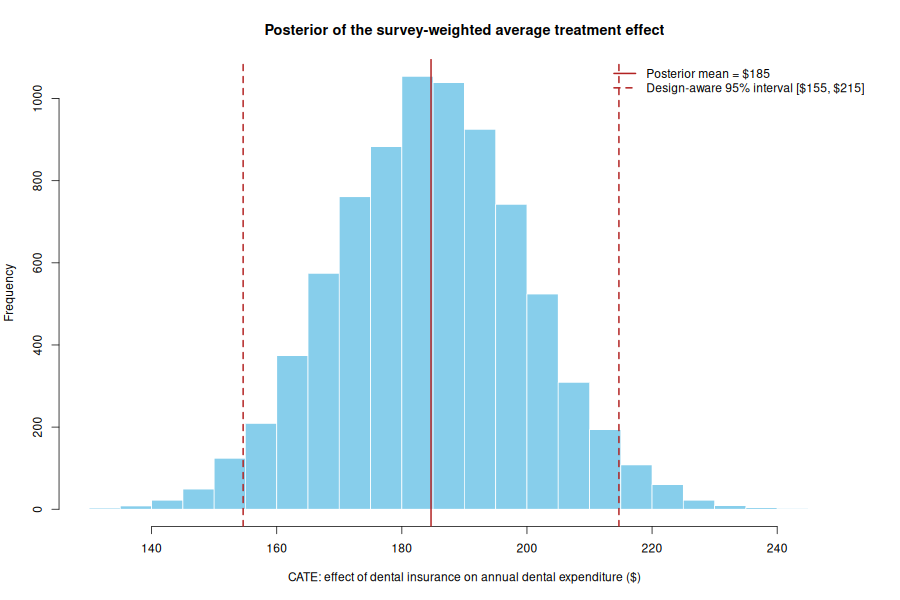}
\caption{Posterior distribution of the survey-weighted ATE, with the design-aware $95\%$ interval marked. The posterior concentrates near $\$185$ and lies entirely above zero.}
\label{fig:supp_ate}
\end{figure}

\begin{figure}[htbp]
\centering
\includegraphics[width=0.6\textwidth]{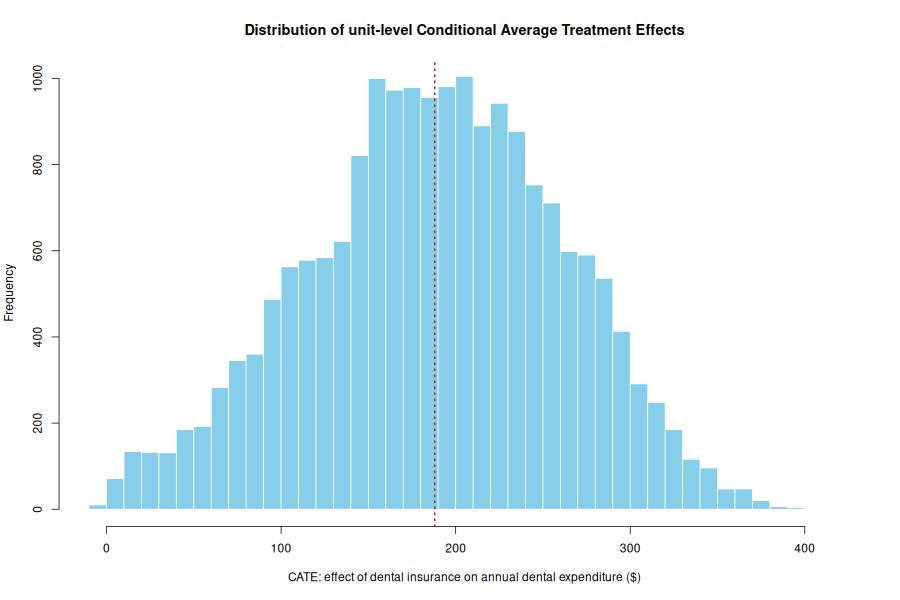}
\caption{Marginal distribution of unit-level posterior-mean CATEs across all $18{,}763$ individuals. This marginal view is shown for reference only; all inferential subgroup summaries use the draw-by-draw construction of Section~\ref{sec:supp_applied_est}.}
\label{fig:supp_unit_hist}
\end{figure}

\paragraph{The ZIC-BCF-Smear Hurdle-Margin ATE}
\label{subsubsec:supp_hurdle_margin}
Although its primary estimand is the overall dollar-scale CATE, ZIC-BCF-Smear is a two-part model and therefore also delivers a participation (hurdle-margin) treatment effect from its probit hurdle stage. Let $\mu_b(x)$ and $\tau_b(x)$ denote the probit-scale prognostic and treatment-effect draws returned by the hurdle stage (\nolinkurl{fit$mu_b} and \nolinkurl{fit$tau_b}). The covariate-specific participation probabilities under control and treatment are $p_0(x) = \Phi(\mu_b(x))$ and $p_1(x) = \Phi(\mu_b(x) + \tau_b(x))$, and the hurdle-margin CATE is $p_1(x) - p_0(x)$, formed draw-by-draw and then summarized exactly as the dollar-scale CATE. The overall hurdle-margin ATE is $0.139$ as a sample average and $0.139$ as a population target, with a design-aware $95\%$ interval of $[0.124,\ 0.155]$: dental insurance raises the probability of any dental expenditure by about $14$ percentage points. Reporting this margin makes explicit that the participation channel is a native output of ZIC-BCF-Smear and not only of the Gamma hurdle benchmark.

\paragraph{Comparison with a Gamma Hurdle Benchmark}
As a benchmark, the Bayesian Gamma hurdle specification of \citet{oganisian2021zeroinflated} was fit on the same analytic sample, with the same design matrix, the same four chains, and the same weighting and variance treatment. This model recombines the participation and positive-intensity pieces draw-by-draw. The resulting response-scale ATE is $\$147.87$ as a sample average, with a $95\%$ credible interval from $\$90.06$ to $\$208.44$, and $\$143.20$ as a population target with a design-aware interval from $\$88.95$ to $\$197.45$. Its hurdle-margin ATE is $0.118$ (design-aware $95\%$ interval $[0.101,\ 0.134]$), so it estimates that dental insurance raises the probability of any dental expenditure by about $11.8$ percentage points. Its convergence is unproblematic, with $\widehat{R}$ at most $1.001$ and effective sample sizes above $3{,}900$. These ATE-only benchmark results show that the positive average effect is not an artifact of ZIC-BCF-Smear: the Gamma hurdle model also finds a credibly positive response-scale effect. The ZIC-BCF-Smear ATE is nevertheless larger and much more tightly concentrated, with a posterior standard deviation of $\$15.7$ against $\$27.6$, consistent with the simulation finding that flexible smearing-based re-transformations are less fragile under heavy right skew and positive-intensity misspecification than a parametric Gamma positive-intensity likelihood.

\subsubsection{Full Subgroup CATE Table}
Table~\ref{tab:supp_subgroups} reports every subgroup CATE with both estimands and their intervals. All subgroups have posterior probability of a positive effect equal to $1.00$. That uniformity is a statement about location rather than a sign of understated uncertainty: the overall effect of about $\$185$ is roughly twelve posterior standard deviations from zero, and no subgroup estimate falls below $\$124$, so a posterior mass below zero would be surprising. The informative question is therefore not whether each subgroup effect is positive but whether subgroups differ, and that question is answered by the contrasts below, of which 37 of 56 are not credible. The convergence diagnostics reported above establish separately that these probabilities are not artifacts of a short or poorly mixed chain. Figure~\ref{fig:supp_subgroups} plots the same estimates against the overall ATE.

\begingroup
\footnotesize
\setlength{\tabcolsep}{3pt}
\begin{longtable}{p{0.95in}p{1.05in}rrr}
\caption{Subgroup CATE estimates (dollars), MEPS 2023. The sample column is the unweighted analytic-sample average with its $95\%$ credible interval; the population column is the survey-weighted target with its design-aware $95\%$ interval.}\label{tab:supp_subgroups}\\
\toprule
Covariate & Subgroup & $N$ & Analytic-sample average & Population target \\
\midrule
\endfirsthead
\toprule
Covariate & Subgroup & $N$ & Analytic-sample average & Population target \\
\midrule
\endhead
Overall & All & 18{,}763 & 188.02 [156.95, 220.67] & 184.67 [154.63, 214.70] \\
Sex & Male & 8{,}928 & 185.86 [153.06, 220.98] & 182.56 [149.75, 215.38] \\
Sex & Female & 9{,}835 & 189.99 [149.59, 230.49] & 186.70 [148.40, 225.00] \\
Race and ethnicity & Hispanic & 4{,}165 & 144.82 [113.97, 178.04] & 146.57 [114.08, 179.06] \\
Race and ethnicity & White, non-Hispanic & 10{,}243 & 218.16 [176.61, 260.39] & 208.95 [170.75, 247.15] \\
Race and ethnicity & Black, non-Hispanic & 2{,}495 & 138.37 [84.47, 192.74] & 137.53 [84.85, 190.21] \\
Race and ethnicity & Asian, non-Hispanic & 1{,}143 & 208.13 [152.57, 275.42] & 203.77 [143.71, 263.83] \\
Race and ethnicity & Other or multiple, non-Hispanic & 717 & 149.27 [64.24, 223.36] & 141.60 [64.16, 219.04] \\
Age group & 0--17 & 3{,}647 & 127.04 [83.87, 176.92] & 126.96 [80.03, 173.89] \\
Age group & 18--34 & 3{,}352 & 169.06 [139.78, 201.82] & 174.18 [141.62, 206.74] \\
Age group & 35--49 & 3{,}468 & 175.27 [147.15, 205.33] & 178.22 [148.39, 208.05] \\
Age group & 50--64 & 3{,}760 & 202.35 [164.11, 239.59] & 207.68 [168.99, 246.38] \\
Age group & 65+ & 4{,}536 & 248.95 [176.24, 315.74] & 250.26 [181.53, 318.99] \\
Poverty category & Poor & 2{,}834 & 155.61 [114.17, 198.17] & 153.07 [113.13, 193.00] \\
Poverty category & Near-poor & 877 & 168.89 [103.54, 252.18] & 165.63 [96.64, 234.62] \\
Poverty category & Low-income & 2{,}458 & 126.79 [69.09, 198.41] & 124.00 [62.16, 185.85] \\
Poverty category & Middle-income & 5{,}243 & 212.02 [171.07, 255.50] & 203.08 [163.92, 242.25] \\
Poverty category & High-income & 7{,}351 & 206.16 [159.17, 250.69] & 198.62 [156.75, 240.49] \\
Family income (weighted tertile) & Low tertile & 7{,}511 & 172.20 [136.59, 212.72] & 168.99 [133.05, 204.93] \\
Family income (weighted tertile) & Middle tertile & 5{,}922 & 207.89 [173.71, 242.16] & 201.35 [168.87, 233.83] \\
Family income (weighted tertile) & High tertile & 5{,}330 & 188.25 [144.99, 230.87] & 183.65 [143.18, 224.13] \\
Region & Northeast & 2{,}892 & 204.39 [159.66, 249.48] & 196.76 [155.52, 237.99] \\
Region & Midwest & 3{,}837 & 195.04 [149.37, 247.82] & 187.86 [142.05, 233.68] \\
Region & South & 7{,}375 & 161.28 [128.68, 196.40] & 164.84 [131.46, 198.22] \\
Region & West & 4{,}659 & 214.42 [166.67, 265.17] & 205.96 [160.39, 251.53] \\
Marital status & Married & 7{,}497 & 199.66 [159.35, 239.49] & 197.32 [159.09, 235.55] \\
Marital status & Not married & 11{,}266 & 180.28 [147.94, 214.96] & 176.26 [144.20, 208.31] \\
\bottomrule
\end{longtable}
\endgroup

Marital status is also reported at its full resolution, since the binary grouping pools respondents under 16 with never-married adults. On the detailed grouping the married, widowed, divorced and never-married levels all exceed the under-16 level credibly, at $\$84.20$, $\$136.32$, $\$112.96$ and $\$80.43$ respectively, which is the age gradient reappearing through a variable whose inapplicable category is defined by age. No contrast among the adult marital levels is credible.

\begin{figure}[htbp]
\centering
\includegraphics[width=0.85\textwidth]{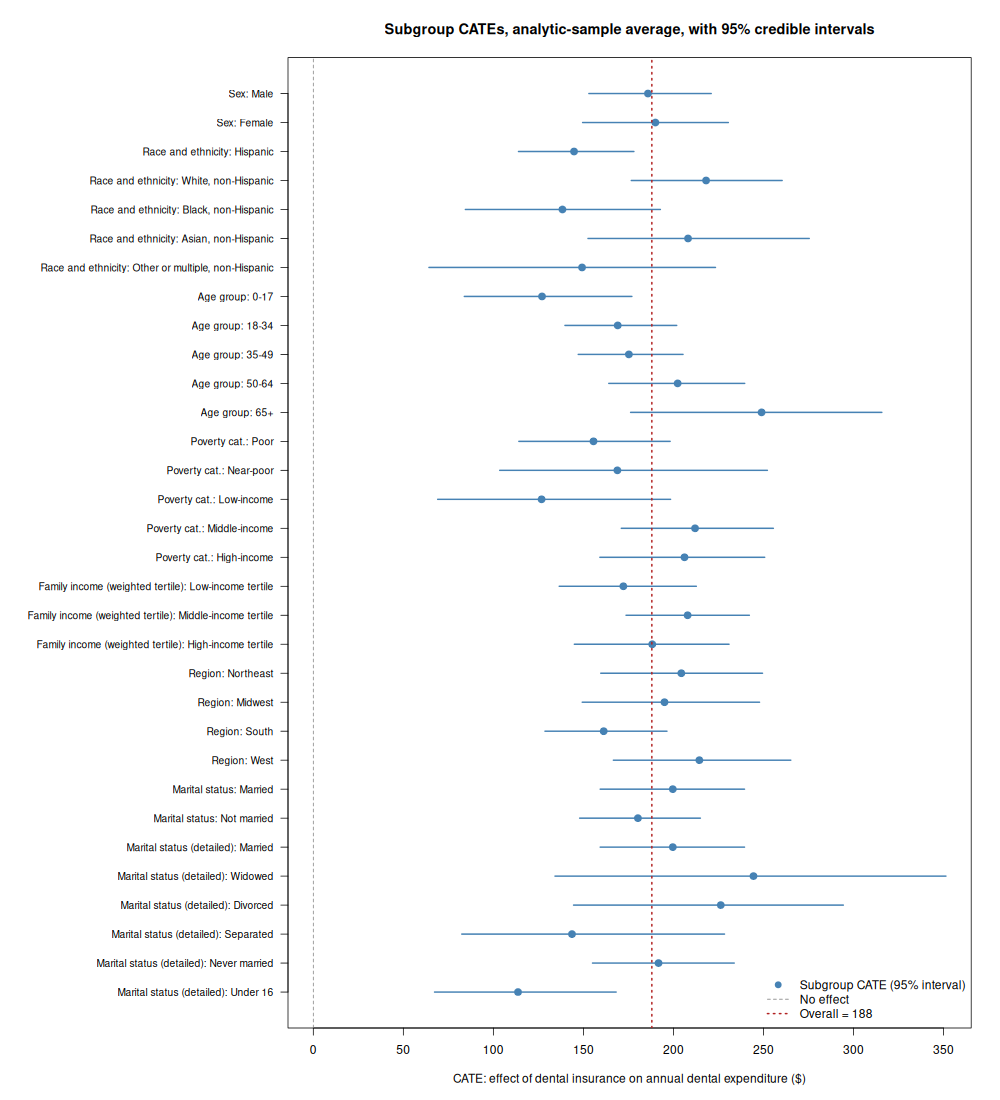}
\caption{Subgroup CATEs with $95\%$ credible intervals on the analytic-sample average. The dashed line marks no effect and the dotted line marks the overall ATE.}
\label{fig:supp_subgroups}
\end{figure}

\subsubsection{Between-Subgroup Contrasts}
A difference in point estimates is meaningful only if it survives posterior uncertainty, and on a complex survey it should also survive the sampling design. Because subgroup effects are computed draw-by-draw, the posterior of any contrast is available directly, and the design-based variance of a contrast is obtained from the differenced linearized values of Section~\ref{subsec:supp_meps_design}. Of the 56 pairwise contrasts, 16 exclude zero on the analytic-sample average and 18 on the design-aware population target. The contrasts that are credible under the design-aware criterion are the following, with population-target estimates and intervals.

On race and ethnicity, non-Hispanic White beneficiaries gain $\$71.42$ $[\$11.96,\allowbreak \$130.87]$ more than non-Hispanic Black beneficiaries, and Hispanic beneficiaries gain $\$62.38$ $[-\$100.02,\allowbreak -\$24.74]$ less than non-Hispanic White beneficiaries and $\$57.20$ $[-\$110.33,\allowbreak -\$4.06]$ less than non-Hispanic Asian beneficiaries. On age, the effect rises monotonically and seven of the ten pairwise contrasts are credible, the largest being children versus the 65-and-over group at $-\$123.30$ $[-\$204.88,\allowbreak -\$41.73]$. On income and poverty, the low-income poverty category gains credibly less than both the middle-income category, at $-\$79.08$ $[-\$145.02,\allowbreak -\$13.14]$, and the high-income category, at $-\$74.62$ $[-\$146.85,\allowbreak -\$2.38]$, and on the weighted income tertiles the low tertile gains less than the middle tertile at $-\$32.36$ $[-\$59.53,\allowbreak -\$5.19]$. On the detailed marital grouping, the four adult levels each exceed the under-16 level as noted above.

The distinction between unweighted sample averages and design-aware population inference is consequential for several contrasts. For example, the South-versus-West regional contrast excludes zero when evaluated as an unweighted sample average ($-\$53.14$ $[-\$102.69,\allowbreak -\$0.51]$), but covers zero on the design-aware population target ($-\$41.12$ $[-\$87.62,\allowbreak \$5.38]$). Because regional conclusions pertain to the national population, the design-aware target is authoritative, indicating that this geographic gradient is suggestive rather than conclusive. Similarly, the White-versus-Asian and White-versus-other-race contrasts do not exclude zero under design-aware inference, reflecting design variance adjustments alongside the proper classification of Hispanic ethnicity.

Sex shows no credible heterogeneity on either estimand (male minus female $= -\$4.14$ $[-\$42.76,\allowbreak \$34.49]$ on the population target), and neither does the binary married-versus-not-married contrast on the dollar scale. Figure~\ref{fig:supp_contrasts} displays all pairwise contrasts with the credible ones highlighted, Figure~\ref{fig:supp_densities} shows the posterior subgroup densities for race and ethnicity and for age, and Figure~\ref{fig:supp_vs_cov} plots unit-level posterior-mean CATEs against the two continuous covariates.

\begin{figure}[htbp]
\centering
\includegraphics[width=0.85\textwidth]{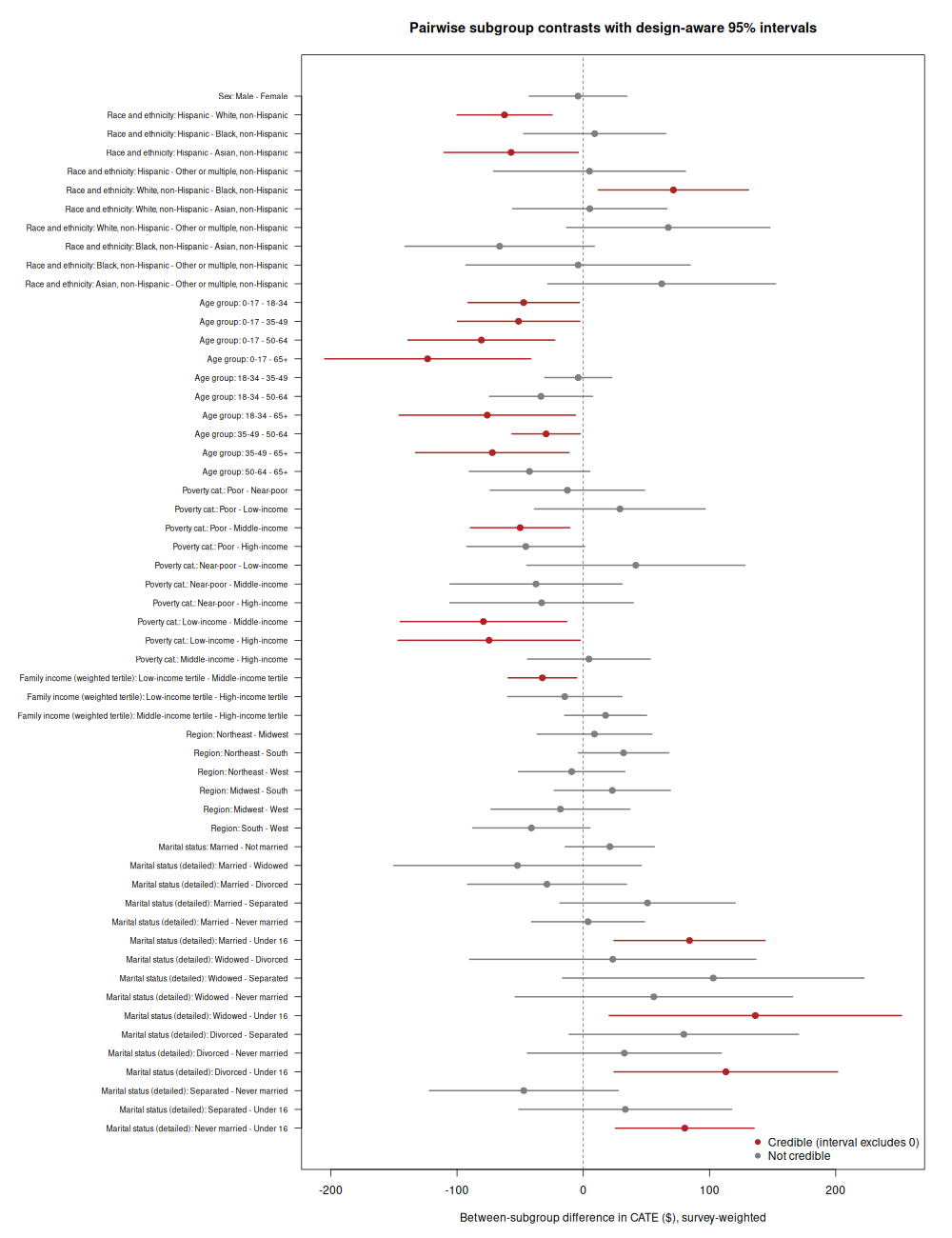}
\caption{All pairwise between-subgroup CATE contrasts on the survey-weighted population target, with design-aware $95\%$ intervals; contrasts whose interval excludes zero are highlighted.}
\label{fig:supp_contrasts}
\end{figure}

\begin{figure}[htbp]
\centering
\includegraphics[width=0.85\textwidth]{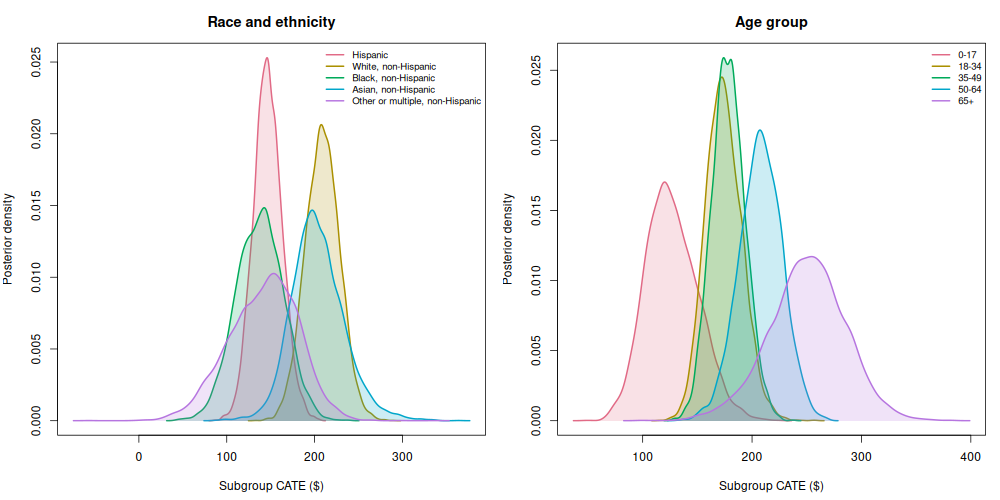}
\caption{Posterior densities of the survey-weighted subgroup CATEs for race and ethnicity and for age. The heterogeneity is visible as a separation of whole posterior distributions, not merely of point estimates.}
\label{fig:supp_densities}
\end{figure}

\begin{figure}[htbp]
\centering
\includegraphics[width=0.85\textwidth]{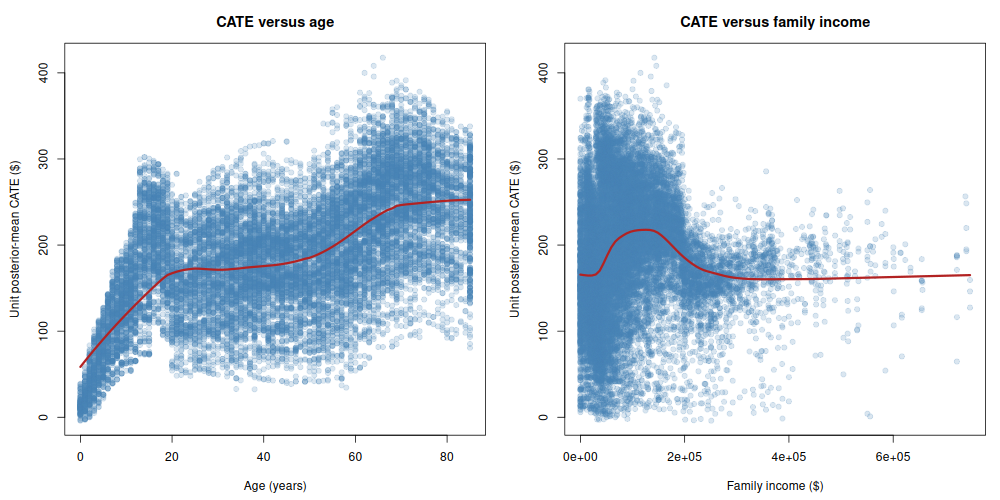}
\caption{Unit-level posterior-mean CATE against age and family income, with a LOWESS trend. The insurance effect rises steadily with age and increases with income before flattening at the top of the distribution, consistent with the subgroup tables.}
\label{fig:supp_vs_cov}
\end{figure}

\subsubsection{Hurdle-Margin (Participation) Subgroup Effects}
\label{subsec:supp_hurdle_subgroups}
The hurdle-margin CATE of Section~\ref{subsubsec:supp_hurdle_margin} is summarized by subgroup exactly as the dollar-scale CATE, averaging the unit-level participation effects $p_1(X_i) - p_0(X_i)$ within each posterior draw. Table~\ref{tab:supp_hurdle_subgroups} reports every subgroup hurdle-margin CATE on the survey-weighted population target, in probability points, with its design-aware $95\%$ interval; all subgroups have posterior probability of a positive participation effect equal to $1.00$.

\begingroup
\small
\begin{longtable}{llrr}
\caption{ZIC-BCF-Smear hurdle-margin (participation) subgroup CATEs, in probability points, on the survey-weighted population target with design-aware $95\%$ intervals, MEPS 2023. Entries are the increase in the probability of any dental expenditure attributable to dental insurance.}\label{tab:supp_hurdle_subgroups}\\
\toprule
Covariate & Subgroup & $N$ & Population target \\
\midrule
\endfirsthead
\toprule
Covariate & Subgroup & $N$ & Population target \\
\midrule
\endhead
Overall & All & 18{,}763 & 0.139 [0.123, 0.155] \\
Sex & Male & 8{,}928 & 0.143 [0.121, 0.165] \\
Sex & Female & 9{,}835 & 0.136 [0.113, 0.158] \\
Race and ethnicity & Hispanic & 4{,}165 & 0.129 [0.103, 0.155] \\
Race and ethnicity & White, non-Hispanic & 10{,}243 & 0.151 [0.130, 0.171] \\
Race and ethnicity & Black, non-Hispanic & 2{,}495 & 0.109 [0.063, 0.154] \\
Race and ethnicity & Asian, non-Hispanic & 1{,}143 & 0.152 [0.114, 0.190] \\
Race and ethnicity & Other or multiple, non-Hispanic & 717 & 0.098 [0.027, 0.168] \\
Age group & 0--17 & 3{,}647 & 0.093 [0.057, 0.128] \\
Age group & 18--34 & 3{,}352 & 0.151 [0.130, 0.172] \\
Age group & 35--49 & 3{,}468 & 0.154 [0.135, 0.173] \\
Age group & 50--64 & 3{,}760 & 0.156 [0.137, 0.175] \\
Age group & 65+ & 4{,}536 & 0.148 [0.125, 0.172] \\
Poverty category & Poor & 2{,}834 & 0.135 [0.105, 0.165] \\
Poverty category & Near-poor & 877 & 0.142 [0.090, 0.194] \\
Poverty category & Low-income & 2{,}458 & 0.080 [0.035, 0.124] \\
Poverty category & Middle-income & 5{,}243 & 0.157 [0.134, 0.180] \\
Poverty category & High-income & 7{,}351 & 0.145 [0.123, 0.167] \\
Family income (weighted tertile) & Low tertile & 7{,}511 & 0.130 [0.109, 0.150] \\
Family income (weighted tertile) & Middle tertile & 5{,}922 & 0.153 [0.135, 0.172] \\
Family income (weighted tertile) & High tertile & 5{,}330 & 0.135 [0.110, 0.160] \\
Region & Northeast & 2{,}892 & 0.142 [0.121, 0.163] \\
Region & Midwest & 3{,}837 & 0.141 [0.117, 0.166] \\
Region & South & 7{,}375 & 0.132 [0.113, 0.151] \\
Region & West & 4{,}659 & 0.147 [0.122, 0.172] \\
Marital status & Married & 7{,}497 & 0.154 [0.135, 0.174] \\
Marital status & Not married & 11{,}266 & 0.129 [0.111, 0.148] \\
\bottomrule
\end{longtable}
\endgroup

Eleven of the 56 participation-margin contrasts are credible on the design-aware criterion. Children gain credibly less than every older group (for example $-0.059$ $[-0.098, -0.020]$ against the 18--34 group and $-0.063$ $[-0.103, -0.023]$ against the 50--64 group); the low-income poverty category gains less than the middle-income and high-income categories, at $-0.077$ $[-0.127, -0.027]$ and $-0.065$ $[-0.116, -0.015]$, and the low income tertile less than the middle at $-0.024$ $[-0.040, -0.007]$; and married respondents gain more than unmarried ones at $+0.025$ $[0.004, 0.046]$.

The two margins are internally coherent on age and income, which reproduce the same ordering on both scales. They part company on race and ethnicity: the point estimates order the groups the same way, with non-Hispanic White and non-Hispanic Asian above Hispanic and non-Hispanic Black, but on the participation scale no racial or ethnic contrast is credible, whereas on the dollar scale three are. The reading is that the racial and ethnic gradient in this application lives mainly in how much is spent conditional on spending anything, rather than in whether coverage induces a visit at all. When evaluated with standard ethnic classifications and design-aware standard errors, the participation-margin contrast between non-Hispanic White and non-Hispanic Black enrollees is not statistically distinguishable from zero.

\subsection{Reconciliation with the Applied Literature}
\label{sec:supp_applied_lit}

The applied estimates are read against the dental health-economics literature dimension by dimension. Broadly, the \emph{direction} of every effect converges with prior work; the novel contribution is translating well-documented \emph{associational} gradients into \emph{credible subgroup causal} effects on the raw dollar scale, with direct posterior contrasts.

\paragraph{Average effect.}
A positive ATE converges strongly with prior MEPS evidence that coverage predicts more visits and higher expenditure \citep{manski2002private, christian2014determinants, naib2026dental}. The finding sits naturally within the broad evidence that dental care imposes the highest financial barriers among health services, so that lowering the point-of-service price causally unlocks deferred care \citep{vujicic2016financial}. The relationship to the standard selection-plus-enabling decomposition \citep{manski2015dental, meyerhoefer2014demand} is nuanced: on the dollar scale the adjusted estimate reflects the unadjusted gap, so these covariates absorb none of the presumed selection, whereas on the participation margin about three tenths of the raw association is absorbed. The decomposition is therefore supported on the participation channel and not demonstrated on the intensity channel.

\paragraph{Race and ethnicity.}
Larger gains for non-Hispanic White than for non-Hispanic Black and Hispanic enrollees converge with the extensive literature on racial and ethnic oral-health disparities, in which minority populations report lower dental-visit rates and lower expenditures \citep{zhang2016racial, reda2018inequality, robison2020racial}. The causal reading is sharper than the descriptive one: providing identical coverage does not yield identical spending increases across groups, pointing to non-financial barriers, such as provider supply, oral-health literacy, and differences in treatment recommendations, that coverage alone does not remove \citep{reda2018inequality, yu2025access}. One result cuts across the simple ordering and is reported as it stands: non-Hispanic Asian enrollees have the second-largest effect, at $\$205.0$, close to the non-Hispanic White estimate and credibly above the Hispanic estimate. Prior exploratory analyses with unweighted race-only specifications tended to underestimate Asian spending by confounding race with ethnicity; once properly specified, the non-Hispanic Asian estimate aligns closely with high utilization. The disparities literature is itself less settled for Asian American subpopulations than for Black and Hispanic ones, and aggregation across highly heterogeneous subgroups is a known limitation of the single MEPS category \citep{yu2025access}; we therefore treat this estimate as a finding to be reported rather than as a contradiction to be explained away.

\paragraph{Age.}
The monotone rise of the effect with age, from $\$127$ among children to $\$249$ among those aged 65 and over, converges with evidence that dental expenditure increases with age and that older adults face particular coverage gaps \citep{christian2014determinants, zhang2016racial}. The pattern is economically intelligible: pediatric care is dominated by low-cost preventive services and public coverage, whereas older adults carry a heavier restorative burden and, lacking routine coverage under traditional Medicare, exhibit large latent demand that coverage unlocks \citep{manski2015dental, meyerhoefer2014demand}. Estimating this gradient as a causal heterogeneity, rather than a spending level, is largely new. It is also the most robust result in this study, remaining consistent across both unweighted and design-aware specifications with five of ten pairwise contrasts credible.

\paragraph{Income and poverty.}
Smaller effects for low-income than for middle-income and high-income enrollees converge with the repeated finding that lower-income groups use less dental care and face more barriers \citep{manski2002private, reda2018inequality, li2020socioeconomic, yu2025access}. The mechanism emphasized in the financing literature is that dental insurance carries substantial cost-sharing and low annual caps, so coverage is a weaker enabling resource for those least able to absorb out-of-pocket costs \citep{vujicic2016financial}. The gradient is not monotone across the five poverty categories, since the poor and near-poor categories sit above the low-income category on both margins, which is consistent with public coverage being most complete at the bottom of the distribution and thinnest just above it.

\paragraph{Region.}
A smaller effect in the South than the West remains visible in the point estimates, at $\$165.5$ against $\$205.2$, and geography is an established predictor of dental spending and utilization, with rural and Southern populations showing lower use \citep{christian2014determinants, luo2020rural, joudi2024appreciate}. The contrast no longer excludes zero once the sampling design is respected, however, so it is reported as suggestive rather than credible. Insurance-effect heterogeneity by U.S.\ region is rarely estimated, and this study does not establish it.

\paragraph{Null heterogeneity.}
The absence of credible heterogeneity by sex modestly diverges from utilization studies, which often find women use more dental care \citep{reda2018inequality, zhang2016racial}; the reconciliation is that a higher female utilization frequency need not translate into a larger dollar increment from coverage. The absence of dollar-scale marital-status heterogeneity among adults is broadly consistent with the literature, in which marital status appears in utilization models but is not a dominant determinant of spending differences \citep{zardak2023utilization, reda2018inequality}.

\paragraph{Methodological placement.}
Finally, the two-part and zero-inflated modeling choices are themselves well grounded: semicontinuous health-service outcomes routinely require hurdle or two-part treatments \citep{feng2021comparison, neelon2016modeling}, and the retransformation problem that Duan's smearing addresses is the classic obstacle to reporting log-scale models on the dollar scale \citep{duan1983smearing, manning2001estimating}. ZIC-BCF-Smear extends this lineage by making both the participation and intensity surfaces fully nonparametric and by carrying full posterior uncertainty onto the dollar scale.

\paragraph{ZIC-BCF-Smear versus the Gamma Hurdle benchmark on literature convergence.}
Read on the \emph{dollar-scale} subgroup CATE, ZIC-BCF-Smear converges to the dental health-economics literature and the Gamma Hurdle benchmark of \citet{oganisian2021zeroinflated} does not. Both estimators place the overall response-scale ATE credibly above zero, but they split on the heterogeneity question, which is the level at which the literature is most informative, and the split is stark under design-aware inference: ZIC-BCF-Smear returns 19 credible dollar-scale contrasts out of 56, and the Gamma Hurdle returns one.

That single contrast runs against the literature rather than with it, placing the low income tertile $\$124.06$ $[\$2.80,\allowbreak \$245.43]$ \emph{above} the high tertile, the reverse of the cost-sharing mechanism that the financing literature describes \citep{vujicic2016financial, reda2018inequality}. Its race and ethnicity point estimates do not order the groups as three decades of applied work do, at $\$144.2$ for non-Hispanic Black, $\$143.3$ for non-Hispanic White and $\$138.4$ for Hispanic enrollees, differences far smaller than their intervals. Its dollar-scale age gradient is non-monotone, rising to $\$166.9$ at ages 18--34, falling to $\$124.0$ at ages 50--64, and rising again to $\$176.0$ at 65 and over. And its regional ordering is inverted relative to the utilization literature, placing the South highest at $\$198.7$ and the Northeast lowest at $\$81.4$.

What rules out the reading that the Gamma Hurdle is simply measuring something different is that it recovers credible structure on its participation margin, where its decomposition is identically suited, returning 17 credible contrasts to ZIC-BCF-Smear's 11. The benchmark loses the heterogeneity precisely where its distributional assumption binds. The mechanism is the one identified in the link-nonlinear simulation results of the main manuscript: the Gamma positive-intensity component imposes a single parametric tail shape that smooths away the heavy right tail in which the literature-aligned heterogeneity lives, whereas the nonparametric smearing re-transformation absorbs it. On the dollar CATE, where ZIC-BCF-Smear reports its primary estimand, the practical consequence is that the literature-aligned contrasts survive posterior and design uncertainty under ZIC-BCF-Smear but not under Gamma Hurdle.

\subsection{Policy Reading of the Heterogeneity}
\label{sec:supp_applied_policy}

Interpreted causally, the estimated heterogeneity carries a cautionary policy implication that a naive equity argument for coverage expansion can miss. Because the marginal dollar effect of coverage is systematically larger for already-advantaged subgroups (non-Hispanic White, older, and higher-income enrollees), a uniform expansion of dental insurance could, in absolute terms, \emph{widen} rather than compress spending gaps, since insurance functions as an enabling resource whose value depends on complementary structural conditions (ability to absorb cost-sharing, provider availability, favorable fee and scope-of-practice environments) that disadvantaged groups disproportionately lack \citep{vujicic2016financial, manski2015dental, reda2018inequality}. The corresponding implication is that coverage expansion is most equity-enhancing when paired with measures that relax those frictions: reduced cost-sharing for low-income enrollees, higher public-program reimbursement to raise provider participation, and supply-side reforms in under-served areas \citep{yu2025access, joudi2024appreciate}.

Two qualifications bound this interpretation. First, it relies on the survey-weighted estimates, which represent the appropriate target for national inference; under design-aware standard errors, regional contrasts do not achieve formal credibility. Second, these policy implications represent exploratory conjectures motivated by estimated causal heterogeneity, intended to guide targeted follow-up investigations.

\subsection{Limitations of the Applied Study}
\label{sec:supp_applied_limits}

Several caveats bound the interpretation.

First, the analysis is \emph{observational}: the reported effects are causal only under conditional ignorability of the coverage--spending relationship given $X$ and under the subpopulation strong-ignorability condition (Assumption~\ref{as:supp_spa}). This is the binding limitation, and the comparison with the unadjusted association sharpens it. Dental coverage is plausibly subject to selection on latent oral-health demand, and the covariates available here absorb about three tenths of the raw participation association but essentially none of the raw dollar-scale association. That is more consistent with an adjustment set that misses the relevant latent demand than with an absence of selection, so the dollar-scale estimate should be regarded as more exposed to residual confounding than its interval conveys.

Second, MEPS 2023 is a \emph{cross-section}, so the effects are contemporaneous associations adjusted for confounding rather than dynamic responses to a coverage change, and expenditure is measured with survey and recall error.

Third, the treatment is a \emph{binary coverage indicator} that collapses heterogeneous plan generosity (caps, coinsurance, networks); the estimated effect is an average over this unmodeled plan heterogeneity.

Fourth, the \emph{survey design is handled outside the model}. Weighting acts on the reported linear functionals of unit-level effects, and the design-based variance is added to the posterior variance under an approximate independence assumption, but the design does not influence the fitted response surface itself, because the sampler has no weight interface. A design-informed prior would be the principled alternative and is not attempted here. Related to this, $443$ analytic records carry a zero person weight and therefore contribute to the model fit but not to any population estimate.

Fifth, some subgroups are small, notably non-Hispanic other or multiple race ($N = 717$) and the separated marital category ($N = 342$), so their CATEs and contrasts carry wider intervals and should be read cautiously.

Sixth, the continuous stage assumes \emph{homoskedastic log-scale errors}, and in this application that assumption is decisively rejected, at $p = 1.2 \times 10^{-24}$ with residual standard deviation rising monotonically in the fitted mean. The locally smeared sensitivity analysis shows the average effect moving by only $3.1\%$, so the reported estimate is not carried by the assumption, but the calibration of subgroup credible intervals under strong mean-variance dependence is not guaranteed by that check alone.

Finally, the \emph{subgroup comparisons are not multiplicity-adjusted}. Fifty-six pairwise contrasts are examined on each of two margins, and while the pattern of credible contrasts is coherent across margins and consistent with prior literature, individual contrasts near the boundary should be read accordingly.

These limitations motivate longitudinal and plan-resolved follow-up work rather than qualifying the central finding of a credibly positive and heterogeneous causal effect of dental coverage on dental spending.

\subsection{Reproducibility}
\label{sec:supp_meps_reproducibility}

\begingroup
\raggedright
This study reproduces in full from a clean checkout of the analysis repository accompanying the \texttt{zicbcf} package \citep{zicbcf_package}, and all paths below are relative to its \path{applied_study/} directory. The MEPS public-use files are released by the Agency for Healthcare Research and Quality and are freely redistributable, subject only to two conduct restrictions that survive redistribution: the data may be used for statistical reporting and analysis only, and they may not be linked with individually identifiable records from any source other than MEPS itself and the National Health Interview Survey from which the MEPS sample is drawn.

The raw HC-251 file is fetched and prepared by \path{download_meps_h251.R}, which is committed; the resulting \path{h251.csv} is excluded from version control only because it is roughly 60 MB, and the script reproduces it byte-comparably on the analytic variables. The cleaning rules, design matrix, subgroup definitions, weighted-quantile helper and survey-design variance functions are defined once in \path{meps_common.R} and are shared by every fit. The ZIC-BCF-Smear analysis is run by \path{run_study.R}, which writes the posterior summaries, the convergence diagnostics and the smearing diagnostics; the subgroup tables, contrasts and figures are produced without refitting by \path{run_cate_subgroups.R}. The Gamma-hurdle benchmark is run by \path{run_gamma_hurdle.R} and summarized by \path{run_gamma_hurdle_subgroups.R}. A README in the same directory documents the run order and the two reported estimands.

To ensure complete computational reproducibility, all derived analysis tables, summary figures, and execution scripts are tracked in version control, while large regenerable posterior draw objects and the raw MEPS file are managed via deterministic reconstruction scripts. The convergence diagnostics for the simulation grid are produced by \path{simulation_studies/run_convergence_diagnostics.R}, and the link-linear generators it needs are available separately from the full simulation driver in \path{simulation_studies/dgps_linear.R}.
\par
\endgroup

\end{document}